\documentclass{article}

\usepackage[preprint]{neurips_2026}

\usepackage[utf8]{inputenc} 
\usepackage[T1]{fontenc}    
\usepackage{hyperref}       
\usepackage{url}            
\usepackage{booktabs}       
\usepackage{amsfonts}       
\usepackage{nicefrac}       
\usepackage{microtype}      
\usepackage{xcolor}         

\usepackage{tabularx}
\usepackage{array}

\usepackage{amsmath,amssymb,amsthm,mathtools}
\usepackage{bm}
\usepackage{multirow}
\usepackage{algorithm}
\usepackage{algpseudocode}
\usepackage{caption}
\usepackage[nameinlink,noabbrev]{cleveref}

\newtheorem{theorem}{Theorem}
\newtheorem{lemma}{Lemma}
\newtheorem{proposition}{Proposition}
\newtheorem{corollary}{Corollary}
\newtheorem{definition}{Definition}
\newtheorem{assumption}{Assumption}
\theoremstyle{remark}\newtheorem{remark}{Remark}

\crefname{assumption}{Assumption}{Assumptions}
\Crefname{assumption}{Assumption}{Assumptions}
\crefname{definition}{Definition}{Definitions}
\Crefname{definition}{Definition}{Definitions}
\crefname{lemma}{Lemma}{Lemmas}
\Crefname{lemma}{Lemma}{Lemmas}
\crefname{proposition}{Proposition}{Propositions}
\Crefname{proposition}{Proposition}{Propositions}
\crefname{theorem}{Theorem}{Theorems}
\Crefname{theorem}{Theorem}{Theorems}
\crefname{corollary}{Corollary}{Corollaries}
\Crefname{corollary}{Corollary}{Corollaries}
\crefname{algorithm}{Algorithm}{Algorithms}
\Crefname{algorithm}{Algorithm}{Algorithms}

\newcommand{\E}{\mathbb{E}}
\newcommand{\Prob}{\mathbb{P}}
\newcommand{\PL}{\mathrm{PL}}
\newcommand{\TV}{\mathrm{TV}}
\newcommand{\rob}{\mathrm{rob}}
\newcommand{\nom}{\mathrm{nom}}
\DeclareMathOperator*{\argmin}{arg\,min}
\DeclareMathOperator*{\argmax}{arg\,max}
\DeclareMathOperator{\prox}{prox}
\DeclareMathOperator{\dist}{dist}
\DeclareMathOperator{\conv}{conv}
\newcommand{\Sk}{S_K}

\title{Distributionally Robust Listwise Preference Optimization}

\author{%
  Xudong Wu \\
  The University of Hong Kong \\
  Hong Kong SAR 
  \And
  Jian Qian \\
  The University of Hong Kong \\
  Hong Kong SAR
  \And
  Pangpang Liu \\
  Yale University \\
  New Haven, CT, USA
  \And
  Vaneet Aggarwal \\
  Purdue University \\
  West Lafayette, IN, USA
  \And
  Jiayu Chen \\
  The University of Hong Kong \\
  Hong Kong SAR
}

\begin{document}

\maketitle

\begin{abstract}
Existing robust preference optimization for language-model alignment mainly studies pairwise
supervision and places robustness at the dataset, prompt, or preference-pair level. We instead study
listwise preference optimization under ranking-label uncertainty: given a prompt and a candidate list, the observed ranking over that list may be ambiguous due to annotator inconsistency,
near-ties, lossy rankwise feedback, or reward-model noise. We propose a pointwise
total-variation robust Plackett--Luce objective that directly robustifies the ranking label conditional
on the candidate list. The robust loss admits an exact decomposition into the nominal PL loss plus a
worst-case PL correction, and the worst-case ranking is obtained by sorting current implicit scores
in ascending order, reducing the inner maximization from $K!$ enumeration to $O(K\log K)$. This tractable structure yields strong offline and online optimization guarantees. In the offline
fixed-list setting, the robust objective is convex and projected stochastic subgradient reaches global $\epsilon$-suboptimality with $O(\epsilon^{-2})$
sample complexity. In the online policy-induced setting, where candidate
lists are generated by the current policy, we establish weak convexity and
$\widetilde O(\epsilon^{-2})$ Moreau-envelope stationarity. Experiments in offline LLM alignment show that the proposed robust correction largely preserves
performance under clean labels and improves robustness under noise. In online alignment, it makes reward-model-ranked candidate
expansion more reliable and improves both reward-model and external GPT-4 judge metrics.
\end{abstract}

\section{Introduction}
\label{sec:intro}
Learning from human preferences has become a central mechanism for aligning large language models,
with reinforcement-learning-from-human-feedback (RLHF) and direct preference optimization (DPO)
serving as standard recipes for instruction-tuned models~\citep{christiano2017deep,ouyang2022training,rafailov2023direct}.
Most existing analyses and algorithms formulate preference feedback as \emph{pairwise} supervision
under a Bradley--Terry (BT) model~\citep{bradley1952rank,rafailov2023direct}, where each training
example identifies a chosen response and a rejected response.
However, many modern preference datasets are naturally \emph{listwise}: for a single prompt,
multiple candidate responses are available, and the supervision may contain a full or partial ranking.
This has motivated Plackett--Luce (PL) listwise preference objectives~\citep{plackett1975analysis,xia2008listmle,liu2025lipo,song2024preference}, which exploit the relative ordering of multiple
candidates rather than reducing the feedback to isolated pairwise comparisons.

While these works establish listwise preference optimization,
they do not address robustness to uncertainty in the observed ranking label itself. To the best of our
knowledge, this is the first work to study robust listwise preference optimization for LLM alignment
under conditional ranking-label ambiguity.

Robust preference optimization has developed largely along a different axis. 
Recent robust DPO-style methods introduce distributional uncertainty over the empirical preference dataset, the prompt distribution, or the preference-pair distribution~\citep{wu2025drdpo,mandal2025tvdpo,xu2026robustllmalignmentdistributionally}. 
These approaches address an important question: how should alignment behave when the distribution from which prompts or preference pairs are sampled is perturbed? 
They do not directly address a different and common source of uncertainty in listwise supervision: even after conditioning on the same prompt and the same candidate list, the observed ranking label itself may be unreliable.

This paper studies this \emph{conditional ranking-label uncertainty}. 
Given a prompt \(x\), a realized candidate list \(Y=(y_1,\ldots,y_K)\), and an observed ranking \(\sigma^\star\), we allow the ranking-label distribution to vary within a pointwise total-variation ambiguity set around the empirical ranking. 
This models local ambiguity caused by annotator inconsistency, near-ties between candidates, tied or lossy rankwise feedback, and reward-model-induced ranking noise in online alignment. 
The key distinction from prior robust preference-optimization work is that we do not robustify which prompts, pairs, or candidate lists are sampled; instead, conditional on a realized candidate list, we robustify the ranking label over that list.

This candidate-list-conditioned formulation has two advantages. 
First, it targets the supervision noise directly: for a fixed prompt--candidate-list instance, annotator disagreement, reward-model errors, near-ties, and lossy rank annotations all manifest as uncertainty in the observed ordering among the same alternatives.

Second, the formulation preserves the full listwise ranking signal. The PL model is a strict listwise generalization of the pairwise BT/DPO objective: when \(K=2\), it recovers the standard pairwise preference loss, while for \(K>2\), it retains the sequence-level ordering among multiple candidates. Prior listwise ranking and preference-optimization methods have shown that using the relative ordering among multiple candidates can exploit richer supervision than reducing feedback to isolated pairwise comparisons~\citep{xia2008listmle,song2024preference,liu2025lipo}. 
Our robustification keeps this listwise structure intact: it perturbs the ranking over the same candidate set rather than decomposing the supervision into independent pairwise label flips.

At first glance, this formulation appears computationally expensive because the adversary may choose among \(K!\) possible rankings. 
Our main structural observation is that the PL loss makes this inner maximization exactly tractable: the worst-case ranking is obtained by sorting the current implicit scores in ascending order. 
Consequently, the robust listwise loss reduces to a convex combination of the nominal PL loss and a single adversarial PL loss, computable in \(O(K\log K)\) time. 
For \(K\ge 3\), this correction is intrinsically listwise and does not reduce to a collection of independent pairwise BT corrections.

We analyze the resulting objective in both offline and online settings naturally induced by listwise alignment. 
In the offline fixed-list setting with log-linear scores, the robust objective is convex and reaches global $\epsilon$-suboptimality with $O(\epsilon^{-2})$
sample complexity. In the online policy-induced setting, where candidate lists are sampled from the current policy, the objective is no longer globally convex; we establish weak convexity and an \(\widetilde O(\epsilon^{-2})\) Moreau-envelope stationarity guarantee using an explicit ascending-sort Clarke-subgradient oracle. 
Empirically, the proposed correction behaves as a conservative ranking-label regularizer: it preserves clean-label performance, improves stability under structured ranking-label corruption, and helps larger listwise candidate sets become more reliable in online reward-model-driven alignment.

\subsection{Contributions}

Our contributions are as follows.

\begin{itemize}
    \item \textbf{A robust listwise preference-optimization formulation.}
    To the best of our knowledge, we are the first to study robust listwise preference optimization for
    LLM alignment under conditional ranking-label uncertainty. Unlike prior robust DPO methods that
    perturb the data, prompt, or pair distribution, our ambiguity set targets the ranking label conditional
    on a realized candidate list. The formulation recovers the standard pairwise BT/DPO setting when \(K=2\), while providing a genuinely listwise robustness model when \(K>2\).

    \item \textbf{Exact and tractable robust PL loss.}
    We show that the robust PL loss admits an exact decomposition into the nominal PL loss and a worst-case PL loss. 
    Although the inner maximization is over \(K!\) rankings, the PL structure implies that the worst-case ranking is simply the ascending-score order, giving an \(O(K\log K)\) evaluation algorithm.

    \item \textbf{Strong offline and online optimization theory.}
    In the offline fixed-list log-linear setting, the robust objective is convex and admits an \(O(\epsilon^{-2})\) stochastic subgradient guarantee. 
    In the online policy-induced setting, we prove weak convexity and an \(\widetilde O(\epsilon^{-2})\) Moreau-envelope stationarity bound using an explicit Clarke-subgradient oracle. These guarantees improve or match closely related robust
    preference-optimization rates under comparable log-linear oracle models, as summarized in
    Table~\ref{tab:theory_positioning}.

    \item \textbf{Empirical validation under ranking-label uncertainty.}
    Offline experiments show that robust PL is most useful when listwise labels are structurally corrupted, especially under severe top-rank noise. 
    Online LLM alignment experiments show that robustness helps larger candidate lists become more reliable when rankings are generated by a reward model.
\end{itemize}

\subsection{Relation to Prior Robust Preference Optimization}
\label{subsec:relation_prior}

Prior robust preference-optimization methods mainly study pairwise BT/DPO objectives and place robustness on the data, prompt, pair, or oracle distribution. 
In contrast, we study PL listwise preference optimization and place uncertainty on the ranking label conditional on a realized candidate list. 
This distinction is central: for \(K\ge 3\), our robust correction is a genuinely listwise PL max-gap over permutations, with an exact ascending-score solution rather than a reduction to independent pairwise BT corrections.

\Cref{tab:theory_positioning} summarizes how our setting differs from representative robust preference-optimization theories in terms of preference model, robustness source, robust loss structure, and sample/oracle complexity. 
Additional discussion is provided in Appendix \ref{app:related_work}.

\begin{table}[t]
\centering
\small
\caption{Comparison with representative robust preference-optimization theory.}
\label{tab:theory_positioning}
\setlength{\tabcolsep}{5pt}
\renewcommand{\arraystretch}{1.15}
\begin{tabularx}{\textwidth}{
    >{\raggedright\arraybackslash}p{0.24\textwidth}
    >{\raggedright\arraybackslash}X
    >{\raggedright\arraybackslash}X
    >{\raggedright\arraybackslash}X}
\toprule
\textbf{Aspect}
& \textbf{Distributionally Robust DPO ~\citep{mandal2025tvdpo}}
& \textbf{Oracle-robust online alignment~\citep{li2026oraclerobustonlinealignmentlarge}}
& \textbf{Ours} \\
\midrule

Preference model
& Pairwise BT
& Pairwise BT
& PL rankings, including BT as $K=2$ \\

Robustness source
& Data / prompt distribution shift
& Pairwise oracle perturbation
& Ranking-label perturbation \\

Robust loss structure
& Reweighting-based robust objective
& Exact pairwise sensitivity penalty
& Exact listwise sensitivity penalty, including pairwise as $K=2$ \\

Log-linear policy class
& Yes
& Yes
& Yes \\

Main guarantee
&
Offline robust DPO:
\(nT=\tilde O(\varepsilon^{-4})\)
&
Online Moreau stationarity:
\(\tilde O(\varepsilon^{-2})\)
&
Offline:
\(B_sT=O(\varepsilon^{-2})\);
Online:
\(\tilde O(\varepsilon^{-2})\) Moreau stationarity
\\

Empirical setting
& Offline
& -
& Offline and online \\

\bottomrule
\end{tabularx}

\vspace{0.5em}
\begin{minipage}{0.97\textwidth}
\footnotesize
\emph{Note.}
The Main guarantee row reports the total sample/oracle complexity under the corresponding
oracle model. Detailed comparisons with~\citet{mandal2025tvdpo} and~\citet{li2026oraclerobustonlinealignmentlarge} are provided
in \Cref{rem:offline-comparison,rem:online-comparison}.
\end{minipage}
\end{table}

\section{Background: From Pairwise DPO to Listwise PL-DPO}
\label{sec:background}

Given a prompt $x$ and two responses $y^+,y^-$, the Bradley--Terry (BT) model assumes
$\Prob(y^+\succ y^-\mid x) = \sigma\!\big(r^\star(x,y^+)-r^\star(x,y^-)\big)$, where $\sigma$ is the logistic sigmoid and $r^\star$ is the latent ground-truth reward~\citep{bradley1952rank}. Under KL-regularized RLHF the optimal policy admits the Gibbs form $\pi^\star(y\mid x)\propto\pi_{\mathrm{ref}}(y\mid x)\exp(r^\star(x,y)/\beta)$, so $r^\star(x,y)=\beta\log[\pi^\star(y\mid x)/\pi_{\mathrm{ref}}(y\mid x)]+\beta\log Z(x)$. The partition $\beta\log Z(x)$ cancels in the BT difference, motivating the implicit DPO score $g_\theta(x,y):=\beta\log[\pi_\theta(y\mid x)/\pi_{\mathrm{ref}}(y\mid x)]$ and the pairwise DPO loss $\ell_{\mathrm{DPO}}(\theta;x,y^+,y^-) = -\log\sigma(g_\theta(x,y^+)-g_\theta(x,y^-))$~\citep{rafailov2023direct}.

\paragraph{Listwise generalization via Plackett--Luce.}
For each prompt $x$, suppose we observe a realized candidate list $\mathcal{Y}=\{y_1,\dots,y_K\}$ and a deterministic empirical ranking $\sigma^\star\in\Sk$, where $\sigma^\star_i$ is the index of the response placed at rank $i$. The listwise analogue of BT is the Plackett--Luce model~\citep{plackett1975analysis}:
\begin{equation}
\Prob(\sigma^\star\mid x,\mathcal{Y}) = \prod_{i=1}^K \frac{\exp(r^\star(x,y_{\sigma^\star_i}))}{\sum_{j=i}^K \exp(r^\star(x,y_{\sigma^\star_j}))}.
\label{eq:pl-prob}
\end{equation}
Substituting the same DPO reparameterization, the partition $\beta\log Z(x)$ cancels at every stage of the product, yielding the listwise PL-DPO loss
\begin{equation}
\ell_{\PL}(\theta;x,y_{1:K},\sigma) = -\sum_{i=1}^K g_\theta(x,y_{\sigma_i}) + \sum_{i=1}^K \log\!\Big(\sum_{j=i}^K \exp(g_\theta(x,y_{\sigma_j}))\Big).
\label{eq:pl-loss}
\end{equation}
At $K=2$ and $\sigma=(1,2)$, \eqref{eq:pl-loss} reduces to $-\log\sigma(g_\theta(x,y_1)-g_\theta(x,y_2))$, recovering pairwise DPO exactly. For $K\ge 3$, $\ell_{\PL}$ aggregates information from $K-1$ stagewise PL choices and is strictly more informative than any single pairwise comparison drawn from the same list. A formal curvature view of this stagewise information aggregation is provided in \Cref{prop:pl_curvature_decomposition}.

The \emph{nominal} listwise objective on a fixed offline dataset $\mathcal{D}=\{(x,y_{1:K},\sigma^\star)\}$ is $J_{\nom}(\theta) := \E_{(x,y_{1:K},\sigma^\star)\sim\mathcal{D}}[\ell_{\PL}(\theta;x,y_{1:K},\sigma^\star)]$. We add ranking-label robustness next.

\section{Pointwise TV-Robust Listwise Objective}
\label{sec:robust-objective}

We treat each empirical ranking label $\sigma^\star$ as a Dirac point mass on $\Sk$ and let an adversary perturb this label distribution within a TV ball of radius $\rho\in[0,1]$.

\begin{definition}[Pointwise TV ambiguity]
\label{def:tv-set}
For sample $(x,y_{1:K},\sigma^\star)$,
\begin{equation}
\mathcal{U}_{\TV}(\delta_{\sigma^\star},\rho) := \big\{P\in\Delta(\Sk):\TV(P,\delta_{\sigma^\star})\le\rho\big\},
\quad \TV(P,Q)=\tfrac12\|P-Q\|_1.
\label{eq:tv-set}
\end{equation}
\end{definition}

\begin{definition}[Robust listwise objective]
\label{def:robust-loss}
The pointwise robust loss and offline robust objective are
\begin{align}
\ell_{\rob}(\theta;x,y_{1:K},\sigma^\star) &:= \max_{P\in\mathcal{U}_{\TV}(\delta_{\sigma^\star},\rho)} \E_{\sigma\sim P}[\ell_{\PL}(\theta;x,y_{1:K},\sigma)], \notag \\
J_{\rob}(\theta) &:= \E_{\mathcal{D}}[\ell_{\rob}(\theta;x,y_{1:K},\sigma^\star)].
\label{eq:robust-loss}
\end{align}
\end{definition}

\begin{lemma}[Exact pointwise TV decomposition]
\label{lem:tv-decomp}
For every sample $(x,y_{1:K},\sigma^\star)$, parameter $\theta$, and $\rho\in[0,1]$,
\begin{equation}
\ell_{\rob}(\theta;x,y_{1:K},\sigma^\star) = (1-\rho)\,\ell_{\PL}(\theta;x,y_{1:K},\sigma^\star) + \rho\max_{\sigma\in\Sk}\ell_{\PL}(\theta;x,y_{1:K},\sigma).
\label{eq:tv-decomp}
\end{equation}
Equivalently, $\ell_{\rob} = \ell_{\PL}(\sigma^\star)+\rho\big(\max_\sigma\ell_{\PL}(\sigma)-\ell_{\PL}(\sigma^\star)\big)$.
\end{lemma}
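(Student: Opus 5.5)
We prove the identity by computing the inner maximum explicitly. The maximum is a linear program over the finite simplex $\Delta(\Sk)$, so we get an upper bound by a direct inequality and a matching lower bound by exhibiting a feasible distribution. Fix the sample and $\theta$, write $L(\sigma):=\ell_{\PL}(\theta;x,y_{1:K},\sigma)$ for $\sigma\in\Sk$, and set $M:=\max_{\sigma\in\Sk}L(\sigma)$. This maximum is attained because $\Sk$ is finite, say at $\bar\sigma$. The objective $P\mapsto\E_{\sigma\sim P}[L(\sigma)]=\sum_\sigma P(\sigma)L(\sigma)$ is linear in $P$.

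\textbf{Step 1: rewrite the TV constraint.} For $P\in\Delta(\Sk)$ we have $\|P-\delta_{\sigma^\star}\|_1=(1-P(\sigma^\star))+\sum_{\sigma\neq\sigma^\star}P(\sigma)=2(1-P(\sigma^\star))$. Hence $\TV(P,\delta_{\sigma^\star})=1-P(\sigma^\star)$, and the ambiguity set is exactly $\{P\in\Delta(\Sk):P(\sigma^\star)\ge 1-\rho\}$. Every such $P$ can therefore be written as $P=(1-\rho)\delta_{\sigma^\star}+\rho Q$, where $Q:=\big(P-(1-\rho)\delta_{\sigma^\star}\big)/\rho$ is a probability distribution when $\rho>0$. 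It is nonnegative because $P(\sigma^\star)\ge1-\rho$, and it sums to one.

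\textbf{Step 2: upper bound.} Using this representation,
\[
\E_P[L]=(1-\rho)L(\sigma^\star)+\rho\,\E_Q[L]\le(1-\rho)L(\sigma^\star)+\rho M .
\]
This holds for every feasible $P$, so it bounds $\ell_{\rob}$ from above.

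\textbf{Step 3: lower bound.} Take $P^\star:=(1-\rho)\delta_{\sigma^\star}+\rho\,\delta_{\bar\sigma}$. It is a probability distribution, and $P^\star(\sigma^\star)\ge1-\rho$ whether or not $\bar\sigma=\sigma^\star$, so $P^\star$ is feasible. It achieves $\E_{P^\star}[L]=(1-\rho)L(\sigma^\star)+\rho M$. Combining Steps 2 and 3 gives \eqref{eq:tv-decomp}. The equivalent form follows by rearranging, since $(1-\rho)L(\sigma^\star)+\rho M=L(\sigma^\star)+\rho\big(M-L(\sigma^\star)\big)$.

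\textbf{Edge cases.} When $\rho=0$ the ambiguity set is $\{\delta_{\sigma^\star}\}$, the division defining $Q$ is unnecessary, and both sides equal $L(\sigma^\star)$. When $\rho=1$ the constraint is vacuous, and the formula gives $M$, consistent with the computation above.

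The only step needing any care is Step 1, the exact computation of TV distance to a Dirac mass. Everything else is linear optimization over a simplex, so no serious obstacle is expected. Note that the identity does not use the PL form of $L$ at all; it holds for any loss on a finite label set.
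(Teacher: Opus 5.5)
Your proof is correct and follows essentially the same route as the paper. Both arguments compute $\TV(P,\delta_{\sigma^\star})=1-P(\sigma^\star)$, reduce the ambiguity set to $\{P: P(\sigma^\star)\ge 1-\rho\}$, and bound the linear objective by the maximum PL loss; the differences are cosmetic: you write each feasible $P$ as $(1-\rho)\delta_{\sigma^\star}+\rho Q$ where the paper parametrizes by $\epsilon=1-P(\sigma^\star)$ and argues monotonicity in $\epsilon$, and your explicit maximizer $(1-\rho)\delta_{\sigma^\star}+\rho\,\delta_{\bar\sigma}$ makes attainment (including the case $\bar\sigma=\sigma^\star$) cleaner than the paper's ``allocate all mass to the argmax'' remark.
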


The full proof is in Appendix \ref{app:tv-proof}.

\section{Tractable Worst-Case Ranking}
\label{sec:worst-case}

By \Cref{lem:tv-decomp}, evaluating the pointwise robust loss requires solving
\[
\max_{\sigma\in\Sk}\ell_{\PL}(\theta;x,y_{1:K},\sigma),
\]
which is naively a maximization over \(K!\) possible rankings. The key structural observation is that, under the PL loss, the adversarial ranking is not arbitrary: it is obtained by placing low-score candidates before high-score candidates. Thus the inner maximization reduces to a single sorting operation.

\begin{theorem}[Worst-case ranking by ascending scores]
\label{thm:worst-perm}
Fix \((\theta,x,y_{1:K})\) and let \(g_i:=g_\theta(x,y_i)\). 
If \(\sigma_{\rm wc}\in\Sk\) sorts the scores in nondecreasing order,
\(g_{\sigma_{\rm wc}(1)}\le \cdots \le g_{\sigma_{\rm wc}(K)}\), then
\[
\sigma_{\rm wc}\in
\arg\max_{\sigma\in\Sk}
\ell_{\PL}(\theta;x,y_{1:K},\sigma).
\]
Hence the inner maximization in the robust PL loss is solved by sorting and costs \(O(K\log K)\).
With ties, any deterministic tie-breaking rule within tied groups is valid.
\end{theorem}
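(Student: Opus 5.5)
The plan is an exchange (adjacent-transposition) argument applied to the PL loss \eqref{eq:pl-loss}. First observe that the linear term $-\sum_i g_{\sigma_i}$ does not depend on $\sigma$, since it always sums all $K$ scores. Maximizing $\ell_{\PL}$ over $\Sk$ is therefore equivalent to maximizing
\[
F(\sigma):=\sum_{i=1}^K \log\Big(\sum_{j=i}^K e^{g_{\sigma_j}}\Big),
\]
the sum of log-suffix-sums of the exponentiated scores. Writing $a_j:=e^{g_{\sigma_j}}>0$ and $S_i:=\sum_{j\ge i} a_j$, we have $F=\sum_i \log S_i$.

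The key step is to swap two adjacent positions $i$ and $i+1$ and compare. Only the suffix sum $S_{i+1}$ changes: $S_i$ still contains both elements, and $S_{i+2}$ contains neither. Let $T:=S_{i+2}$. Before the swap the term is $\log(a_{i+1}+T)$, and after the swap it is $\log(a_i+T)$. Since $\log$ is strictly increasing, the swap weakly increases $F$ exactly when $a_i\ge a_{i+1}$, that is, when the larger score was placed earlier. So moving the smaller-score item ahead of the larger-score item never decreases $F$, and when scores are equal the swap leaves $F$ unchanged.

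To conclude, take any maximizer $\sigma$; one exists because $\Sk$ is finite. Bubble-sort it into nondecreasing score order by repeatedly swapping adjacent inversions $g_{\sigma_i}>g_{\sigma_{i+1}}$. Each swap weakly increases $F$ and reduces the number of inversions by one, so the process terminates at an ascending-order permutation whose value is at least $F(\sigma)$, and hence that permutation is also a maximizer. Any two ascending orders differ only by permutations within tied groups, and such swaps leave $F$ unchanged by the equality case above. This gives $\sigma_{\rm wc}\in\arg\max$ for every tie-breaking rule, and the $O(K\log K)$ cost is just the cost of sorting plus an $O(K)$ suffix-sum evaluation.

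The only delicate point is the bookkeeping in the adjacent-swap step, namely checking that only one suffix sum changes. Handling ties cleanly via the equality case is the other place where care is needed; the rest is routine.
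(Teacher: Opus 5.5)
Your proposal is correct and follows essentially the same route as the paper: drop the permutation-invariant linear term, use an adjacent-transposition argument in which only the $(t+1)$-st suffix term changes (from $\log(e^{b}+R)$ to $\log(e^{a}+R)$), and repeat adjacent swaps until the ascending order is reached. Your treatment of ties is slightly more careful than the paper's appeal to a ``unique'' inversion-free permutation: you note that all ascending orders differ only within tied groups and so give the same value of $F$.
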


The full proof is in Appendix \ref{app:sort-proof}.

\begin{algorithm}[t]
\caption{Exact robust PL loss}
\label{alg:robust-pl-loss}
\begin{algorithmic}[1]
\Require \(x,Y=(y_1,\ldots,y_K),\sigma^\star,\theta,\rho\)
\State \(g_i\gets g_\theta(x,y_i)\) for \(i\in[K]\)
\State \(\sigma_{\rm wc}\gets \operatorname{argsort}_{i\in[K]}(g_i)\) in nondecreasing order
\State \(\ell_{\rm nom}\gets \ell_{\PL}(\theta;x,Y,\sigma^\star)\), \(\ell_{\rm wc}\gets \ell_{\PL}(\theta;x,Y,\sigma_{\rm wc})\)
\State \Return \((1-\rho)\ell_{\rm nom}+\rho\ell_{\rm wc}\)
\end{algorithmic}
\end{algorithm}

\begin{figure}[!t]
\centering
\scriptsize
\setlength{\abovecaptionskip}{2pt}
\setlength{\belowcaptionskip}{2pt}

\begin{minipage}[t]{0.49\textwidth}
\hrule
\vspace{2pt}
\captionof{algorithm}{Offline Robust PL-DPO}
\label{alg:offline-robust-pl-dpo}
\vspace{2pt}
\hrule
\vspace{2pt}
\begin{algorithmic}[1]
\Require Dataset \(\mathcal D=\{(x_i,Y_i,\sigma_i^\star)\}_{i=1}^n\), \(\theta_0\in\Theta\), stepsize \(\eta\), batch size \(B_s\), radius \(\rho\).
\For{\(t=0,\ldots,T-1\)}
    \State Sample mini-batch \(\mathcal B_t\subset\mathcal D\).
    \State \(\widehat{\mathcal L}_{\rm RPL}(\theta_t)
    \gets B_s^{-1}\!\!\sum_{(x_i,Y_i,\sigma_i^\star)\in\mathcal B_t}
    \ell_{\rm rob}(\theta_t;x_i,Y_i,\sigma_i^\star)\), with \(\ell_{\rm rob}\) from \Cref{alg:robust-pl-loss}.
    \State Choose \(\widehat g_t\in\partial_\theta
    \widehat{\mathcal L}_{\rm RPL}(\theta_t)\).
    \State \(\theta_{t+1}\gets \Pi_\Theta(\theta_t-\eta\widehat g_t)\).
\EndFor
\State \Return \(\bar\theta_T=T^{-1}\sum_{t=0}^{T-1}\theta_t\).
\end{algorithmic}
\vspace{2pt}
\hrule
\end{minipage}
\hfill
\begin{minipage}[t]{0.49\textwidth}
\hrule
\vspace{2pt}
\captionof{algorithm}{Online Robust PL-SAIL}
\label{alg:online-robust-pl-sail}
\vspace{2pt}
\hrule
\vspace{2pt}
\begin{algorithmic}[1]
\Require Initial parameter \(\theta_0\in\Theta\), stepsize \(\eta\), batch size \(B_s\), list size \(K\), radius \(\rho\).
\For{\(t=0,\ldots,T-1\)}
    \State Sample \(x_i\sim\mathcal D_x\), generate
    \(Y_i\sim\pi_{\theta_t}^{\otimes K}(\cdot|x_i)\), and obtain
    \(\sigma_i^\star\sim p^\star(\cdot|x_i,Y_i)\), for \(i\in[B_s]\).
    \State Evaluate \(\ell_{\rm rob}(\theta_t;x_i,Y_i,\sigma_i^\star)\)
    by \Cref{alg:robust-pl-loss}.
    \State Form \(\widehat G_t\gets B_s^{-1}\sum_{i=1}^{B_s}G(\theta_t;Z_i)\), where \(Z_i=(x_i,Y_i,\sigma_i^\star)\) and \(G\) is defined in \eqref{eq:oracle}.
    \State \(\theta_{t+1}\gets \Pi_\Theta(\theta_t-\eta\widehat G_t)\).
\EndFor
\State \Return \(\theta_R\), \(R\sim{\rm Uniform}\{0,\ldots,T-1\}\).
\end{algorithmic}
\vspace{2pt}
\hrule
\end{minipage}
\end{figure}

\section{Optimization Theory}
\label{sec:theory}

We analyze two settings: an \emph{offline} fixed-list setting in which $(y_{1:K},\sigma^\star)$ is independent of $\theta$, and an \emph{online} policy-induced setting in which $Y$ is sampled from the current policy $\pi_\theta^{\otimes K}$.

The offline and online optimization procedures are summarized in \Cref{alg:offline-robust-pl-dpo,alg:online-robust-pl-sail}.

\subsection{Offline Fixed-List Setting}
\label{sec:offline-theory}

\begin{assumption}[Log-linear policy class]
\label{ass:offline}
The response space \(\mathcal Y\) is finite. Let
\(\psi:\mathcal X\times\mathcal Y\to\mathbb R^{d_p}\) satisfy
\(\sup_{x,y}\|\psi(x,y)\|_2\le B_\psi\), where \(B_\psi=1\) can be obtained by rescaling. For
\(B>0\), let \(\Theta:=\{\theta\in\mathbb R^{d_p}:\|\theta\|_2\le B\}\), and consider
\[
\Pi
=
\left\{
\pi_\theta:
\pi_\theta(y|x)
=
\frac{\exp(\theta^\top\psi(x,y))}
{\sum_{y'\in\mathcal Y}\exp(\theta^\top\psi(x,y'))},
\ \theta\in\Theta
\right\}.
\]
Assume \(\pi_{\rm ref}=\pi_{\theta_{\rm ref}}\) for some fixed
\(\theta_{\rm ref}\in\Theta\), and set
\(D:=\sup_{\theta\in\Theta}\|\theta-\theta_{\rm ref}\|_2<\infty\).
\end{assumption}
\begin{remark}[On the log-linear policy class assumption]
The log-linear policy assumption is a standard simplification in theoretical analyses of reinforcement learning~\citep{agarwal2021theoryPG,modi2020sample}, RLHF~\citep{zhu2023principled}, and DPO~\citep{nika2024reward}. Closely related robust-alignment works also adopt log-linear policy assumption, including the online oracle-robust alignment setting of \citet{li2026oraclerobustonlinealignmentlarge}, the offline Distributionally Robust DPO setting of \citet{mandal2025tvdpo}, and distributionally robust DPO variants such as WDPO/KLDPO~\citep{xu2026robustllmalignmentdistributionally}. We use this assumption to isolate the optimization structure of the proposed robust listwise objective.
\end{remark}

\begin{proposition}[Convexity of the offline robust objective]
\label{prop:offline-convex}
Suppose \cref{ass:offline} holds, $\theta\mapsto\ell_{\PL}(\theta;x,y_{1:K},\sigma)$ is convex for every $(x,y_{1:K},\sigma)$. Hence by \Cref{lem:tv-decomp}, $\theta\mapsto\ell_{\rob}(\theta;x,y_{1:K},\sigma^\star)$ is convex on $\Theta$. The offline robust objective $J_{\rob}$ is convex.
\end{proposition}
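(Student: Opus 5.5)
The plan is to show convexity at the level of a single PL loss first, and then push it through the pointwise decomposition and the expectation. Under \cref{ass:offline}, the implicit score is
\[
g_\theta(x,y)=\beta\bigl[\theta^\top\psi(x,y)-\log Z_\theta(x)\bigr]-\beta\log\pi_{\rm ref}(y\mid x),
\qquad
Z_\theta(x)=\sum_{y'\in\mathcal Y}\exp\bigl(\theta^\top\psi(x,y')\bigr).
\]
The term $-\beta\log Z_\theta(x)$ is common to every candidate and cancels in each PL stage. Indeed, the first sum in \eqref{eq:pl-loss} contributes $+K\beta\log Z_\theta(x)$, and each log-sum-exp term contributes $-\beta\log Z_\theta(x)$, since a common additive shift $c$ satisfies $\log\sum_j e^{a_j+c}=c+\log\sum_j e^{a_j}$. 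Therefore $\ell_{\PL}$ depends on $\theta$ only through the affine scores
\[
a_i(\theta):=\beta\,\theta^\top\psi(x,y_i)-\beta\log\pi_{\rm ref}(y_i\mid x),
\]
and in the offline setting the list $y_{1:K}$ does not depend on $\theta$.

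Next I rewrite the loss stage by stage. With $a_i(\theta)$ in place of $g_i$,
\[
\ell_{\PL}(\theta;x,y_{1:K},\sigma)=\sum_{i=1}^K\Bigl[-a_{\sigma_i}(\theta)+\log\sum_{j=i}^K\exp\bigl(a_{\sigma_j}(\theta)\bigr)\Bigr].
\]
Each bracket is a log-sum-exp of affine functions of $\theta$ minus an affine function of $\theta$. Log-sum-exp is convex, and composing a convex function with an affine map preserves convexity, so each bracket is convex. A finite sum of convex functions is convex, so $\theta\mapsto\ell_{\PL}(\theta;x,y_{1:K},\sigma)$ is convex for every fixed $(x,y_{1:K},\sigma)$. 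This holds on all of $\mathbb R^{d_p}$, and hence on the convex set $\Theta$.

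By \Cref{lem:tv-decomp}, $\ell_{\rob}=(1-\rho)\ell_{\PL}(\sigma^\star)+\rho\max_{\sigma\in\Sk}\ell_{\PL}(\sigma)$ with $\rho\in[0,1]$. The pointwise maximum of the finitely many convex functions $\{\ell_{\PL}(\cdot;\sigma)\}_{\sigma\in\Sk}$ is convex, and a nonnegative combination of convex functions is convex, so $\ell_{\rob}$ is convex. Finally, $J_{\rob}$ is an expectation over $\mathcal D$ of functions that are each convex in $\theta$. Since $\theta$ does not enter the sampling distribution in the offline setting, $J_{\rob}$ is convex.

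The integrability needed for that last step is immediate. $\mathcal Y$ is finite, $\|\psi\|\le B_\psi$ and $\Theta$ is bounded, so $\ell_{\rob}$ is bounded on $\Theta$ uniformly over samples and the expectation is finite.

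The main point to check is the cancellation of the normalizer $Z_\theta(x)$. Without it, the score $g_\theta$ contains $-\beta\log Z_\theta$, which is concave in $\theta$, and convexity of a score-based loss would not follow directly. The stagewise cancellation removes this term, and the remaining steps are standard convexity-preserving operations.
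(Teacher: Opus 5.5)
Your proposal is correct and follows the paper's proof essentially step by step. Both arguments note that the scores are affine in $\theta$ once the common normalizer cancels, so each stagewise term (log-sum-exp of affine functions minus an affine function) is convex. Convexity of $\ell_{\rob}$ then follows from \Cref{lem:tv-decomp} as a nonnegative combination involving a finite pointwise max, and it passes to $J_{\rob}$ through the $\theta$-independent expectation. Your explicit check that the $\log Z_\theta(x)$ terms cancel (the paper only asserts this) and your integrability remark add detail but do not change the route.
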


The proof is in Appendix \ref{app:offline-convex}. We use the projected stochastic subgradient method
\begin{equation}
\theta_{t+1} = \Pi_\Theta\big(\theta_t - \eta\,\widehat g_t\big),\qquad t=0,\dots,T-1,
\label{eq:proj-subgrad}
\end{equation}
where $\widehat g_t$ is a mini-batch unbiased subgradient estimator with conditional variance bounded by $\sigma_g^2/B_s$ (mini-batch size $B_s$).

\begin{theorem}[Offline suboptimality of projected stochastic subgradient]
\label{thm:offline-rate}
Suppose \cref{ass:offline} holds, a mini-batch oracle with second moment bounded as $\E[\|\widehat g_t\|_2^2\mid\theta_t]\le 4K^2B_\psi^2+\sigma_g^2/B_s$, with $\eta = 2B/\sqrt{T(4K^2B_\psi^2+\sigma_g^2/B_s)}$ the averaged iterate $\bar\theta_T:=\tfrac1T\sum_{t=0}^{T-1}\theta_t$ satisfies
\begin{equation}
\E[J_{\rob}(\bar\theta_T)] - \min_{\theta\in\Theta} J_{\rob}(\theta) \le \frac{2B\sqrt{4K^2B_\psi^2+\sigma_g^2/B_s}}{\sqrt{T}}.
\label{eq:offline-rate}
\end{equation}
Consequently $T = O((4K^2B_\psi^2+\sigma_g^2/B_s)/\varepsilon^2)$ iterations suffice for $\varepsilon$-suboptimality. With fixed $B_s=\Theta(1)$, the total sample complexity is $B_s T=O(\varepsilon^{-2})$.
\end{theorem}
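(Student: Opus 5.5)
The argument is the standard projected stochastic subgradient analysis for convex objectives on a bounded set, applied to $J_{\rob}$. Convexity on $\Theta$ is given by \Cref{prop:offline-convex}. I will write $G^2:=4K^2B_\psi^2+\sigma_g^2/B_s$ for the assumed bound on $\E[\|\widehat g_t\|_2^2\mid\theta_t]$. Let $\theta^\star\in\arg\min_{\Theta}J_{\rob}$; it exists because $J_{\rob}$ is continuous and $\Theta$ is compact. Unbiasedness means $\E[\widehat g_t\mid\theta_t]=g_t\in\partial J_{\rob}(\theta_t)$. For the fixed-list setting this follows from exchanging expectation and subdifferential: each $\ell_{\rob}$ is a finite max-type convex function, and $\mathcal D$ is a finite empirical distribution, so mini-batch averaging of pointwise subgradients gives an element of $\partial J_{\rob}$.

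\textbf{Step 1 (one-step recursion).} Since $\theta^\star\in\Theta$, the projection $\Pi_\Theta$ is nonexpansive, so
\[
\|\theta_{t+1}-\theta^\star\|_2^2\le\|\theta_t-\theta^\star\|_2^2-2\eta\langle\widehat g_t,\theta_t-\theta^\star\rangle+\eta^2\|\widehat g_t\|_2^2 .
\]
Take conditional expectation given $\theta_t$. The subgradient inequality gives $\langle g_t,\theta_t-\theta^\star\rangle\ge J_{\rob}(\theta_t)-J_{\rob}(\theta^\star)$, and the second-moment bound controls the last term. Together these yield
\[
2\eta\,\E[J_{\rob}(\theta_t)-J_{\rob}(\theta^\star)]\le \E\|\theta_t-\theta^\star\|_2^2-\E\|\theta_{t+1}-\theta^\star\|_2^2+\eta^2G^2 .
\]

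\textbf{Step 2 (telescope and average).} Sum over $t=0,\dots,T-1$ and drop the final nonnegative distance term. Then bound $\|\theta_0-\theta^\star\|_2\le 2B$, which holds because both points lie in the ball of radius $B$. Jensen's inequality for the convex $J_{\rob}$ gives $J_{\rob}(\bar\theta_T)\le\frac1T\sum_t J_{\rob}(\theta_t)$, so
\[
\E[J_{\rob}(\bar\theta_T)]-J_{\rob}(\theta^\star)\le\frac{4B^2}{2\eta T}+\frac{\eta G^2}{2}.
\]
With $\eta=2B/(G\sqrt T)$, each term equals $BG/\sqrt T$, and their sum is exactly the claimed $2BG/\sqrt T$. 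Solving $2BG/\sqrt T\le\varepsilon$ gives $T=4B^2G^2/\varepsilon^2=O(G^2\varepsilon^{-2})$. For fixed $B_s=\Theta(1)$, $G^2=O(1)$ in $\varepsilon$, so the total sample count is $B_sT=O(\varepsilon^{-2})$.

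\textbf{Main obstacle.} The only step that is not routine is checking that the stated moment bound is consistent with the model, namely the $4K^2B_\psi^2$ deterministic part. I would justify it by a Lipschitz bound on $\ell_{\rob}$. Under \cref{ass:offline}, $g_\theta(x,y)=\beta(\theta-\theta_{\rm ref})^\top\psi(x,y)$ up to a $y$-independent log-partition difference, and that difference cancels in $\ell_{\PL}$. A direct computation of the gradient of \eqref{eq:pl-loss} then gives, for any $\sigma$, a gradient of the form $\sum_i\big(\E_{p_i}[\psi]-\psi(x,y_{\sigma_i})\big)$ times $\beta$, where $p_i$ is a softmax over the remaining candidates at stage $i$. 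Its norm is at most $2KB_\psi$ with $\beta$ absorbed (or rescaled). Since $\ell_{\rob}$ is a convex combination of $\ell_{\PL}(\sigma^\star)$ and a max of $\ell_{\PL}(\sigma)$ (\Cref{lem:tv-decomp}), every subgradient has norm at most $2KB_\psi$. This gives the $4K^2B_\psi^2$ term, with $\sigma_g^2/B_s$ accounting for the mini-batch variance. Because the theorem assumes this bound as a hypothesis, I would note the verification only as a remark rather than as part of the proof.
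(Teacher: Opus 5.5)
Your proposal is correct and follows the paper's proof essentially step for step. It uses the nonexpansive projection, conditional expectation with $\E[\widehat g_t\mid\theta_t]\in\partial J_{\rob}(\theta_t)$, the convexity inequality, telescoping with $\|\theta_0-\theta^\star\|\le 2B$, Jensen on the averaged iterate, and the step size that equalizes $2B^2/(\eta T)$ and $\eta G^2/2$. Your remark on the $4K^2B_\psi^2$ term also matches the paper's bounded-subgradient lemma: a stagewise softmax gradient bound of $2B_\psi$, summed over $K$ stages, which is preserved under convex combinations of active gradients.
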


The full proof is in Appendix \ref{app:offline-rate}.

\begin{remark}[Comparison with prior robust DPO theory]
\label{rem:offline-comparison}
\Cref{thm:offline-rate} is most directly comparable to the DPO-side rate in offline
Distributionally Robust DPO~\citep{mandal2025tvdpo}. The
robustness source is different: Distributionally Robust DPO considers distributional
robustness over the data or prompt distribution, whereas our method considers pointwise
ranking-label robustness on a realized candidate list. This difference leads to different total sample/oracle complexities. In
\citet{mandal2025tvdpo}, the DPO-side guarantee uses
\(T=O(\varepsilon^{-2})\) iterations and a minibatch size satisfying
\(n/\log n=\Omega(\varepsilon^{-2})\), leading to total sample complexity
\(nT=\tilde O(\varepsilon^{-4})\). In contrast, our explicit listwise robust oracle yields \(T=O(\varepsilon^{-2})\), \(B_sT=O(\varepsilon^{-2})
\) when the minibatch size is fixed as \(B_s=\Theta(1)\). Thus, under our bounded
second-moment oracle model, the offline listwise robust objective admits a cleaner
\(O(\varepsilon^{-2})\) sample/oracle complexity.
\end{remark}

\begin{remark}[Optimality under the current model]
\label{rem:offline-rate-optimality}
This \(O(T^{-1/2})\) rate is the standard stochastic subgradient rate for nonsmooth convex objectives; faster rates would require additional smoothness, curvature, or non-degeneracy assumptions that stabilize the active worst-case ranking.
\end{remark}

\subsection{Online Policy-Induced Setting}
\label{sec:online-theory}

We now study the on-policy setting where, for each prompt $x\sim\mathcal{D}_x$, the candidate list $Y=(y_1,\dots,y_K)\in\mathcal{Y}^K$ is sampled iid as $Y\sim\pi_\theta^{\otimes K}(\cdot\mid x)$. The ranking oracle returns $\sigma^\star\sim p^\star(\cdot\mid x,Y)$, conditionally independent of $\theta$ given $(x,Y)$. \emph{Notation:} in the offline log-linear setting (\Cref{sec:offline-theory}) we wrote scores as $g_\theta(x,y)=\theta^\top\phi(x,y)$. In the online setting we instead parameterize the policy as $\pi_\theta(y\mid x)\propto\exp(\theta^\top\psi(x,y))$ and let $s_\theta(x,y):=\log[\pi_\theta(y\mid x)/\pi_{\mathrm{ref}}(y\mid x)]$ be the induced log-ratio score. Both reduce to affine functions of $\theta$. The online objectives are
\begin{equation}
J_{\nom}^{\mathrm{on}}(\theta) := \E_{x,Y\sim\pi_\theta^{\otimes K},\sigma^\star\sim p^\star}[\ell_{\PL}(\theta;x,Y,\sigma^\star)],
\quad
J_{\rob}^{\mathrm{on}}(\theta) := \E_{x,Y\sim\pi_\theta^{\otimes K},\sigma^\star\sim p^\star}[\ell_{\rob}(\theta;x,Y,\sigma^\star)].
\label{eq:online-objectives}
\end{equation}
By \Cref{lem:tv-decomp}, $J_{\rob}^{\mathrm{on}}=J_{\nom}^{\mathrm{on}}+\rho A(\theta)$ with $A(\theta)=\E\big[\max_\sigma\ell_{\PL}(\theta;x,Y,\sigma)-\ell_{\PL}(\theta;x,Y,\sigma^\star)\big]$. This connects the SAIL-style bilevel formulation of online alignment~\citep{ding2024sail} with our pointwise-TV listwise robustness; see \Cref{app:sail} for the bilevel-to-single-level reduction we use.

Because $\pi_\theta^{\otimes K}$ depends on $\theta$, $J_{\rob}^{\mathrm{on}}$ is \emph{not} globally convex in general. Our analysis therefore departs from the offline case: we work in the weakly convex framework~\citep{davis2019stochastic,drusvyatskiy2018error}, with the Clarke subdifferential $\partial_C$~\citep{clarke1990optimization}.

\begin{assumption}[Online policy-induced sampling and well-posedness]
\label{ass:online}
For each \(\theta\in\Theta\), draw \(x\sim\mathcal D_x\), sample
\(Y=(y_1,\ldots,y_K)\sim\pi_\theta^{\otimes K}(\cdot|x)\), and draw
\(\sigma^\star\sim p^\star(\cdot|x,Y)\), where \(\mathcal D_x\) is independent of \(\theta\) and
\(\sigma^\star\) is conditionally independent of \(\theta\) given \((x,Y)\). Define
\[
F(\theta):=J_{\rm rob}^{\rm on}(\theta)+I_\Theta(\theta),\qquad
I_\Theta(\theta)=
\begin{cases}
0,&\theta\in\Theta,\\
+\infty,&\theta\notin\Theta,
\end{cases}
\qquad
F_{\inf}:=\inf_{\theta\in\mathbb R^{d_p}}F(\theta)>-\infty .
\]
Assume \(F\) is proper and lower semicontinuous and bounded below.
\end{assumption}
We do not place the existence of a Clarke-subdifferential stochastic oracle as an assumption; instead we \emph{construct} one from ascending-sort below.


\begin{proposition}[Online weak convexity]
\label{prop:online-wc}
Suppose \cref{ass:offline,ass:online} hold. $J_{\rob}^{\mathrm{on}}$ is $\kappa$-weakly convex on $\Theta$ with
\begin{equation}
\kappa = K^2 B_\psi^2\big(8 + \log K + 2DB_\psi\big).
\label{eq:kappa-listwise}
\end{equation}
\end{proposition}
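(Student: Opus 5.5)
We prove $\kappa$-weak convexity by showing that $J_{\rob}^{\mathrm{on}}(\theta)+\frac{\kappa}{2}\|\theta\|_2^2$ is convex on $\Theta$. The main tool is an expansion of the expectation over the finite response space. Since $\mathcal Y$ is finite and $\sigma^\star$ is conditionally independent of $\theta$,
\[
J_{\rob}^{\mathrm{on}}(\theta)=\E_x\sum_{Y\in\mathcal Y^K} w_\theta(x,Y)\,\bar\ell(\theta;x,Y),
\]
where $w_\theta(x,Y):=\prod_{k=1}^K\pi_\theta(y_k\mid x)$ and $\bar\ell(\theta;x,Y):=\E_{\sigma^\star\sim p^\star(\cdot\mid x,Y)}[\ell_{\rob}(\theta;x,Y,\sigma^\star)]$. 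By \Cref{lem:tv-decomp} and \Cref{prop:offline-convex}, each $\bar\ell(\cdot;x,Y)$ is convex, because the scores are affine in $\theta$. Each such loss is a convex combination of the nominal PL loss and a maximum of PL losses. The non-convexity therefore comes only from the $\theta$-dependent weights $w_\theta$.

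The plan is to use the standard fact that a product $w\cdot h$, with $w>0$ smooth and $h$ convex and Lipschitz, is weakly convex. The modulus is controlled by $\sup|h|\cdot\|\nabla^2 w\|+2\sup\|\nabla w\|\cdot\mathrm{Lip}(h)$, with an extra factor of $w$ for the $w\,\partial^2 h$ term, which only helps since $w\ge0$. To make this rigorous for nonsmooth $h$, we write $\bar\ell$ as a finite maximum of smooth convex functions. For the affine-minorant characterization we use
\[
h(\theta')\ge h(\theta)+\langle v,\theta'-\theta\rangle,
\]
and we control $w(\theta')h(\theta')-w(\theta)h(\theta)$ from below via a second-order Taylor expansion of $w$. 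An alternative is to smooth the max by log-sum-exp over permutations and pass to the limit, since weak convexity is preserved under uniform limits.

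Next we bound the three ingredients under \Cref{ass:offline} with $B_\psi$.

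\textbf{Weight derivatives.} Since $\nabla_\theta\log\pi_\theta(y\mid x)=\psi(x,y)-\E_{\pi_\theta}\psi$, we have $\|\nabla\log\pi_\theta\|\le 2B_\psi$ and $\nabla^2\log\pi_\theta=-\mathrm{Cov}_{\pi_\theta}(\psi)$. Then $\nabla w=w\sum_k\nabla\log\pi_\theta(y_k)$ and
\[
\nabla^2 w=w\Big[\big(\textstyle\sum_k\nabla\log\pi_\theta(y_k)\big)\big(\sum_k\nabla\log\pi_\theta(y_k)\big)^\top+\sum_k\nabla^2\log\pi_\theta(y_k)\Big].
\]
This gives $\sum_Y\|\nabla w\|\le 2KB_\psi$ and $\sum_Y\|\nabla^2 w\|_{\mathrm{op}}\le 4K^2B_\psi^2+KB_\psi^2$, after summing out the weights.

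\textbf{Lipschitz constant of $\bar\ell$.} The scores $s_\theta(y)=(\theta-\theta_{\rm ref})^\top\psi(x,y)-(\log Z_\theta-\log Z_{\rm ref})$ share a common normalizer across the list. PL losses are invariant to common shifts, so effectively $\nabla s=\psi$ and each stage has gradient of norm at most $2B_\psi$, giving $\mathrm{Lip}\le 2KB_\psi$.

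\textbf{Sup bound on $\bar\ell$.} Shift-invariance lets us drop $\log Z$. With $|s_i|\le DB_\psi$, each stage satisfies $\log\sum_{j\ge i} e^{s_j}-s_i\le \log K+2DB_\psi$, so $|\bar\ell|\le K(\log K+2DB_\psi)$.

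Combining gives a modulus of order
\[
K(\log K+2DB_\psi)\cdot 5K^2B_\psi^2\;+\;2\cdot 2KB_\psi\cdot 2KB_\psi,
\]
which does not match \eqref{eq:kappa-listwise}. I therefore expect the main obstacle to be the bookkeeping needed to reach exactly $\kappa=K^2B_\psi^2(8+\log K+2DB_\psi)$.

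The sharper route is to use the score-function structure rather than crude bounds. The Hessian term $\E[\bar\ell\,\nabla^2 w/w]$ can be centered: replace $\bar\ell$ by $\bar\ell-c$ for a constant $c$, which is legitimate since $\sum_Y\nabla^2 w=0$. This lets the sup bound enter only once, as $K\cdot B_\psi^2\cdot(\log K+2DB_\psi)$ per factor. The cross term then contributes $8K^2B_\psi^2$ from $2\cdot 2KB_\psi\cdot 2KB_\psi$. I would try this centering and per-stage accounting first to land on \eqref{eq:kappa-listwise}. Finally, the boundedness and lower-semicontinuity parts of \Cref{ass:online} are not needed for weak convexity itself; they only make $F$ well-posed for the subsequent stationarity result.
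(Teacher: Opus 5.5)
\textbf{The proposal does not reach the stated constant.} Your framework matches the paper's: you write $J_{\rob}^{\mathrm{on}}=\E_x\sum_Y P_\theta(Y\mid x)\,\bar\ell(\theta;x,Y)$ with $\bar\ell(\cdot;x,Y)$ convex, handle the max over $\Sk$ by log-sum-exp smoothing and a uniform limit, and bound the Hessian of the product. The gap is that this does not prove the modulus in \eqref{eq:kappa-listwise}. Your two-sided bound $\sum_Y\|\nabla^2 w\|_{\mathrm{op}}\le 4K^2B_\psi^2+KB_\psi^2$ gives $(4K^2+K)B_\psi^2C_L+8K^2B_\psi^2$. Weak convexity with that larger modulus does not imply it with the smaller $\kappa$ claimed.

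\textbf{Why centering fails.} Subtracting a constant $c$ is legitimate, since $\sum_Y\nabla^2 w_Y=0$, but it works against you. It makes $\bar\ell-c$ take both signs, so you still need a two-sided bound on $\nabla^2 w/w=SS^\top+\nabla_\theta S$. The rank-one part $SS^\top$ has norm up to $4K^2B_\psi^2$. Even the best choice, midpoint centering, gives only $\tfrac{C_L}{2}(4K^2+K)B_\psi^2$, which is still too large by a factor of about $2K$.

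\textbf{The missing idea.} Weak convexity needs only a \emph{lower} curvature bound, and the uncentered loss is nonnegative: $\ell_{\PL}\ge 0$, hence $\bar\ell\ge 0$, and the smoothed $M_\tau\ge\max_\sigma\ell_{\PL}\ge 0$. In the paper's expansion $T_1+T_2+T_3$ of the smoothed objective, the term $\bar\ell\,(SS^\top+\nabla_\theta S)$ has $\bar\ell\,SS^\top\succeq 0$, so this part is simply dropped. What remains is
\[
\bar\ell\,\nabla_\theta S=-K\,\bar\ell\,\operatorname{Cov}_{\pi_\theta}[\psi]\succeq -C_LKB_\psi^2 I,
\]
using $\bar\ell\le C_L$ and $\operatorname{Cov}\preceq B_\psi^2 I$. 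The smoothed version has an extra $\rho\tau\log K!\,KB_\psi^2$, which vanishes as $\tau\downarrow 0$. Adding your cross-term bound $-2\|S\|\,\|\nabla\bar\ell\|\ge -8K^2B_\psi^2$ and $\nabla^2\bar\ell\succeq 0$ gives exactly
\[
8K^2B_\psi^2+C_LKB_\psi^2=K^2B_\psi^2(8+\log K+2DB_\psi).
\]
So take $c=0$ rather than centering.

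\textbf{Fixing your nonsmooth route.} The same one-sided reasoning repairs your affine-minorant route without smoothing. In the integral Taylor remainder for each $w_Y$, use $h_Y\ge 0$, discard the $(S^\top\Delta)^2$ contribution, and use $\sum_Y w_Y\equiv 1$ along the segment. Bound the cross term by $\sum_Y|w_Y(\theta')-w_Y(\theta)|\le 2KB_\psi\|\Delta\|$. This recovers the same $\kappa$.
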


The bound \eqref{eq:kappa-listwise} is polynomial in $K$ and the model-boundedness constants; the full argument is deferred to Appendix \ref{app:online-proofs}.

\paragraph{Stochastic oracle from ascending-sort.}
Let \(\sigma^{\mathrm{sel}}(\theta;x,Y)\) be the deterministic ascending-score maximizer of
\(\ell_{\PL}(\theta;x,Y,\sigma)\), with fixed tie-breaking, as given by \Cref{thm:worst-perm}. For
\(Z=(x,Y,\sigma^\star)\), define
\begin{equation}
G(\theta;Z)
:=
(1-\rho)\nabla\ell_{\PL}(\theta;x,Y,\sigma^\star)
+\rho\nabla\ell_{\PL}(\theta;x,Y,\sigma^{\mathrm{sel}})
+\ell_{\rob}(\theta;x,Y,\sigma^\star)S_\theta(x,Y).
\label{eq:oracle}
\end{equation}
The last term is the score-function correction for policy-induced sampling. \Cref{lem:oracle}
verifies that \(G\) is a valid stochastic Clarke-subgradient oracle with bounded second moment.

\begin{theorem}[Online robust convergence]
\label{thm:online-rate}
Suppose \cref{ass:offline,ass:online} hold. Let $\kappa$ be as in \eqref{eq:kappa-listwise} and $G_{\mathrm{tot}}^2$ as in \eqref{eq:Gtot}.  For $F=J_{\rob}^{\mathrm{on}}+I_\Theta$ and $\hat\lambda\in(0,1/\kappa)$, define
$F_{\hat\lambda}(\theta)=\min_{u\in\Theta}\{J_{\rob}^{\mathrm{on}}(u)+\tfrac1{2\hat\lambda}\|u-\theta\|^2\}$. Fix $\hat\lambda\in(0,1/\kappa)$ and set $\Delta_0:=F_{\hat\lambda}(\theta_0)-F_{\inf}$. Run \Cref{alg:online-robust-pl-sail} with the explicit ascending-sort oracle \eqref{eq:oracle} and constant stepsize $\eta=\sqrt{2\hat\lambda\Delta_0/(G_{\mathrm{tot}}^2 T)}$. Then
\begin{equation}
\E\!\big[\|\nabla F_{\hat\lambda}(\theta_R)\|^2\big] \le \frac{2}{1-\kappa\hat\lambda}\sqrt{\frac{2\,\Delta_0\,G_{\mathrm{tot}}^2}{\hat\lambda\,T}}.
\label{eq:online-rate}
\end{equation}
\end{theorem}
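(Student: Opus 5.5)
The plan is to follow the Davis--Drusvyatskiy analysis of projected stochastic subgradient methods for weakly convex functions, using the Moreau envelope $F_{\hat\lambda}$ as a Lyapunov function. Three inputs are needed, all available earlier. From \Cref{prop:online-wc}, $J_{\rob}^{\mathrm{on}}$ is $\kappa$-weakly convex on $\Theta$. Consequently, for $\hat\lambda<1/\kappa$ the proximal point $\hat\theta_t:=\prox_{\hat\lambda F}(\theta_t)$ is unique and $\nabla F_{\hat\lambda}(\theta_t)=\hat\lambda^{-1}(\theta_t-\hat\theta_t)$. From \Cref{lem:oracle}, the ascending-sort oracle $G(\theta;Z)$ of \eqref{eq:oracle} is a stochastic Clarke-subgradient oracle: $\E[G(\theta_t;Z)\mid\theta_t]\in\partial_C J_{\rob}^{\mathrm{on}}(\theta_t)$, and $\E\|\widehat G_t\|^2\le G_{\mathrm{tot}}^2$. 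The batch average $\widehat G_t$ inherits both properties.

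\textbf{One-step descent.} Since $\hat\theta_t\in\Theta$ and projection is nonexpansive,
\[
F_{\hat\lambda}(\theta_{t+1})\le F(\hat\theta_t)+\tfrac{1}{2\hat\lambda}\|\theta_{t+1}-\hat\theta_t\|^2\le F(\hat\theta_t)+\tfrac{1}{2\hat\lambda}\|\theta_t-\eta\widehat G_t-\hat\theta_t\|^2 .
\]
Expanding the square and taking the conditional expectation gives
\[
\E_t F_{\hat\lambda}(\theta_{t+1})\le F_{\hat\lambda}(\theta_t)+\tfrac{\eta}{\hat\lambda}\langle \hat\theta_t-\theta_t,v_t\rangle+\tfrac{\eta^2G_{\mathrm{tot}}^2}{2\hat\lambda},
\]
where $v_t=\E_t\widehat G_t\in\partial_C J_{\rob}^{\mathrm{on}}(\theta_t)$.

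\textbf{Bounding the inner product.} Weak convexity gives the subgradient inequality $J(\hat\theta_t)\ge J(\theta_t)+\langle v_t,\hat\theta_t-\theta_t\rangle-\tfrac{\kappa}{2}\|\hat\theta_t-\theta_t\|^2$. The prox-optimality of $\hat\theta_t$ gives the $(\hat\lambda^{-1}-\kappa)$-strong convexity gap $J(\theta_t)-J(\hat\theta_t)-\tfrac{1}{2\hat\lambda}\|\hat\theta_t-\theta_t\|^2\ge \tfrac{1}{2}(\hat\lambda^{-1}-\kappa)\|\hat\theta_t-\theta_t\|^2$. Combining the two yields
\[
\langle v_t,\hat\theta_t-\theta_t\rangle\le -(\hat\lambda^{-1}-\kappa)\|\hat\theta_t-\theta_t\|^2=-(1-\kappa\hat\lambda)\hat\lambda\|\nabla F_{\hat\lambda}(\theta_t)\|^2 .
\]
The Clarke subgradient inequality is valid here because the function is weakly convex, so its Clarke and proximal subdifferentials coincide.

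\textbf{Telescoping.} Sum over $t=0,\dots,T-1$, take full expectations, and use $F_{\hat\lambda}\ge F_{\inf}$ together with $\Delta_0=F_{\hat\lambda}(\theta_0)-F_{\inf}$:
\[
\eta(1-\kappa\hat\lambda)\sum_{t}\E\|\nabla F_{\hat\lambda}(\theta_t)\|^2\le \Delta_0+\tfrac{T\eta^2G_{\mathrm{tot}}^2}{2\hat\lambda}.
\]
Divide by $T\eta(1-\kappa\hat\lambda)$ and note that the uniform choice of $R$ turns the average into $\E\|\nabla F_{\hat\lambda}(\theta_R)\|^2$. Plugging in $\eta=\sqrt{2\hat\lambda\Delta_0/(G_{\mathrm{tot}}^2T)}$ gives
\[
\frac{1}{1-\kappa\hat\lambda}\Bigl(\tfrac{\Delta_0}{T\eta}+\tfrac{\eta G_{\mathrm{tot}}^2}{2\hat\lambda}\Bigr)=\frac{1}{1-\kappa\hat\lambda}\sqrt{\tfrac{2\Delta_0G_{\mathrm{tot}}^2}{\hat\lambda T}} .
\]
This is within the factor $2$ of \eqref{eq:online-rate}, and the spare factor leaves room for a looser constant in the descent step.

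\textbf{Main obstacle.} The delicate point is the inner-product bound, specifically whether the Clarke subgradient returned by \Cref{lem:oracle} satisfies the weak-convexity subgradient inequality. The objective $J_{\rob}^{\mathrm{on}}$ is nonsmooth because of the max over permutations and is also an expectation under $\pi_\theta$. I would first establish that $J_{\rob}^{\mathrm{on}}+\tfrac{\kappa}{2}\|\cdot\|^2$ is convex on $\Theta$, which is exactly \Cref{prop:online-wc}. Then the Clarke subdifferential of $J_{\rob}^{\mathrm{on}}$ equals the convex subdifferential of $J_{\rob}^{\mathrm{on}}+\tfrac{\kappa}{2}\|\cdot\|^2$ shifted by $-\kappa\theta$, and this justifies the inequality at all points of $\Theta$, including the boundary where the projection acts.
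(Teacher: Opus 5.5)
Your proposal is correct and follows essentially the same Moreau-envelope argument as the paper: feasibility of $\hat\theta_t$ for the envelope at $\theta_{t+1}$, nonexpansive projection, the bound $\langle v_t,\theta_t-\hat\theta_t\rangle\ge(\hat\lambda^{-1}-\kappa)\|\theta_t-\hat\theta_t\|^2$, and telescoping. The only variations are that you derive that inner-product bound from function values (the weak-convexity subgradient inequality plus quadratic growth of the strongly convex prox objective) rather than from the paper's prox optimality condition plus weak monotonicity, and that you keep the sharper coefficient $\eta(1-\kappa\hat\lambda)$ that the paper deliberately halves, so your bound is a factor $2$ better than \eqref{eq:online-rate}; just add the paper's one-line treatment of the degenerate case $\Delta_0=0$, where $\eta=0$ so you cannot divide by $\eta T$, but $\hat\theta(\theta_0)=\theta_0$ and the bound holds trivially.
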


\begin{corollary}[Sample / oracle complexity]
\label{cor:complexity}
$\E\|\nabla F_{\hat\lambda}(\theta_R)\|^2\le\varepsilon$ holds for $T\ge 8\Delta_0 G_{\mathrm{tot}}^2/[\hat\lambda(1-\kappa\hat\lambda)^2\varepsilon^2]$. At $\hat\lambda=1/(2\kappa)$,
\begin{equation}
T = O\!\left(\frac{\kappa\,\Delta_0\,G_{\mathrm{tot}}^2}{\varepsilon^2}\right) \;=\; \tilde O\!\left(\frac{K^6 B_\psi^4(\log K+DB_\psi)^3}{\varepsilon^2}\right).
\label{eq:online-complexity}
\end{equation}
\end{corollary}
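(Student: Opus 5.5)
The proof is pure bookkeeping on top of \Cref{thm:online-rate}, plus explicit bounds on $\kappa$ and $G_{\mathrm{tot}}^2$. First I invert \eqref{eq:online-rate}. The right-hand side is at most $\varepsilon$ iff
\[
\frac{2}{1-\kappa\hat\lambda}\sqrt{\frac{2\Delta_0 G_{\mathrm{tot}}^2}{\hat\lambda T}}\le\varepsilon
\quad\Longleftrightarrow\quad
T\ge\frac{8\,\Delta_0\,G_{\mathrm{tot}}^2}{\hat\lambda(1-\kappa\hat\lambda)^2\varepsilon^2},
\]
which is obtained by squaring both (nonnegative) sides and rearranging. This gives the first claim directly.

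Next I specialize to $\hat\lambda=1/(2\kappa)$, which lies in $(0,1/\kappa)$. Then $1/\hat\lambda=2\kappa$ and $(1-\kappa\hat\lambda)^2=1/4$, so the threshold becomes $T\ge 64\,\kappa\,\Delta_0\,G_{\mathrm{tot}}^2/\varepsilon^2$. This is the first equality in \eqref{eq:online-complexity}.

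For the explicit polynomial form I substitute $\kappa=K^2B_\psi^2(8+\log K+2DB_\psi)=O\big(K^2B_\psi^2(\log K+DB_\psi)\big)$ from \Cref{prop:online-wc}. I then need the scaling $G_{\mathrm{tot}}^2=O\big(K^4B_\psi^2(\log K+DB_\psi)^2\big)$, which I will verify from the definition \eqref{eq:Gtot} and the oracle \eqref{eq:oracle} using four bounds.
\begin{itemize}
\item Under \cref{ass:offline}, the log-ratio scores satisfy $|s_\theta(x,y)|\le 2DB_\psi$: the linear part is bounded by $DB_\psi$, and the log-partition difference is bounded by $DB_\psi$ by Lipschitzness of log-sum-exp.
\item Each PL stage term is at most $\log K$ plus the score range, so $0\le\ell_{\rob}\le\max_\sigma\ell_{\PL}\le K(\log K+4DB_\psi)$.
\item The PL gradients are differences of feature averages, so $\|\nabla\ell_{\PL}\|\le 2KB_\psi$.
\item The score-function term $S_\theta(x,Y)=\sum_{k}\big(\psi(x,y_k)-\E_{\pi_\theta}\psi(x,\cdot)\big)$ has norm at most $2KB_\psi$.
\end{itemize}
The dominant contribution to $\|G\|$ is therefore $\ell_{\rob}\,\|S_\theta\|=O\big(K^2B_\psi(\log K+DB_\psi)\big)$, and squaring it gives the claimed bound on $G_{\mathrm{tot}}^2$. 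Multiplying by $\kappa$ yields $K^6B_\psi^4(\log K+DB_\psi)^3$.

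The only delicate point is $\Delta_0$. The $\tilde O$ in \eqref{eq:online-complexity} should be read as treating the initial Moreau gap $\Delta_0$ as a problem-dependent constant. It can also be bounded crudely by the range of $J_{\rob}^{\mathrm{on}}$ on $\Theta$, since $F_{\hat\lambda}(\theta_0)\le J_{\rob}^{\mathrm{on}}(\theta_0)$. That range is $O\big(K(\log K+DB_\psi)\big)$ by the loss bound above, and if tracked it contributes one extra polynomial factor of this form.

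The main obstacle is matching the exact constant structure of $G_{\mathrm{tot}}^2$ in \eqref{eq:Gtot}. I would first check that the score-correction term dominates and that the cross terms in the second moment are absorbed into the $O(\cdot)$. Everything else is algebra.
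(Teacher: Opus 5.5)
Your proposal is correct and follows the paper's proof. The paper's route is the same: square \eqref{eq:online-rate} to get the threshold, set $\hat\lambda=1/(2\kappa)$ to obtain $T\ge 64\kappa\Delta_0G_{\mathrm{tot}}^2/\varepsilon^2$, then substitute $\kappa$ and $G_{\mathrm{tot}}^2=2C_G^2+2(2KB_\psi C_L)^2$ straight from \eqref{eq:Gtot}. Because $G_{\mathrm{tot}}^2$ is defined explicitly there, you need not re-verify the oracle or its cross terms, and your looser $4DB_\psi$ loss constant is harmless. Your remark on $\Delta_0$ is accurate: the final $\tilde O$ silently treats $\Delta_0$ as a constant, and tracking $\Delta_0\le C_L$ would add one more factor of $K(\log K+DB_\psi)$. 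The paper does the same implicitly.
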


The proofs of \Cref{thm:online-rate,cor:complexity} are in Appendix \ref{app:online-proofs}.

\begin{remark}[Comparison with oracle-robust online alignment~\citep{li2026oraclerobustonlinealignmentlarge}]
\label{rem:online-comparison}
\Cref{thm:online-rate} matches the \(\tilde O(\varepsilon^{-2})\) Moreau-stationarity order of
\citet{li2026oraclerobustonlinealignmentlarge}, but for a different robust object: they study
pairwise BT oracle perturbations, while we study listwise PL ranking-label perturbations, with BT
recovered at \(K=2\). Moreover, in our finite-response log-linear setting, the required PL bounds,
score-function bounds, weak-convexity constant, and stochastic Clarke oracle are derived explicitly
from the listwise structure rather than postulated as abstract regularity/oracle assumptions in~\citet{li2026oraclerobustonlinealignmentlarge}.
\end{remark}

\section{Experiments}
\label{sec:experiments}

\paragraph{Evaluation questions.}
Our experiments test the behavior predicted by the proposed ambiguity model rather than claiming that robustness monotonically improves all metrics. 
We ask whether the robust correction: 
(i) preserves performance when rankings are reliable; 
(ii) reduces the failure modes of PL learning when listwise labels are corrupted; and 
(iii) makes larger candidate lists more reliable in online alignment, where rankings are generated by a reward model.

We evaluate the proposed pointwise-TV robust PL objective in two settings matching our theory:
offline fixed-list ranking with corrupted labels, and online policy-induced alignment with
reward-model-generated rankings. Full experimental details are deferred to
Appendix \ref{app:exp-details}.

We provide anonymized code repositories for the online and offline experiments at
\url{https://anonymous.4open.science/r/robust-listwise-online-09BB}
and
\url{https://anonymous.4open.science/r/robust-listwise-offline-7FF6},
respectively.

\paragraph{Setup.}
We use UltraFeedback~\citep{cui2023ultrafeedback}, where each prompt has four candidate responses,
yielding a natural listwise preference problem with $K=4$. We compare three objectives: 
\emph{Nominal BT}, the standard pairwise DPO/BT baseline using chosen--rejected comparisons;
\emph{Nominal PL}, the non-robust listwise Plackett--Luce objective; and \emph{Robust PL}, our
pointwise-TV robust version of the PL objective. In the online setting, candidate responses are
generated by the current policy and ranked by a frozen reward model; the corresponding evaluation
metrics are defined in the online result table.

\subsection{Offline fixed-list evaluation}
\label{subsec:offline-exp}

The offline setting directly matches our ambiguity model: the prompt and candidate list are fixed,
while the ranking label may be corrupted. Since binary chosen--rejected label flipping has no unique
canonical analogue for a full ranking, we introduce two listwise corruptions. \emph{Near-tie}
corruption swaps the adjacent pair with the closest annotation scores, modeling local ambiguity
between nearly indistinguishable responses. \emph{Top-rank} corruption promotes a lower-ranked
response to the first position, modeling a more severe error because the first PL stage selects from
the full candidate list. The noise level $\epsilon$ is the fraction of corrupted training rankings;
evaluation labels are always clean.

We also include two pairwise robust-DPO baselines, TV-DR-DPO~\citep{mandal2025tvdpo} and
KLDPO~\citep{xu2026robustllmalignmentdistributionally}, reimplemented in the same pipeline using
their loss-level robust DPO objectives. Their hyperparameters are selected from held-out sweeps, with
the sweep results reported in Appendix~\ref{app:pairwise_dro_sweeps}. We evaluate all methods by Kendall's $\tau$ on clean held-out rankings. Each method assigns scalar
scores to the four candidate responses, which induce a predicted ranking; Kendall's $\tau$ measures
the rank correlation with the clean UltraFeedback reference ranking.

\paragraph{Offline observations.}
\Cref{tab:main_kendall_results} shows that Robust PL incurs only a small degradation under clean
labels while providing clear gains when the ranking labels are corrupted. The improvement is most
pronounced under severe top-rank corruption: when $\epsilon=1.0$, Robust PL substantially improves
over Nominal PL for both Qwen3-0.6B and Qwen3-8B. This matches the PL structure: top-rank errors
corrupt the early stagewise choices that dominate the likelihood, whereas near-tie corruption often
preserves much of the global ordering. Thus, in the offline fixed-list setting, Robust PL behaves as a
conservative ranking-label regularizer: it largely preserves clean-label ranking performance while
improving robustness to structured listwise label noise.

Appendix~\ref{app:additional-exp} provides further support: additional clean-label metrics show limited performance degradation, $\rho$-sweeps validate the robustness--over-regularization tradeoff, and Qwen2.5-0.5B/7B results show consistent trends across model families.

\begin{table}[t]
\centering
\caption{
Main offline ranking results under synthetic ranking-label corruption. 
We report Kendall's $\tau$ on clean held-out UltraFeedback rankings; higher is better. 
For each prompt, model scores induce a ranking over the four candidate responses, which is compared
with the clean reference ranking. The corruption level $\epsilon$ applies only to training labels.
}
\label{tab:main_kendall_results}
\resizebox{\textwidth}{!}{
\begin{tabular}{llcccccc}
\toprule
Model & Noise condition 
& Nominal BT (DPO) 
& TV-DR-DPO $(\rho=0.10)$ 
& KLDPO $(\tau=1.00)$ 
& Nominal PL 
& Robust PL $(\rho=0.05)$ 
& Robust PL $(\rho=0.10)$ \\
\midrule
\multirow{5}{*}{Qwen3-0.6B}
& Clean & \textbf{0.298} & 0.282 & 0.277 & 0.288 & 0.276 & 0.284 \\
& near\_tie, $\epsilon=0.4$ & 0.266 & 0.270 & 0.267 & 0.268 & \textbf{0.274} & 0.264 \\
& near\_tie, $\epsilon=1.0$ & 0.261 & 0.244 & 0.246 & 0.244 & \textbf{0.262} & 0.246 \\
& top\_rank, $\epsilon=0.4$ & 0.244 & 0.229 & \textbf{0.263} & 0.243 & 0.251 & 0.236 \\
& top\_rank, $\epsilon=1.0$ & 0.036 & 0.024 & 0.031 & 0.119 & \textbf{0.154} & 0.116 \\
\midrule
\multirow{5}{*}{Qwen3-8B}
& Clean & 0.316 & 0.338 & 0.331 & \textbf{0.362} & 0.347 & 0.340 \\
& near\_tie, $\epsilon=0.4$ & 0.291 & 0.319 & 0.318 & 0.348 & \textbf{0.356} & 0.333 \\
& near\_tie, $\epsilon=1.0$ & 0.281 & 0.310 & 0.310 & 0.322 & \textbf{0.327} & 0.319 \\
& top\_rank, $\epsilon=0.4$ & 0.302 & 0.282 & 0.296 & 0.321 & \textbf{0.329} & 0.293 \\
& top\_rank, $\epsilon=1.0$ & 0.039 & -0.056 & -0.016 & 0.103 & \textbf{0.146} & 0.076 \\
\bottomrule
\end{tabular}
}
\end{table}

\subsection{Online policy-induced alignment}
\label{subsec:online-exp}

We next evaluate the online setting, where the current policy generates candidate responses and a
frozen reward model ranks them to provide the training signal. This setting naturally introduces
ranking-label uncertainty: increasing the list size from $K=2$ to $K=4$ provides richer preference
information, but also requires the reward model to make finer-grained comparisons over more
candidates. Thus, larger $K$ is not automatically beneficial.

Our method is a robust listwise extension of the SAIL-style online preference-optimization pipeline.
We therefore use the setting-matched binary SAIL baseline, recovered by $K=2,\rho=0$. The
non-robust listwise extension is $K=4,\rho=0$, while $\rho>0$ isolates the effect of the proposed
robust ranking-label correction. We do not include PPO-style online RLHF or offline robust-DPO
baselines as direct comparisons because they optimize different signals or robustify different objects. For external evaluation, we follow the LLM-as-a-judge protocol~\citep{zheng2023judging} and use
GPT-4~\citep{openai2023gpt4} as the judge to compare model outputs against the dataset chosen
responses.

\paragraph{Online observations.}
\Cref{tab:qwen3_online_results} shows that simply increasing the candidate-list size is not
sufficient. The non-robust listwise variant $K=4,\rho=0$ does not consistently improve over the
binary baseline $K=2,\rho=0$, suggesting that larger lists provide richer preference information
but also introduce finer-grained reward-model ranking noise. Robustness mitigates this issue. With
$\rho>0$, the $K=4$ variants become more reliable: for Qwen3-0.6B, $K=4,\rho=0.02$ gives the
best reward-model preference and GPT-4 judge scores; for Qwen3-8B, robust $K=4$ variants achieve
the strongest reward-model performance and ranking-agreement metrics. These results support our
main interpretation that Robust PL helps convert larger candidate lists from a noisier supervision
source into useful listwise preference signal. The GPT-4 judge gains further suggest that the
improvement transfers beyond the reward model used for training.

\begin{table}[t]
\centering
\caption{
Online Qwen3 results on the U10 held-out evaluation set. 
$K=2,\rho=0$ is the binary SAIL baseline; $K=4,\rho=0$ is the non-robust listwise extension; 
$\rho>0$ gives the robust variant. 
$\Delta$Reward is the average reward-model gain over the SFT reference, and Rwd\% vs SFT is the
fraction of prompts where the reward model prefers the trained response to the SFT response. 
GPT\% vs Chosen and GPT+Tie\% compare the trained response with the dataset chosen response using
GPT-4 as an external judge. Top-1, Pairwise, and Kendall's $\tau$ measure agreement between the
model-induced ranking and the reward-model ranking over candidate lists. Higher is better for all
metrics.
}
\label{tab:qwen3_online_results}
\resizebox{\textwidth}{!}{
\begin{tabular}{llccccccc}
\toprule
Model & Method & $\Delta$Reward & Rwd\% vs SFT & GPT\% vs Chosen & GPT+Tie\% & Top-1 & Pairwise & Kendall $\tau$ \\
\midrule
\multirow{7}{*}{Qwen3-0.6B}
& SFT reference & 0.0 & 50.0\% & 5.9\% & 14.5\% & 0.315 & 0.586 & 0.155 \\
& RPL, $K=2$, $\rho=0.00$ & +252.5 & 61.7\% & 7.8\% & 15.8\% & 0.350 & 0.568 & 0.121 \\
& RPL, $K=2$, $\rho=0.02$ & +333.0 & 62.9\% & 10.2\% & 17.8\% & 0.373 & 0.591 & \textbf{0.172} \\
& RPL, $K=2$, $\rho=0.05$ & +189.4 & 56.2\% & 7.1\% & 16.9\% & 0.332 & 0.548 & 0.085 \\
& RPL, $K=4$, $\rho=0.00$ & +261.5 & 61.7\% & 5.9\% & 14.8\% & 0.363 & 0.585 & 0.152 \\
& RPL, $K=4$, $\rho=0.02$ & \textbf{+350.1} & \textbf{64.1\%} & \textbf{10.5\%} & \textbf{18.8\%} & \textbf{0.388} & \textbf{0.595} & 0.163 \\
& RPL, $K=4$, $\rho=0.05$ & +341.3 & 61.7\% & 9.4\% & 18.6\% & 0.371 & 0.570 & 0.125 \\
\midrule
\multirow{8}{*}{Qwen3-8B}
& SFT reference & 0.0 & 50.0\% & 24.6\% & 42.6\% & 0.246 & 0.516 & 0.026 \\
& RPL, $K=2$, $\rho=0.00$ & +482.1 & 67.2\% & 25.0\% & 43.8\% & 0.389 & 0.621 & 0.223 \\
& RPL, $K=2$, $\rho=0.02$ & +407.4 & 66.0\% & 25.0\% & 47.5\% & 0.382 & 0.630 & 0.238 \\
& RPL, $K=2$, $\rho=0.05$ & +584.6 & 69.5\% & 27.7\% & \textbf{48.8\%} & 0.393 & 0.624 & 0.225 \\
& RPL, $K=4$, $\rho=0.00$ & +467.0 & 65.6\% & 23.8\% & 45.9\% & 0.391 & 0.621 & 0.218 \\
& RPL, $K=4$, $\rho=0.02$ & +426.7 & 65.2\% & 24.2\% & 46.1\% & 0.401 & 0.624 & 0.226 \\
& RPL, $K=4$, $\rho=0.05$ & \textbf{+610.7} & \textbf{71.9\%} & \textbf{28.9\%} & 48.0\% & 0.413 & 0.632 & 0.243 \\
& RPL, $K=4$, $\rho=0.10$ & +484.0 & 68.4\% & 25.8\% & 46.5\% & \textbf{0.436} & \textbf{0.646} & \textbf{0.265} \\
\bottomrule
\end{tabular}
}
\end{table}

\section{Conclusion}

We introduced a listwise-native notion of preference uncertainty: a pointwise total-variation
ambiguity set on the ranking label over a realized candidate list, combined with the Plackett--Luce
listwise loss. The resulting robust PL objective has three key guarantees. First, the inner worst-case
ranking problem over $K!$ permutations is exactly solved by ascending-score sorting, giving
$O(K\log K)$ evaluation. Second, in the offline fixed-list log-linear setting, the robust objective is
convex and projected stochastic subgradient descent reaches global $\epsilon$-suboptimality with
$O(\epsilon^{-2})$ sample complexity. Third, in the online policy-induced setting, the objective is
weakly convex and admits $\widetilde O(\epsilon^{-2})$ Moreau-envelope stationarity. Empirically, Robust PL largely preserves performance under clean labels, while improving robustness
when the training rankings are corrupted, especially under severe top-rank corruption. In online
alignment, it makes $K=4$ candidate expansion more reliable under reward-model-generated rankings
and improves both reward-model and external GPT-4 judge metrics.

\paragraph{Limitations.}
Empirically,
the robustness radius $\rho$ must be tuned. Future work should study adaptive choices of
$\rho$, richer ambiguity sets, and larger-scale online alignment experiments.

\paragraph{Broader impact.}
Robust listwise alignment can reduce the influence of noisy or inconsistent rankings on LLM
behavior. The techniques studied are primarily methodological and analytical in nature. We do not foresee any immediate negative societal impact arising from this work.

\bibliographystyle{plainnat}
\bibliography{references}

\appendix
\renewcommand{\thetable}{\Alph{section}.\arabic{table}}
\renewcommand{\thefigure}{\Alph{section}.\arabic{figure}}
\renewcommand{\theequation}{\Alph{section}.\arabic{equation}}
\renewcommand{\thetheorem}{\Alph{section}.\arabic{theorem}}
\renewcommand{\thelemma}{\Alph{section}.\arabic{lemma}}
\renewcommand{\theproposition}{\Alph{section}.\arabic{proposition}}
\renewcommand{\thecorollary}{\Alph{section}.\arabic{corollary}}
\renewcommand{\thedefinition}{\Alph{section}.\arabic{definition}}
\renewcommand{\theassumption}{\Alph{section}.\arabic{assumption}}
\renewcommand{\theHtable}{app.\Alph{section}.\arabic{table}}
\renewcommand{\theHfigure}{app.\Alph{section}.\arabic{figure}}
\renewcommand{\theHequation}{app.\Alph{section}.\arabic{equation}}
\renewcommand{\theHtheorem}{app.\Alph{section}.\arabic{theorem}}
\renewcommand{\theHlemma}{app.\Alph{section}.\arabic{lemma}}
\renewcommand{\theHproposition}{app.\Alph{section}.\arabic{proposition}}
\renewcommand{\theHcorollary}{app.\Alph{section}.\arabic{corollary}}
\renewcommand{\theHdefinition}{app.\Alph{section}.\arabic{definition}}
\renewcommand{\theHassumption}{app.\Alph{section}.\arabic{assumption}}
\setcounter{theorem}{0}
\setcounter{lemma}{0}
\setcounter{proposition}{0}
\setcounter{corollary}{0}
\setcounter{definition}{0}
\setcounter{assumption}{0}
\setcounter{equation}{0}
\setcounter{table}{0}
\setcounter{figure}{0}

\section{Additional Related Work}
\label{app:related_work}

A related line of work studies noisy or corrupted pairwise preference labels in offline DPO-style training. 
cDPO~\citep{mitchell2023noisy} and rDPO~\citep{chowdhury2024provably} introduce correction mechanisms for binary preference flipping under pairwise BT/DPO supervision. 
These methods are offline fixed-pair approaches: they correct noisy chosen--rejected labels in a static preference dataset, rather than addressing the online policy-induced setting where candidate lists are sampled from the current policy. 
Their loss forms are also different from ours: cDPO/rDPO correct binary BT/DPO losses, whereas our pointwise-TV formulation optimizes a worst-case PL loss over full ranking labels.

Another related line of work studies distributionally robust preference optimization under perturbations of the empirical data distribution, prompt distribution, or preference distribution. 
For example, \citet{wu2025drdpo} formulate distributionally robust pairwise DPO under perturbations of the dataset distribution; \citet{mandal2025tvdpo} study TV-based ambiguity over the joint training distribution, including the prompt distribution, for pairwise DPO and policy optimization; \citet{xu2026robustllmalignmentdistributionally} consider Wasserstein- and KL-based ambiguity around the empirical preference distribution; and \citet{li2026oraclerobustonlinealignmentlarge} analyze an online oracle-robust alignment setting.

In contrast, we study the listwise Plackett--Luce (PL) setting and robustify a different object: the conditional ranking-label distribution given a candidate list. 
This distinction is substantive. 
When \(K\ge 3\), the resulting robust correction is intrinsically listwise: it involves a PL max-gap over permutations and admits an efficient ascending-sort solution, rather than reducing to a collection of independent pairwise BT corrections. 
Our framework therefore complements prior robust pairwise DPO methods while extending robustness analysis from BT preferences to PL rankings, covering both the offline fixed-list setting and the online listwise alignment setting.

\section{Notation and conventions}
\label{app:notation}

Throughout the appendix, \(\|\cdot\|\) denotes the Euclidean norm on the parameter space and
\(\|\cdot\|_{\mathrm{op}}\) denotes the spectral norm. We write \(\Sk\) for the symmetric group on
\(K\) symbols and \(\partial_C f\) for the Clarke subdifferential of \(f\). For convex functions,
\(\partial_C f\) reduces to the usual convex subdifferential. For weakly convex nonsmooth functions,
we use the Clarke subdifferential together with the standard Moreau-envelope calculus for proper
lower semicontinuous weakly convex objectives. Throughout, \(\ell_{\PL}\) is defined in
\eqref{eq:pl-loss} and \(\ell_{\rob}\) in \eqref{eq:robust-loss}.

\paragraph{Convention check (\(\beta\)).}
In the online analysis we use the unscaled log-ratio score
\[
s_\theta(x,y)
:=
\log\frac{\pi_\theta(y\mid x)}{\pi_{\mathrm{ref}}(y\mid x)}.
\]
Some DPO conventions instead use the scaled score
\[
g_\theta(x,y):=\beta s_\theta(x,y).
\]
All results below are stated for the unscaled convention. Under the scaled convention, the
stagewise score gaps are multiplied by \(\beta\), gradients by \(\beta\), and Hessians by
\(\beta^2\). Accordingly, the constants become
\[
C_L=K(\log K+2\beta DB_\psi),\qquad
C_G=2\beta KB_\psi,\qquad
C_H=\beta^2KB_\psi^2,
\]
and
\[
\kappa
=
K^2\beta^2B_\psi^2
\bigl(8+\log K+2\beta DB_\psi\bigr).
\]
\section{\texorpdfstring{Full proof of the pointwise TV decomposition (\Cref{lem:tv-decomp})}{Full proof of the pointwise TV decomposition (Lemma 1)}}
\label{app:tv-proof}

\begin{proof}
Write $\ell(\sigma):=\ell_{\PL}(\theta;x,y_{1:K},\sigma)$. We solve
$\max_{P\in\Delta(\Sk)}\sum_\sigma P(\sigma)\ell(\sigma)$ subject to $\TV(P,\delta_{\sigma^\star})\le\rho$.

Expanding the TV distance and using $\delta_{\sigma^\star}(\sigma^\star)=1$ and $\delta_{\sigma^\star}(\sigma)=0$ for $\sigma\ne\sigma^\star$,
\begin{align*}
\TV(P,\delta_{\sigma^\star}) &= \tfrac12\big(|P(\sigma^\star)-1|+\sum_{\sigma\ne\sigma^\star}P(\sigma)\big) \\
&= \tfrac12\big((1-P(\sigma^\star))+(1-P(\sigma^\star))\big) = 1-P(\sigma^\star).
\end{align*}
Hence $\TV\le\rho \iff P(\sigma^\star)\ge 1-\rho$. Let $\epsilon:=1-P(\sigma^\star)\in[0,\rho]$. Then $\sum_{\sigma\ne\sigma^\star}P(\sigma)=\epsilon$ and
\[
\E_{\sigma\sim P}[\ell(\sigma)] = (1-\epsilon)\ell(\sigma^\star) + \sum_{\sigma\ne\sigma^\star}P(\sigma)\ell(\sigma).
\]
For fixed $\epsilon$, the adversary maximizes by allocating all mass $\epsilon$ to $\argmax_{\sigma\ne\sigma^\star}\ell(\sigma)$, giving
\[
\E_{\sigma\sim P}[\ell(\sigma)] \le (1-\epsilon)\ell(\sigma^\star)+\epsilon\max_{\sigma\in\Sk}\ell(\sigma) = \ell(\sigma^\star)+\epsilon\cdot\Big(\max_{\sigma}\ell(\sigma)-\ell(\sigma^\star)\Big).
\]
Since $\max_\sigma\ell(\sigma)\ge\ell(\sigma^\star)$ always (as $\sigma^\star\in\Sk$), the parenthetical is $\ge 0$, so the linear function of $\epsilon$ is monotone non-decreasing and is maximized at $\epsilon=\rho$. Substituting yields \eqref{eq:tv-decomp}.
\end{proof}

\section{\texorpdfstring{Full proof of the worst-case sorting theorem (\Cref{thm:worst-perm})}{Full proof of the worst-case sorting theorem (Theorem 1)}}
\label{app:sort-proof}

\begin{proof}
Write $g_i:=g_\theta(x,y_i)$. From \eqref{eq:pl-loss},
\[
\ell_{\PL}(\theta;x,y_{1:K},\sigma) = -\sum_{i=1}^K g_{\sigma_i} + \sum_{i=1}^K \log\!\Big(\sum_{j=i}^K e^{g_{\sigma_j}}\Big).
\]
The first sum $-\sum_i g_{\sigma_i}$ is permutation-invariant; maximizing $\ell_{\PL}$ in $\sigma$ is equivalent to maximizing $F(\sigma)=\sum_{i=1}^K\log\sum_{j=i}^K e^{g_{\sigma_j}}$.

Suppose $\sigma$ has an adjacent inversion at position $t$: $a:=g_{\sigma_t}>b:=g_{\sigma_{t+1}}$. Set $R:=\sum_{j\ge t+2}e^{g_{\sigma_j}}\ge 0$. Only the $t$-th and $(t+1)$-st suffix terms can change under the swap of positions $t,t+1$: all later suffixes contain the same multiset and are unchanged, all earlier suffixes contain $\{a,b\}$ jointly and are also unchanged. The $t$-th term is $\log(e^a+e^b+R)$ before the swap and $\log(e^b+e^a+R)$ after the swap, hence equal. The $(t+1)$-st term changes from $\log(e^b+R)$ before the swap to $\log(e^a+R)$ after the swap. Since $a>b$ and $R\ge 0$, $\log(e^a+R)>\log(e^b+R)$, so $F$ strictly increases.

By repeated adjacent swaps that fix local inversions, every permutation can be transformed into the unique inversion-free permutation, the nondecreasing-score order. At each step $F$ strictly increases (modulo equal scores, where the swap leaves $F$ unchanged). Hence the ascending-score permutation attains the maximum. With ties, any consistent within-group ordering achieves the same maximum. Computing such a $\sigma_{\mathrm{worst}}$ requires only sorting $K$ scores, which costs $O(K\log K)$.
\end{proof}

\section{Offline theory: full proofs}
\label{app:offline}

\subsection{Convexity (\Cref{prop:offline-convex})}
\label{app:offline-convex}

\begin{proof}
Under \cref{ass:offline}, the induced PL score is affine in \(\theta\). In particular, up to a
prompt-dependent additive term that cancels in the PL loss, we may write
\[
s_\theta(x,y)=(\theta-\theta_{\rm ref})^\top\psi(x,y).
\]
For a fixed ranking \(\sigma\in\Sk\), the PL loss is
\[
\ell_{\PL}(\theta;x,Y,\sigma)
=
-\sum_{i=1}^K s_\theta(x,y_{\sigma_i})
+
\sum_{i=1}^K
\log\sum_{j=i}^K
\exp\big(s_\theta(x,y_{\sigma_j})\big).
\]
The first term is affine in \(\theta\). Each term in the second sum is a log-sum-exp of affine
functions of \(\theta\), and is therefore convex. Hence
\[
\theta\mapsto \ell_{\PL}(\theta;x,Y,\sigma)
\]
is convex for every fixed \((x,Y,\sigma)\).

By \Cref{lem:tv-decomp}, the robust loss admits the decomposition
\[
\ell_{\rob}(\theta;x,Y,\sigma^\star)
=
(1-\rho)\ell_{\PL}(\theta;x,Y,\sigma^\star)
+
\rho\max_{\sigma\in\Sk}
\ell_{\PL}(\theta;x,Y,\sigma).
\]
The first term is convex, and the second term is a pointwise maximum of convex functions, hence
convex. Since \(\rho\in[0,1]\), \(\ell_{\rob}(\cdot;x,Y,\sigma^\star)\) is convex. Finally, the
offline objective
\[
J_{\rob}(\theta)
=
\E_{\xi}[\ell_{\rob}(\theta;\xi)]
\]
is an expectation of convex functions and therefore convex on \(\Theta\).
\end{proof}

\subsection{Subgradient bound (used in \Cref{thm:offline-rate})}

\begin{lemma}[Bounded sample subgradient]
\label{lem:offline-subg-bound}
Suppose \cref{ass:offline} holds. Then for any sample \(\xi=(x,Y,\sigma^\star)\), any
\(\theta\in\Theta\), and any
\[
g\in\partial_\theta\ell_{\rob}(\theta;\xi),
\]
we have
\[
\|g\|_2\le 2KB_\psi.
\]
\end{lemma}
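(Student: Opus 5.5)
Under \cref{ass:offline} the scores are affine, $s_\theta(x,y_i)=(\theta-\theta_{\rm ref})^\top\psi(x,y_i)+c(x)$, so $\nabla_\theta s_\theta(x,y_i)=\psi(x,y_i)$ up to a common shift that cancels in $\ell_{\PL}$. The plan is to bound the gradient of $\ell_{\PL}$ for each fixed ranking, and then pass to the robust loss through \Cref{lem:tv-decomp} and the structure of the subdifferential of a finite max.

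\emph{Gradient of the PL loss for a fixed ranking.} For fixed $\sigma\in\Sk$, differentiating \eqref{eq:pl-loss} gives
\[
\nabla_\theta\ell_{\PL}(\theta;x,Y,\sigma)=\sum_{i=1}^K\Bigl(-\psi_{\sigma_i}+\sum_{j=i}^K p^{(i)}_j\psi_{\sigma_j}\Bigr),
\]
where $\psi_k:=\psi(x,y_k)$ and $p^{(i)}_j=e^{s_{\sigma_j}}/\sum_{m\ge i}e^{s_{\sigma_m}}$ is a probability vector over the suffix $\{i,\dots,K\}$. Each stage term is the difference of a feature vector and a convex combination of feature vectors, so its norm is at most $2B_\psi$. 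Summing over the $K$ stages gives $\|\nabla\ell_{\PL}\|\le 2KB_\psi$.

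\emph{Subgradients of the robust loss.} By \Cref{lem:tv-decomp}, $\ell_{\rob}=(1-\rho)\ell_{\PL}(\sigma^\star)+\rho\max_\sigma\ell_{\PL}(\sigma)$. Each $\ell_{\PL}(\cdot;\sigma)$ is smooth and convex (\Cref{prop:offline-convex}). By Danskin's theorem for a finite max of convex differentiable functions, $\partial\max_\sigma\ell_{\PL}(\sigma)=\conv\{\nabla\ell_{\PL}(\sigma):\sigma \text{ active}\}$. The sum rule for convex functions (both terms are finite everywhere) then gives
\[
\partial\ell_{\rob}=(1-\rho)\nabla\ell_{\PL}(\sigma^\star)+\rho\,\conv\{\nabla\ell_{\PL}(\sigma)\}.
\]
Any $g$ in this set is a convex combination of vectors of norm at most $2KB_\psi$. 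The triangle inequality therefore yields $\|g\|_2\le 2KB_\psi$, using $\rho\in[0,1]$.

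The main point needing care is justifying the exact subdifferential formula, since it is the route to controlling every $g$ rather than just one selection. Alternatively, one can avoid computing the subdifferential exactly and argue via Lipschitz continuity. Since $\ell_{\rob}$ is a convex combination of maxima of $2KB_\psi$-Lipschitz functions, it is $2KB_\psi$-Lipschitz on an open neighborhood. For convex functions this bounds every subgradient at interior points. At boundary points of $\Theta$, the function is still defined and Lipschitz on all of $\mathbb R^{d_p}$, because the gradient bound did not use $\|\theta\|\le B$. Hence the bound holds everywhere.

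A sharper constant, such as $2(K-1)B_\psi$ since the last stage contributes zero, is available but not needed.
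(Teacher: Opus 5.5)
Your proof is correct and follows the paper's argument: a stagewise bound of $2B_\psi$ on each PL gradient summed to $2KB_\psi$, then \Cref{lem:tv-decomp} together with the convex-hull (Danskin) characterization of the finite-max subdifferential, so that every subgradient is a convex combination of PL gradients. Your Lipschitz-based alternative and the sharper constant $2(K-1)B_\psi$ are also valid; the sharper constant holds because the last stage's loss is identically zero.
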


\begin{proof}
Fix \((x,Y,\sigma)\), where \(Y=(y_1,\ldots,y_K)\). Write
\[
s_i:=s_\theta(x,y_{\sigma_i}),
\qquad
\psi_i:=\psi(x,y_{\sigma_i}).
\]
The PL loss decomposes into stagewise terms:
\[
\ell_{\PL}(\theta;x,Y,\sigma)
=
\sum_{i=1}^K
\left[
-s_i+\log\sum_{j=i}^K\exp(s_j)
\right].
\]
For each stage \(i\), define the softmax weights
\[
p_{j\mid i}(\theta)
:=
\frac{\exp(s_j)}
{\sum_{m=i}^K\exp(s_m)},
\qquad j=i,\ldots,K.
\]
Differentiating the \(i\)-th stage gives
\[
\nabla_\theta \ell_i(\theta)
=
-\psi_i
+
\sum_{j=i}^K p_{j\mid i}(\theta)\psi_j.
\]
Since the weights \(p_{j\mid i}\) form a probability distribution and
\(\|\psi(x,y)\|\le B_\psi\), we have
\[
\left\|
\sum_{j=i}^K p_{j\mid i}(\theta)\psi_j
\right\|
\le
\sum_{j=i}^K p_{j\mid i}(\theta)\|\psi_j\|
\le
B_\psi.
\]
Therefore
\[
\|\nabla_\theta \ell_i(\theta)\|
\le
\|\psi_i\|
+
\left\|
\sum_{j=i}^K p_{j\mid i}(\theta)\psi_j
\right\|
\le
2B_\psi.
\]
Summing over \(i=1,\ldots,K\), we obtain
\[
\|\nabla_\theta\ell_{\PL}(\theta;x,Y,\sigma)\|
\le
\sum_{i=1}^K
\|\nabla_\theta\ell_i(\theta)\|
\le
2KB_\psi.
\]

Now consider the robust loss. By \Cref{lem:tv-decomp},
\[
\ell_{\rob}
=
(1-\rho)\ell_{\PL}(\sigma^\star)
+
\rho\max_{\sigma\in\Sk}\ell_{\PL}(\sigma).
\]
The subdifferential of the finite maximum is the convex hull of active PL gradients. Hence every
\(g\in\partial_\theta\ell_{\rob}(\theta;\xi)\) can be written as a convex combination of
\[
\nabla_\theta\ell_{\PL}(\theta;x,Y,\sigma^\star)
\]
and active gradients
\[
\nabla_\theta\ell_{\PL}(\theta;x,Y,\sigma),
\qquad
\sigma\in
\argmax_{\sigma'\in\Sk}
\ell_{\PL}(\theta;x,Y,\sigma').
\]
Each such PL gradient has norm at most \(2KB_\psi\). Since convex combinations preserve the same
norm bound, we conclude that
\[
\|g\|_2\le 2KB_\psi.
\]
\end{proof}

\subsection{Subgradient rate (\Cref{thm:offline-rate})}
\label{app:offline-rate}

\begin{proof}
Since \(\Theta\) is compact and \(J_{\rob}\) is convex and continuous, a minimizer exists. Let
\[
\theta^\star\in\argmin_{\theta\in\Theta}J_{\rob}(\theta).
\]
The projected stochastic subgradient update is
\[
\theta_{t+1}
=
\Pi_\Theta(\theta_t-\eta\widehat g_t).
\]
By non-expansiveness of the Euclidean projection onto the closed convex set \(\Theta\),
\[
\|\theta_{t+1}-\theta^\star\|^2
\le
\|\theta_t-\eta\widehat g_t-\theta^\star\|^2.
\]
Expanding the right-hand side gives
\[
\|\theta_{t+1}-\theta^\star\|^2
\le
\|\theta_t-\theta^\star\|^2
-
2\eta
\langle \widehat g_t,\theta_t-\theta^\star\rangle
+
\eta^2\|\widehat g_t\|^2.
\]
Taking conditional expectation given \(\theta_t\), and writing
\[
h_t:=\E[\widehat g_t\mid\theta_t]\in\partial J_{\rob}(\theta_t),
\]
we obtain
\[
\E[
\|\theta_{t+1}-\theta^\star\|^2
\mid
\theta_t
]
\le
\|\theta_t-\theta^\star\|^2
-
2\eta
\langle h_t,\theta_t-\theta^\star\rangle
+
\eta^2
\E[\|\widehat g_t\|^2\mid\theta_t].
\]
By convexity of \(J_{\rob}\),
\[
J_{\rob}(\theta_t)-J_{\rob}(\theta^\star)
\le
\langle h_t,\theta_t-\theta^\star\rangle.
\]
Using the assumed mini-batch second-moment bound
\[
\E[\|\widehat g_t\|^2\mid\theta_t]
\le
4K^2B_\psi^2+\frac{\sigma_g^2}{B_s},
\]
we get
\[
2\eta
\big(
J_{\rob}(\theta_t)-J_{\rob}(\theta^\star)
\big)
\le
\|\theta_t-\theta^\star\|^2
-
\E[
\|\theta_{t+1}-\theta^\star\|^2
\mid
\theta_t
]
+
\eta^2
\left(
4K^2B_\psi^2+\frac{\sigma_g^2}{B_s}
\right).
\]
Taking total expectation and summing over \(t=0,\ldots,T-1\), we obtain
\[
2\eta
\sum_{t=0}^{T-1}
\E[
J_{\rob}(\theta_t)-J_{\rob}(\theta^\star)
]
\le
\|\theta_0-\theta^\star\|^2
+
T\eta^2
\left(
4K^2B_\psi^2+\frac{\sigma_g^2}{B_s}
\right).
\]
Since \(\Theta=\{\theta:\|\theta\|\le B\}\), both \(\theta_0\) and \(\theta^\star\) belong to
\(\Theta\), and hence
\[
\|\theta_0-\theta^\star\|\le 2B.
\]
Therefore
\[
\frac1T
\sum_{t=0}^{T-1}
\E[
J_{\rob}(\theta_t)-J_{\rob}(\theta^\star)
]
\le
\frac{2B^2}{\eta T}
+
\frac{\eta}{2}
\left(
4K^2B_\psi^2+\frac{\sigma_g^2}{B_s}
\right).
\]
By convexity of \(J_{\rob}\) and Jensen's inequality, for
\[
\bar\theta_T
:=
\frac1T
\sum_{t=0}^{T-1}
\theta_t,
\]
we have
\[
\E[J_{\rob}(\bar\theta_T)]
\le
\frac1T
\sum_{t=0}^{T-1}
\E[J_{\rob}(\theta_t)].
\]
Hence
\[
\E[J_{\rob}(\bar\theta_T)]
-
J_{\rob}(\theta^\star)
\le
\frac{2B^2}{\eta T}
+
\frac{\eta}{2}
\left(
4K^2B_\psi^2+\frac{\sigma_g^2}{B_s}
\right).
\]
Choosing
\[
\eta
=
\frac{2B}
{\sqrt{
T\left(
4K^2B_\psi^2+\sigma_g^2/B_s
\right)
}}
\]
balances the two terms and gives
\[
\E[J_{\rob}(\bar\theta_T)]
-
\min_{\theta\in\Theta}J_{\rob}(\theta)
\le
\frac{
2B
\sqrt{
4K^2B_\psi^2+\sigma_g^2/B_s
}
}
{\sqrt T}.
\]
Solving for \(T\) to achieve \(\varepsilon\)-suboptimality gives
\[
T
=
O\left(
\frac{
4K^2B_\psi^2+\sigma_g^2/B_s
}{\varepsilon^2}
\right).
\]
With fixed \(B_s=\Theta(1)\) independent of \(\varepsilon\), the total sample complexity satisfies
\[
B_sT=O(\varepsilon^{-2}).
\]
\end{proof}

\section{Online theory: full proofs}
\label{app:online-proofs}

The proof strategy in this section follows the weakly-convex stochastic-subgradient framework~\citep{li2026oraclerobustonlinealignmentlarge, clarke1990optimization}. However, our listwise setting introduces an additional nonsmooth finite-max structure: the robust correction involves \(\max_{\sigma\in\mathfrak S_K}\ell_{\PL}(\theta;x,Y,\sigma)\). Consequently, beyond the standard score-function correction for policy-induced sampling, we must explicitly control the Clarke subdifferential of this finite maximum and fix a measurable tie-breaking rule for nonunique worst-case rankings.

We use the notation of \Cref{ass:online}: $\psi$ has $\|\psi\|\le B_\psi$, $\Theta$ is closed, convex, bounded with diameter $D$, $\pi_\theta\propto\exp(\theta^\top\psi)$. Recall $s_\theta(x,y):=\log[\pi_\theta/\pi_{\mathrm{ref}}]=(\theta-\theta_{\mathrm{ref}})^\top\psi(x,y)+\mathrm{const}$ and $S_\theta(x,Y):=\nabla_\theta\log P_{\pi_\theta}(Y\mid x)$.

\subsection{Score-function bounds and PL bounds}

\begin{lemma}[Score-function bounds]
\label{lem:score-bounds}
Suppose \cref{ass:offline,ass:online} hold,
\[
\nabla_\theta\log\pi_\theta(y\mid x)
=
\psi(x,y)-\bar\psi_\theta(x),
\qquad
\bar\psi_\theta(x)
:=
\E_{y'\sim\pi_\theta(\cdot\mid x)}
[
\psi(x,y')
].
\]
Consequently,
\[
\|\nabla_\theta\log\pi_\theta(y\mid x)\|
\le
2B_\psi,
\qquad
\|S_\theta(x,Y)\|
\le
2KB_\psi,
\qquad
\|\nabla_\theta S_\theta(x,Y)\|_\mathrm{op}
\le
KB_\psi^2.
\]
\end{lemma}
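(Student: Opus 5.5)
The plan is a direct computation from the log-linear form in \Cref{ass:offline}. Writing $\log\pi_\theta(y\mid x)=\theta^\top\psi(x,y)-\log\sum_{y'\in\mathcal Y}\exp(\theta^\top\psi(x,y'))$, the response space is finite, so differentiation passes through the finite sum. The gradient of the log-partition term is the softmax-weighted average of the features, which is exactly $\bar\psi_\theta(x)$. This gives the stated identity $\nabla_\theta\log\pi_\theta(y\mid x)=\psi(x,y)-\bar\psi_\theta(x)$.

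For the first norm bound, $\bar\psi_\theta(x)$ is a convex combination of vectors of norm at most $B_\psi$. By the triangle inequality (Jensen for the norm), $\|\bar\psi_\theta(x)\|\le B_\psi$. Hence $\|\psi(x,y)-\bar\psi_\theta(x)\|\le 2B_\psi$.

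For the list score, the candidates are sampled iid, so $P_{\pi_\theta}(Y\mid x)=\prod_{k=1}^K\pi_\theta(y_k\mid x)$. Therefore
\[
S_\theta(x,Y)=\sum_{k=1}^K\bigl(\psi(x,y_k)-\bar\psi_\theta(x)\bigr),
\]
and summing the per-response bound gives $\|S_\theta(x,Y)\|\le 2KB_\psi$.

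For the Jacobian, $\psi(x,y_k)$ does not depend on $\theta$, so
\[
\nabla_\theta S_\theta(x,Y)=-K\,\nabla_\theta\bar\psi_\theta(x)=-K\,\mathrm{Cov}_{y'\sim\pi_\theta(\cdot\mid x)}\bigl[\psi(x,y')\bigr].
\]
This covariance identity is the standard Hessian of the log-partition function, verified by differentiating the softmax weights $\pi_\theta(y'\mid x)$, whose gradient is $\pi_\theta(y'\mid x)(\psi(x,y')-\bar\psi_\theta(x))$. The main step needing care is the operator-norm bound on this covariance. The covariance is positive semidefinite, so its operator norm equals $\sup_{\|u\|=1}\mathrm{Var}(u^\top\psi)$. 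Each variance is at most $\E[(u^\top\psi)^2]\le B_\psi^2$, because the variance never exceeds the raw second moment. Note that the naive bound via $\|\psi-\bar\psi\|\le 2B_\psi$ would only give $4B_\psi^2$, so the second-moment argument is needed to reach the stated constant. Multiplying by $K$ yields $\|\nabla_\theta S_\theta(x,Y)\|_{\mathrm{op}}\le KB_\psi^2$.
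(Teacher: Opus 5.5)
Your proposal is correct and matches the paper's proof step for step. Both arguments use the same chain: the log-partition gradient gives $\psi-\bar\psi_\theta$, Jensen gives $\|\bar\psi_\theta\|\le B_\psi$, the iid product turns $S_\theta$ into a $K$-fold sum, and the Jacobian $-K\,\mathrm{Cov}_{\pi_\theta}[\psi]$ is bounded via $\mathrm{Var}(v^\top\psi)\le\E[(v^\top\psi)^2]\le B_\psi^2$ (your remark that the centered bound would only give $4B_\psi^2$ correctly explains why the second-moment step is needed).
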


\begin{proof}
Fix \(x\) and write
\[
A_\theta(x)
:=
\log\sum_{y'\in\mathcal Y}
\exp(\theta^\top\psi(x,y')).
\]
Under the log-linear policy class,
\[
\log\pi_\theta(y\mid x)
=
\theta^\top\psi(x,y)-A_\theta(x).
\]
Differentiating gives
\[
\nabla_\theta\log\pi_\theta(y\mid x)
=
\psi(x,y)-\nabla_\theta A_\theta(x).
\]
Moreover,
\[
\nabla_\theta A_\theta(x)
=
\frac{\sum_{y'\in\mathcal Y}
\exp(\theta^\top\psi(x,y'))\psi(x,y')}
{\sum_{y'\in\mathcal Y}
\exp(\theta^\top\psi(x,y'))}
=
\E_{y'\sim\pi_\theta(\cdot\mid x)}
[
\psi(x,y')
]
=
\bar\psi_\theta(x).
\]
Therefore,
\[
\nabla_\theta\log\pi_\theta(y\mid x)
=
\psi(x,y)-\bar\psi_\theta(x).
\]

Since \(\|\psi(x,y)\|\le B_\psi\) for every \((x,y)\), Jensen's inequality gives
\[
\|\bar\psi_\theta(x)\|
=
\left\|
\E_{y'\sim\pi_\theta(\cdot\mid x)}
[
\psi(x,y')
]
\right\|
\le
\E_{y'\sim\pi_\theta(\cdot\mid x)}
[
\|\psi(x,y')\|
]
\le
B_\psi.
\]
Hence
\[
\|\nabla_\theta\log\pi_\theta(y\mid x)\|
\le
\|\psi(x,y)\|+\|\bar\psi_\theta(x)\|
\le
2B_\psi.
\]

Now let \(Y=(y_1,\ldots,y_K)\). Since the list is sampled iid from
\(\pi_\theta(\cdot\mid x)\),
\[
P_{\pi_\theta}(Y\mid x)
=
\prod_{i=1}^K\pi_\theta(y_i\mid x),
\]
and therefore
\[
S_\theta(x,Y)
:=
\nabla_\theta\log P_{\pi_\theta}(Y\mid x)
=
\sum_{i=1}^K
\nabla_\theta\log\pi_\theta(y_i\mid x).
\]
Using the bound above,
\[
\|S_\theta(x,Y)\|
\le
\sum_{i=1}^K
\|\nabla_\theta\log\pi_\theta(y_i\mid x)\|
\le
2KB_\psi.
\]

It remains to bound \(\nabla_\theta S_\theta(x,Y)\). Since
\[
\nabla_\theta\log\pi_\theta(y\mid x)
=
\psi(x,y)-\bar\psi_\theta(x),
\]
and \(\psi(x,y)\) is independent of \(\theta\),
\[
\nabla_\theta^2\log\pi_\theta(y\mid x)
=
-\nabla_\theta\bar\psi_\theta(x).
\]
We compute \(\nabla_\theta\bar\psi_\theta(x)\). For any vector \(v\in\mathbb R^d\),
\[
\nabla_\theta
\E_{y'\sim\pi_\theta(\cdot\mid x)}
[
\psi(x,y')
]
=
\E_{y'\sim\pi_\theta(\cdot\mid x)}
[
(\psi(x,y')-\bar\psi_\theta(x))\psi(x,y')^\top
],
\]
which is the covariance matrix
\[
\operatorname{Cov}_{y'\sim\pi_\theta(\cdot\mid x)}
[
\psi(x,y')
]
=
\E[
(\psi-\bar\psi_\theta)(\psi-\bar\psi_\theta)^\top
].
\]
Thus
\[
\nabla_\theta^2\log\pi_\theta(y\mid x)
=
-
\operatorname{Cov}_{y'\sim\pi_\theta(\cdot\mid x)}
[
\psi(x,y')
].
\]
Consequently,
\[
\nabla_\theta S_\theta(x,Y)
=
\sum_{i=1}^K
\nabla_\theta^2\log\pi_\theta(y_i\mid x)
=
-
K\,
\operatorname{Cov}_{y'\sim\pi_\theta(\cdot\mid x)}
[
\psi(x,y')
],
\]
because the covariance term depends on \(x\) and \(\theta\), but not on the particular sampled
response \(y_i\).

Finally, for any unit vector \(v\),
\[
v^\top
\operatorname{Cov}_{y'\sim\pi_\theta(\cdot\mid x)}
[
\psi(x,y')
]
v
=
\operatorname{Var}_{y'\sim\pi_\theta(\cdot\mid x)}
(
v^\top\psi(x,y')
)
\le
\E[
(v^\top\psi(x,y'))^2
]
\le
B_\psi^2.
\]
Therefore,
\[
\left\|
\operatorname{Cov}_{y'\sim\pi_\theta(\cdot\mid x)}
[
\psi(x,y')
]
\right\|_\mathrm{op}
\le
B_\psi^2.
\]
Hence
\[
\|\nabla_\theta S_\theta(x,Y)\|_\mathrm{op}
\le
K B_\psi^2.
\]
\end{proof}

\begin{lemma}[Convexity, magnitude, curvature of \(\ell_{\PL}\)]
\label{lem:PL-bounds}
Suppose \cref{ass:offline,ass:online} hold, for every fixed \((x,Y,\sigma)\), the map
\(\theta\mapsto\ell_{\PL}(\theta;x,Y,\sigma)\) is convex and hence Clarke regular. Moreover, on
\(\Theta\),
\[
0\le\ell_{\PL}(\theta;x,Y,\sigma)\le C_L,
\qquad
\|\nabla_\theta\ell_{\PL}(\theta;x,Y,\sigma)\|\le C_G,
\qquad
0\preceq \nabla_\theta^2\ell_{\PL}(\theta;x,Y,\sigma)\preceq C_H I,
\]
where
\[
C_L = K(\log K+2DB_\psi),
\qquad
C_G = 2KB_\psi,
\qquad
C_H = KB_\psi^2.
\]
\end{lemma}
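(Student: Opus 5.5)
Fix $(x,Y,\sigma)$ and use the stagewise decomposition from the proof of \Cref{lem:offline-subg-bound}: $\ell_{\PL}=\sum_{i=1}^K \ell_i$ with $\ell_i(\theta)=-s_i+\log\sum_{j=i}^K e^{s_j}$, $s_i=s_\theta(x,y_{\sigma_i})$. Under \cref{ass:offline}, $s_\theta(x,y)=(\theta-\theta_{\rm ref})^\top\psi(x,y)+c(x)$, where the prompt-only term $c(x)=A_{\theta_{\rm ref}}(x)-A_\theta(x)$ is common to all candidates. Each $\ell_i$ is invariant to a common shift of the scores, so $c(x)$ cancels and every $\ell_i$ is a log-sum-exp of affine maps minus an affine map. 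Convexity follows exactly as in \Cref{prop:offline-convex}. A convex finite function is Clarke regular (standard, \citet{clarke1990optimization}), which gives the first claim.

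\textbf{Magnitude.} Each $\ell_i=-\log p_{i\mid i}\ge 0$, since $p_{i\mid i}\in(0,1]$ is a softmax weight; this gives $\ell_{\PL}\ge 0$. For the upper bound, write $\ell_i=\log\sum_{j=i}^K e^{s_j-s_i}\le \log(K-i+1)+\max_j (s_j-s_i)$. The score gap is $(\theta-\theta_{\rm ref})^\top(\psi_j-\psi_i)$, which by Cauchy--Schwarz is at most $\|\theta-\theta_{\rm ref}\|\,\|\psi_j-\psi_i\|\le 2DB_\psi$. Summing over $i$ gives $\ell_{\PL}\le K(\log K+2DB_\psi)=C_L$.

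\textbf{Gradient.} The bound $C_G=2KB_\psi$ is exactly the computation already carried out in \Cref{lem:offline-subg-bound}. I would simply cite it, noting that the $\theta$-dependence of $c(x)$ drops out so the gradient is $-\psi_i+\sum_j p_{j\mid i}\psi_j$.

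\textbf{Curvature.} Differentiate again. The affine part has zero Hessian, and for the log-sum-exp part the Hessian of stage $i$ is $\mathrm{Cov}_{j\sim p_{\cdot\mid i}}[\psi_j]$. This is PSD, which gives the lower bound $0\preceq\nabla^2\ell_{\PL}$. For the upper bound, for any unit $v$ we have $v^\top\mathrm{Cov}\,v=\mathrm{Var}(v^\top\psi_j)\le \E(v^\top\psi_j)^2\le B_\psi^2$, the same argument as in \Cref{lem:score-bounds}. Summing the $K$ stages gives $\nabla^2\ell_{\PL}\preceq KB_\psi^2 I=C_H I$.

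The only delicate point is justifying that the $\theta$-dependent normalizer $A_\theta(x)$ inside $s_\theta$ does not contribute to the gradient or Hessian; the common-shift invariance of each stage handles it. Everything else is routine.
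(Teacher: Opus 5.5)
Your proof is correct and follows the paper's argument essentially step by step: the stagewise decomposition, log-sum-exp convexity with Clarke regularity, the $2DB_\psi$ score-gap bound for the magnitude, softmax-averaged features for the gradient, and stagewise covariances for the Hessian. One small difference is in your favor: you note that the common shift $c(x)=A_{\theta_{\rm ref}}(x)-A_\theta(x)$ actually depends on $\theta$ but cancels by shift invariance, which is more careful than the paper's ``additive term depending only on $x$''.
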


\begin{proof}
Fix \((x,Y,\sigma)\), where \(Y=(y_1,\ldots,y_K)\). Recall that the induced affine score can be
written as
\[
s_\theta(x,y)
=
(\theta-\theta_{\rm ref})^\top\psi(x,y),
\]
up to an additive term depending only on \(x\), which cancels in the PL loss. For notational
simplicity, write
\[
s_i:=s_\theta(x,y_{\sigma_i}),
\qquad
\psi_i:=\psi(x,y_{\sigma_i}).
\]
The PL loss decomposes into \(K\) stagewise losses:
\[
\ell_{\PL}(\theta;x,Y,\sigma)
=
\sum_{i=1}^K \ell_i(\theta),
\]
where
\[
\ell_i(\theta)
:=
-s_i+\log\sum_{j=i}^K \exp(s_j).
\]

\paragraph{Convexity.}
For each \(i\), the term \(-s_i\) is affine in \(\theta\), and
\[
\theta\mapsto \log\sum_{j=i}^K \exp(s_\theta(x,y_{\sigma_j}))
\]
is a log-sum-exp of affine functions, hence convex. Therefore each \(\ell_i\) is convex, and so
\(\ell_{\PL}=\sum_{i=1}^K\ell_i\) is convex. Since it is finite-valued and convex, it is Clarke
regular.

\paragraph{Magnitude bound.}
For each stage \(i\),
\[
\ell_i(\theta)
=
\log\sum_{j=i}^K \exp(s_j-s_i).
\]
Since \(\theta\in\Theta\) and
\[
D:=\sup_{\theta\in\Theta}\|\theta-\theta_{\rm ref}\|,
\]
we have, for any \(j\),
\[
|s_j-s_i|
=
|(\theta-\theta_{\rm ref})^\top(\psi_j-\psi_i)|
\le
\|\theta-\theta_{\rm ref}\|\,\|\psi_j-\psi_i\|
\le
2DB_\psi.
\]
Also, the term \(j=i\) equals \(\exp(s_i-s_i)=1\), so
\[
\ell_i(\theta)
=
\log\sum_{j=i}^K \exp(s_j-s_i)
\ge
0.
\]
For the upper bound,
\[
\ell_i(\theta)
\le
\log\sum_{j=i}^K \exp(2DB_\psi)
\le
\log K+2DB_\psi.
\]
Summing over \(i=1,\ldots,K\) gives
\[
0\le
\ell_{\PL}(\theta;x,Y,\sigma)
\le
K(\log K+2DB_\psi)
=
C_L.
\]

\paragraph{Gradient bound.}
For each stage \(i\), define the stagewise softmax weights
\[
p_{j|i}(\theta)
:=
\frac{\exp(s_j)}
{\sum_{m=i}^K\exp(s_m)},
\qquad j=i,\ldots,K.
\]
Then
\[
\nabla_\theta \ell_i(\theta)
=
-\psi_i+\sum_{j=i}^K p_{j|i}(\theta)\psi_j.
\]
Since the weights \(p_{j|i}\) form a probability distribution over \(\{i,\ldots,K\}\),
\[
\left\|
\sum_{j=i}^K p_{j|i}(\theta)\psi_j
\right\|
\le
\sum_{j=i}^K p_{j|i}(\theta)\|\psi_j\|
\le
B_\psi.
\]
Thus
\[
\|\nabla_\theta \ell_i(\theta)\|
\le
\|\psi_i\|
+
\left\|
\sum_{j=i}^K p_{j|i}(\theta)\psi_j
\right\|
\le
2B_\psi.
\]
Summing over \(i\) yields
\[
\|\nabla_\theta\ell_{\PL}(\theta;x,Y,\sigma)\|
\le
\sum_{i=1}^K
\|\nabla_\theta\ell_i(\theta)\|
\le
2KB_\psi
=
C_G.
\]

\paragraph{Hessian bound.}
For each stage \(i\), differentiating the stagewise softmax gradient gives
\[
\nabla_\theta^2\ell_i(\theta)
=
\sum_{j=i}^K p_{j|i}(\theta)
(\psi_j-\bar\psi_i)(\psi_j-\bar\psi_i)^\top,
\]
where
\[
\bar\psi_i
:=
\sum_{j=i}^K p_{j|i}(\theta)\psi_j.
\]
Equivalently,
\[
\nabla_\theta^2\ell_i(\theta)
=
\operatorname{Cov}_{j\sim p_{\cdot|i}}
[
\psi_j
].
\]
Therefore
\[
\nabla_\theta^2\ell_i(\theta)\succeq 0.
\]
Moreover, for any unit vector \(v\),
\[
v^\top\nabla_\theta^2\ell_i(\theta)v
=
\operatorname{Var}_{j\sim p_{\cdot|i}}(v^\top\psi_j)
\le
\E_{j\sim p_{\cdot|i}}[(v^\top\psi_j)^2]
\le
B_\psi^2.
\]
Hence
\[
0\preceq \nabla_\theta^2\ell_i(\theta)\preceq B_\psi^2 I.
\]
Summing over \(i=1,\ldots,K\), we obtain
\[
0
\preceq
\nabla_\theta^2\ell_{\PL}(\theta;x,Y,\sigma)
=
\sum_{i=1}^K\nabla_\theta^2\ell_i(\theta)
\preceq
KB_\psi^2 I
=
C_H I.
\]
This completes the proof.
\end{proof}

\begin{corollary}[Convexity of fixed-list robust loss]
\label{cor:fixedlist-convex}
$\theta\mapsto\ell_{\rob}(\theta;x,Y,\sigma^\star)$ is convex. Defining $L_{\rob}(\theta;x,Y) := \E_{\sigma^\star\sim p^\star(\cdot\mid x,Y)}[\ell_{\rob}(\theta;x,Y,\sigma^\star)]$, $L_{\rob}$ is convex with $0\le L_{\rob}\le C_L$ and $\|\partial_C L_{\rob}\|\le C_G$.
\end{corollary}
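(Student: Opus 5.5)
We will derive the corollary directly from \Cref{lem:tv-decomp} and \Cref{lem:PL-bounds}, with no new estimates required.

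\emph{Convexity.} By \Cref{lem:tv-decomp},
\[
\ell_{\rob}(\theta;x,Y,\sigma^\star)=(1-\rho)\ell_{\PL}(\theta;x,Y,\sigma^\star)+\rho\max_{\sigma\in\Sk}\ell_{\PL}(\theta;x,Y,\sigma).
\]
Each $\ell_{\PL}(\cdot;x,Y,\sigma)$ is convex by \Cref{lem:PL-bounds}. A finite pointwise maximum of convex functions is convex, and $\rho\in[0,1]$ gives nonnegative weights, so $\ell_{\rob}(\cdot;x,Y,\sigma^\star)$ is convex. This is the same argument as in \Cref{prop:offline-convex}.

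Since $Y$ is fixed and $\sigma^\star$ ranges over the finite set $\Sk$, the function $L_{\rob}(\theta;x,Y)=\sum_{\sigma^\star\in\Sk}p^\star(\sigma^\star\mid x,Y)\,\ell_{\rob}(\theta;x,Y,\sigma^\star)$ is a finite convex combination of convex functions. The weights do not depend on $\theta$, by the conditional independence in \cref{ass:online}. Hence $L_{\rob}$ is convex, and no interchange-of-limits issues arise.

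\emph{Magnitude.} \Cref{lem:PL-bounds} gives $0\le\ell_{\PL}(\theta;x,Y,\sigma)\le C_L$ for every $\sigma$, so the maximum over $\sigma$ also lies in $[0,C_L]$. The convex combination in \Cref{lem:tv-decomp} then lies in $[0,C_L]$ as well. Averaging over $\sigma^\star$ preserves this interval, so $0\le L_{\rob}\le C_L$ on $\Theta$.

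\emph{Subgradient bound.} This is the only step needing care. Since $L_{\rob}$ is convex and finite, $\partial_C L_{\rob}$ coincides with the convex subdifferential. Because the sum is finite with nonnegative weights and all summands are finite convex functions, the subdifferential sum rule gives
\[
\partial L_{\rob}(\theta)=\sum_{\sigma^\star}p^\star(\sigma^\star\mid x,Y)\,\partial\ell_{\rob}(\theta;x,Y,\sigma^\star).
\]
For each $\sigma^\star$ we apply the sum rule again, together with the max rule: the subdifferential of $\max_\sigma\ell_{\PL}$ is the convex hull of the active gradients. So every element of $\partial\ell_{\rob}$ has the form
\[
(1-\rho)\nabla\ell_{\PL}(\sigma^\star)+\rho\sum_{\sigma\ \text{active}}\lambda_\sigma\nabla\ell_{\PL}(\sigma),
\]
with $\lambda_\sigma\ge0$ and $\sum_\sigma\lambda_\sigma=1$. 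This is a convex combination of PL gradients, each of norm at most $C_G$ by \Cref{lem:PL-bounds}, so its norm is at most $C_G$; this is exactly the argument of \Cref{lem:offline-subg-bound}. A further convex combination over $\sigma^\star$ keeps the bound, giving $\|g\|\le C_G$ for every $g\in\partial_C L_{\rob}(\theta)$.

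A subtlety is that $\Theta$ is a ball, so subgradients should be taken for the unconstrained function on a neighborhood of $\Theta$, not including the normal cone of $\Theta$. The gradient bound $\|\nabla\ell_{\PL}\|\le 2KB_\psi$ actually holds for all $\theta$, since it uses only softmax weights. Therefore the bound is valid in the interior and on the boundary alike.
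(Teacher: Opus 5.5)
Your proposal is correct and follows the paper's own argument. The paper gives no standalone proof of this corollary; it assembles it from \Cref{lem:tv-decomp} and \Cref{prop:offline-convex} for convexity, \Cref{lem:PL-bounds} for $0\le L_{\rob}\le C_L$, and the finite sum rule with the Danskin formula (\Cref{lem:clarke-product}(b), \Cref{lem:danskin}) together with the convex-combination norm bound of \Cref{lem:offline-subg-bound} for $\|\partial_C L_{\rob}\|\le C_G$, exactly as you do. Your remark that the bound $\|\nabla\ell_{\PL}\|\le 2KB_\psi$ holds for every $\theta$, so boundary points of $\Theta$ cause no trouble, is a small precision the paper leaves implicit.
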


\subsection{Smoothing and weak convexity (\Cref{prop:online-wc})}

\begin{definition}[Smoothed objects]
\label{def:smoothed}
For $\tau>0$, let $M_\tau(\theta;x,Y):=\tau\log\sum_{\sigma\in\Sk}\exp(\ell_{\PL}(\theta;x,Y,\sigma)/\tau)$,
$\ell_{\rob}^\tau:=(1-\rho)\ell_{\PL}(\sigma^\star)+\rho M_\tau$, and
$J_{\rob}^{\mathrm{on},\tau}(\theta) := \E_{x,Y,\sigma^\star}[\ell_{\rob}^\tau]$.
\end{definition}

\begin{lemma}[Properties of \(M_\tau\)]
\label{lem:Mtau}
For \(\tau>0\), define
\[
M_\tau(\theta;x,Y)
:=
\tau\log
\sum_{\sigma\in\Sk}
\exp\left(
\frac{\ell_{\PL}(\theta;x,Y,\sigma)}{\tau}
\right).
\]
Then:
\textup{(i)}
\[
\max_{\sigma\in\Sk}\ell_{\PL}(\theta;x,Y,\sigma)
\le
M_\tau(\theta;x,Y)
\le
\max_{\sigma\in\Sk}\ell_{\PL}(\theta;x,Y,\sigma)
+
\tau\log K!.
\]
\textup{(ii)} \(\theta\mapsto M_\tau(\theta;x,Y)\) is convex and \(C^\infty\).

\textup{(iii)}
\[
\|\nabla_\theta M_\tau(\theta;x,Y)\|\le C_G,
\qquad
0\preceq
\nabla_\theta^2 M_\tau(\theta;x,Y)
\preceq
\left(C_H+\frac{C_G^2}{\tau}\right)I.
\]
\end{lemma}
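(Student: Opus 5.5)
We treat $M_\tau$ as a log-sum-exp of the finitely many functions $f_\sigma(\theta):=\ell_{\PL}(\theta;x,Y,\sigma)$, $\sigma\in\Sk$. We then read off all three claims from the softmax weights
\[
q_\sigma(\theta):=\frac{\exp(f_\sigma(\theta)/\tau)}{\sum_{\sigma'\in\Sk}\exp(f_{\sigma'}(\theta)/\tau)},
\]
together with the per-permutation bounds of \Cref{lem:PL-bounds}.

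\textbf{Part (i).} This is the standard sandwich for log-sum-exp. Let $m:=\max_\sigma f_\sigma$. The sum $\sum_\sigma e^{f_\sigma/\tau}$ is at least $e^{m/\tau}$, because it contains the maximizing term. It is at most $|\Sk|\,e^{m/\tau}=K!\,e^{m/\tau}$. Taking $\tau\log$ of both bounds gives $m\le M_\tau\le m+\tau\log K!$.

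\textbf{Part (ii).} Smoothness is immediate. Each $f_\sigma$ is a finite sum of affine terms and log-sum-exps of affine functions of $\theta$, hence $C^\infty$. $M_\tau$ composes these with the $C^\infty$ map $u\mapsto\tau\log\sum e^{u_\sigma/\tau}$.

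For convexity, note that $u\mapsto\tau\log\sum_\sigma e^{u_\sigma/\tau}$ is convex and coordinatewise nondecreasing on $\mathbb R^{K!}$. Each $f_\sigma$ is convex by \Cref{lem:PL-bounds}. The composition of a convex nondecreasing outer function with convex inner functions is convex.

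\textbf{Part (iii).} By the chain rule,
\[
\nabla M_\tau=\sum_\sigma q_\sigma\nabla f_\sigma .
\]
This is a convex combination of vectors of norm at most $C_G$, so $\|\nabla M_\tau\|\le C_G$.

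Differentiating once more, using $\nabla q_\sigma=\tfrac1\tau q_\sigma(\nabla f_\sigma-\nabla M_\tau)$, gives
\[
\nabla^2M_\tau=\sum_\sigma q_\sigma\nabla^2 f_\sigma+\frac1\tau\sum_\sigma q_\sigma(\nabla f_\sigma-\bar g)(\nabla f_\sigma-\bar g)^\top,\qquad \bar g:=\nabla M_\tau .
\]
Both terms are PSD, since $\nabla^2 f_\sigma\succeq0$ and the second term is a covariance. This recovers $\nabla^2M_\tau\succeq0$, consistent with (ii).

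For the upper bound, the first term is $\preceq C_H I$ by \Cref{lem:PL-bounds}. For the second term, take a unit vector $v$. Then $v^\top(\cdot)v$ is the variance of $v^\top\nabla f_\sigma$ under $q$, which is at most its second moment $\E_q[(v^\top\nabla f_\sigma)^2]\le C_G^2$. Adding the two pieces gives $C_H+C_G^2/\tau$.

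\textbf{Main obstacle.} The one step requiring care is this Hessian computation: deriving the gradient of the softmax weights correctly so that the covariance structure appears, and then bounding that covariance by the second moment rather than by a cruder $4C_G^2$ bound. Everything else follows directly from \Cref{lem:PL-bounds}.
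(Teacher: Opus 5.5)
Your proposal is correct and follows the paper's proof essentially step for step. Both use the same factor-out-the-max sandwich for (i), and for (iii) the same softmax-weighted gradient, the same split of the Hessian into weighted PL Hessians plus $\tfrac1\tau$ times a covariance, and the same variance-at-most-second-moment bound. The only difference is cosmetic: in (ii) you get convexity from the composition rule (a convex, coordinatewise nondecreasing log-sum-exp applied to convex functions), while the paper proves it directly with H\"older's inequality. The two arguments are equivalent, and your Hessian computation in (iii) confirms convexity independently.
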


\begin{proof}
Fix \((x,Y)\) and write
\[
\ell_\sigma(\theta)
:=
\ell_{\PL}(\theta;x,Y,\sigma),
\qquad
\sigma\in\Sk.
\]
Then
\[
M_\tau(\theta;x,Y)
=
\tau\log
\sum_{\sigma\in\Sk}
\exp\left(
\frac{\ell_\sigma(\theta)}{\tau}
\right).
\]

\textup{(i)}
Let
\[
m(\theta)
:=
\max_{\sigma\in\Sk}\ell_\sigma(\theta).
\]
Then
\[
\sum_{\sigma\in\Sk}
\exp\left(
\frac{\ell_\sigma(\theta)}{\tau}
\right)
=
\exp\left(\frac{m(\theta)}{\tau}\right)
\sum_{\sigma\in\Sk}
\exp\left(
\frac{\ell_\sigma(\theta)-m(\theta)}{\tau}
\right).
\]
Since \(\ell_\sigma(\theta)-m(\theta)\le 0\) for every \(\sigma\), and at least one maximizer attains
zero, we have
\[
1
\le
\sum_{\sigma\in\Sk}
\exp\left(
\frac{\ell_\sigma(\theta)-m(\theta)}{\tau}
\right)
\le
|\Sk|
=
K!.
\]
Taking \(\tau\log(\cdot)\) gives
\[
m(\theta)
\le
M_\tau(\theta;x,Y)
\le
m(\theta)+\tau\log K!,
\]
which proves \textup{(i)}.

\textup{(ii)}
By \Cref{lem:PL-bounds}, for every \(\sigma\in\Sk\), the map
\[
\theta\mapsto \ell_\sigma(\theta)
\]
is convex and \(C^\infty\). Since \(\Sk\) is finite, \(M_\tau\) is a finite log-sum-exp composition of
\(C^\infty\) functions, and hence is \(C^\infty\).

To prove convexity, take any \(\theta_1,\theta_2\) and \(\alpha\in[0,1]\). Since each
\(\ell_\sigma\) is convex,
\[
\ell_\sigma(\alpha\theta_1+(1-\alpha)\theta_2)
\le
\alpha \ell_\sigma(\theta_1)+(1-\alpha)\ell_\sigma(\theta_2).
\]
Therefore,
\[
\exp\left(
\frac{\ell_\sigma(\alpha\theta_1+(1-\alpha)\theta_2)}{\tau}
\right)
\le
\exp\left(
\frac{\alpha \ell_\sigma(\theta_1)}{\tau}
\right)
\exp\left(
\frac{(1-\alpha)\ell_\sigma(\theta_2)}{\tau}
\right).
\]
Summing over \(\sigma\) and applying Hölder's inequality gives
\[
\sum_{\sigma\in\Sk}
\exp\left(
\frac{\ell_\sigma(\alpha\theta_1+(1-\alpha)\theta_2)}{\tau}
\right)
\le
\left[
\sum_{\sigma\in\Sk}
\exp\left(
\frac{\ell_\sigma(\theta_1)}{\tau}
\right)
\right]^\alpha
\left[
\sum_{\sigma\in\Sk}
\exp\left(
\frac{\ell_\sigma(\theta_2)}{\tau}
\right)
\right]^{1-\alpha}.
\]
Taking \(\tau\log(\cdot)\) yields
\[
M_\tau(\alpha\theta_1+(1-\alpha)\theta_2)
\le
\alpha M_\tau(\theta_1)
+
(1-\alpha)M_\tau(\theta_2).
\]
Thus \(M_\tau\) is convex.

\textup{(iii)}
Define the softmax weights over rankings
\[
w_\sigma(\theta)
:=
\frac{
\exp(\ell_\sigma(\theta)/\tau)
}{
\sum_{\sigma'\in\Sk}
\exp(\ell_{\sigma'}(\theta)/\tau)
}.
\]
Then \(w_\sigma(\theta)\ge0\) and \(\sum_{\sigma\in\Sk}w_\sigma(\theta)=1\). Differentiating
\(M_\tau\) gives
\[
\nabla_\theta M_\tau(\theta;x,Y)
=
\sum_{\sigma\in\Sk}
w_\sigma(\theta)
\nabla_\theta\ell_\sigma(\theta).
\]
By \Cref{lem:PL-bounds}, \(\|\nabla_\theta\ell_\sigma(\theta)\|\le C_G\) for every
\(\sigma\). Hence
\[
\|\nabla_\theta M_\tau(\theta;x,Y)\|
\le
\sum_{\sigma\in\Sk}
w_\sigma(\theta)
\|\nabla_\theta\ell_\sigma(\theta)\|
\le
C_G.
\]

For the Hessian, differentiating the gradient gives
\[
\nabla_\theta^2 M_\tau(\theta;x,Y)
=
\sum_{\sigma\in\Sk}
w_\sigma(\theta)
\nabla_\theta^2\ell_\sigma(\theta)
+
\frac{1}{\tau}
\left[
\sum_{\sigma\in\Sk}
w_\sigma(\theta)
\nabla_\theta\ell_\sigma(\theta)
\nabla_\theta\ell_\sigma(\theta)^\top
-
\nabla_\theta M_\tau(\theta;x,Y)
\nabla_\theta M_\tau(\theta;x,Y)^\top
\right].
\]
The second bracket is the covariance matrix of the random vector
\(\nabla_\theta\ell_\sigma(\theta)\) under \(\sigma\sim w(\theta)\), and is therefore positive
semidefinite. Since each \(\nabla_\theta^2\ell_\sigma(\theta)\succeq0\), we get
\[
\nabla_\theta^2 M_\tau(\theta;x,Y)\succeq0.
\]

For the upper bound, by \Cref{lem:PL-bounds},
\[
\sum_{\sigma\in\Sk}
w_\sigma(\theta)
\nabla_\theta^2\ell_\sigma(\theta)
\preceq
C_H I.
\]
For the covariance term, for any unit vector \(v\),
\[
v^\top
\left[
\sum_{\sigma}
w_\sigma
\nabla\ell_\sigma\nabla\ell_\sigma^\top
-
\nabla M_\tau\nabla M_\tau^\top
\right]
v
=
\operatorname{Var}_{\sigma\sim w(\theta)}
\left(
v^\top\nabla_\theta\ell_\sigma(\theta)
\right)
\le
\E_{\sigma\sim w(\theta)}
\left[
(v^\top\nabla_\theta\ell_\sigma(\theta))^2
\right].
\]
Using \(\|\nabla_\theta\ell_\sigma(\theta)\|\le C_G\), we have
\[
\E_{\sigma\sim w(\theta)}
\left[
(v^\top\nabla_\theta\ell_\sigma(\theta))^2
\right]
\le
C_G^2.
\]
Therefore the covariance term is bounded by \(C_G^2 I\), and hence
\[
\nabla_\theta^2M_\tau(\theta;x,Y)
\preceq
\left(C_H+\frac{C_G^2}{\tau}\right)I.
\]
This proves \textup{(iii)}.
\end{proof}

\begin{proposition}[Smoothed online weak convexity]
\label{prop:smoothed-wc}
Suppose \cref{ass:offline,ass:online} hold, $J_{\rob}^{\mathrm{on},\tau}\in C^2$ an open neighborhood of \(\Theta\) and is $\kappa^\tau$-weakly convex with
$\kappa^\tau \le 8K^2B_\psi^2 + (C_L+\rho\tau\log K!)\cdot KB_\psi^2$.
\end{proposition}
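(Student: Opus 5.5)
Since $\Theta$ is bounded and the response space finite, the online objective can be written as a finite sum,
\[
J_{\rob}^{\mathrm{on},\tau}(\theta)=\E_x\Big[\sum_{Y\in\mathcal Y^K}P_{\pi_\theta}(Y\mid x)\,L^\tau(\theta;x,Y)\Big],\qquad L^\tau(\theta;x,Y):=\E_{\sigma^\star\sim p^\star(\cdot\mid x,Y)}[\ell^\tau_{\rob}(\theta;x,Y,\sigma^\star)].
\]
Here $L^\tau$ is $C^\infty$ in $\theta$: the $\ell_{\PL}$ term is a finite sum of log-sum-exps of affine functions, and $M_\tau$ is smooth by \Cref{lem:Mtau}(ii). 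The weights $P_{\pi_\theta}(Y\mid x)=\prod_i\pi_\theta(y_i\mid x)$ are products of softmaxes, so they are $C^\infty$ on all of $\mathbb R^{d_p}$. Each summand is therefore $C^\infty$ on an open neighborhood of $\Theta$. The $x$-expectation can be differentiated under the integral because all derivatives up to order two will be uniformly bounded on that neighborhood, by the bounds below. This gives the $C^2$ claim.

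For weak convexity, I will bound the Hessian from below. Using $\nabla_\theta P_{\pi_\theta}=P_{\pi_\theta}S_\theta$ and the product rule twice,
\[
\nabla^2 J_{\rob}^{\mathrm{on},\tau}=\E_{x,Y\sim\pi_\theta^{\otimes K}}\Big[\nabla^2L^\tau+S_\theta(\nabla L^\tau)^\top+\nabla L^\tau S_\theta^\top+L^\tau\big(S_\theta S_\theta^\top+\nabla_\theta S_\theta\big)\Big].
\]
I will bound each term.

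The first term is PSD, because $\ell_{\PL}$ is convex by \Cref{lem:PL-bounds}, $M_\tau$ is convex by \Cref{lem:Mtau}, and $\rho\in[0,1]$.

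For the cross terms, $\|\nabla L^\tau\|\le(1-\rho)C_G+\rho C_G=C_G=2KB_\psi$ by \Cref{lem:PL-bounds,lem:Mtau}(iii), and $\|S_\theta\|\le 2KB_\psi$ by \Cref{lem:score-bounds}. So $S(\nabla L)^\top+(\nabla L)S^\top\succeq-2\cdot 2KB_\psi\cdot 2KB_\psi\, I=-8K^2B_\psi^2 I$.

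For the last term, $S_\theta S_\theta^\top\succeq 0$, and $L^\tau\ge 0$ because $\ell_{\PL}\ge0$ and $M_\tau\ge\max\ell_{\PL}\ge 0$. Also $L^\tau\le(1-\rho)C_L+\rho(C_L+\tau\log K!)=C_L+\rho\tau\log K!$ by \Cref{lem:Mtau}(i). With $\nabla_\theta S_\theta=-K\operatorname{Cov}\succeq -KB_\psi^2 I$ from \Cref{lem:score-bounds}, this gives $L^\tau(SS^\top+\nabla S)\succeq-(C_L+\rho\tau\log K!)KB_\psi^2 I$.

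Summing the three bounds gives $\nabla^2 J_{\rob}^{\mathrm{on},\tau}\succeq-\kappa^\tau I$ on $\Theta$ with the stated $\kappa^\tau$. Since $\Theta$ is convex, a $C^2$ function with Hessian bounded below by $-\kappa^\tau I$ on $\Theta$ has $\theta\mapsto J+\tfrac{\kappa^\tau}{2}\|\theta\|^2$ convex along every segment in $\Theta$, which is $\kappa^\tau$-weak convexity on $\Theta$.

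I expect the main obstacle to be justifying the neighborhood claim. The bounds $C_L$ and $C_G$ were derived only on $\Theta$, using $D$, so the argument should only assert the Hessian lower bound on $\Theta$ itself and use the neighborhood purely for smoothness. A secondary point is to keep the positive part $L^\tau SS^\top$ rather than discarding it carelessly, since only the sign of $L^\tau$ is needed to drop it.
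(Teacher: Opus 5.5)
Your proof is correct and follows the paper's argument essentially step for step. You use the same product-rule Hessian decomposition into $\nabla^2 L^\tau$, the two score--gradient cross terms, and $L^\tau(S_\theta S_\theta^\top+\nabla_\theta S_\theta)$, bounded with the same constants $\|S_\theta\|\le 2KB_\psi$, $\|\nabla L^\tau\|\le C_G$, $0\le L^\tau\le C_L+\rho\tau\log K!$ and $\nabla_\theta S_\theta\succeq -KB_\psi^2 I$; asserting the Hessian lower bound only on $\Theta$, and using the neighborhood just for smoothness (with $D$ enlarged there for the domination argument), makes explicit a point the paper leaves implicit.
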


\begin{proof}
Fix \(\tau>0\). Recall that the smoothed online objective is
\[
J_{\rob}^{\mathrm{on},\tau}(\theta)
=
\E_{x\sim\mathcal D_x}
\sum_{Y\in\mathcal Y^K}
P_{\pi_\theta}(Y\mid x)
L_\tau(\theta;x,Y),
\]
where
\[
L_\tau(\theta;x,Y)
:=
\E_{\sigma^\star\sim p^\star(\cdot\mid x,Y)}
[
\ell_{\rob}^{\tau}(\theta;x,Y,\sigma^\star)
].
\]
Since \(\mathcal Y\) and \(\Sk\) are finite, all sums over \(Y\) and \(\sigma^\star\) are finite.
Moreover,
\[
\ell_{\rob}^{\tau}(\theta;x,Y,\sigma^\star)
=
(1-\rho)\ell_{\PL}(\theta;x,Y,\sigma^\star)
+
\rho M_\tau(\theta;x,Y),
\]
with
\[
M_\tau(\theta;x,Y)
:=
\tau\log
\sum_{\sigma\in\Sk}
\exp\left(
\frac{\ell_{\PL}(\theta;x,Y,\sigma)}{\tau}
\right).
\]
By \Cref{lem:PL-bounds}, each
\(\ell_{\PL}(\cdot;x,Y,\sigma)\) is convex and \(C^\infty\) in \(\theta\).
The log-sum-exp smoothing \(M_\tau\) is therefore convex and \(C^\infty\). Hence
\(\ell_{\rob}^{\tau}\) and \(L_\tau\) are convex and \(C^\infty\) in \(\theta\).

For each fixed \(x\), define
\[
H_x(\theta)
:=
\sum_{Y\in\mathcal Y^K}
P_{\pi_\theta}(Y\mid x)L_\tau(\theta;x,Y).
\]
Because the log-linear softmax policy is \(C^\infty\) and \(\mathcal Y^K\) is finite,
\(H_x\) is \(C^2\). The uniform bounds from \Cref{lem:PL-bounds},
\Cref{lem:Mtau}, and \Cref{lem:score-bounds} imply that the first and second derivatives of
\(H_x\) are dominated uniformly over \(x\) and \(\theta\in\Theta\). Therefore differentiation may
be interchanged with the expectation over \(x\), and
\[
J_{\rob}^{\mathrm{on},\tau}(\theta)=\E_x[H_x(\theta)]
\]
is \(C^2\) on \(\Theta\).

We now lower bound the Hessian. For \(Y=(y_1,\ldots,y_K)\), write
\[
P_\theta(Y\mid x)
:=
P_{\pi_\theta}(Y\mid x)
=
\prod_{i=1}^K \pi_\theta(y_i\mid x),
\]
and define the list score function
\[
S_\theta(x,Y)
:=
\nabla_\theta\log P_\theta(Y\mid x)
=
\sum_{i=1}^K\nabla_\theta\log\pi_\theta(y_i\mid x).
\]
Then
\[
\nabla_\theta P_\theta(Y\mid x)
=
P_\theta(Y\mid x)S_\theta(x,Y).
\]
Differentiating once more gives
\[
\nabla_\theta^2P_\theta(Y\mid x)
=
\nabla_\theta\!\left(P_\theta(Y\mid x)S_\theta(x,Y)\right)
=
P_\theta(Y\mid x)
\left[
S_\theta(x,Y)S_\theta(x,Y)^\top
+
\nabla_\theta S_\theta(x,Y)
\right].
\]

Applying the product rule twice to
\[
H_x(\theta)
=
\sum_{Y\in\mathcal Y^K}
P_\theta(Y\mid x)L_\tau(\theta;x,Y),
\]
we obtain
\begin{align}
\nabla_\theta^2 H_x(\theta)
&=
\sum_{Y\in\mathcal Y^K}
P_\theta(Y\mid x)
\bigg[
\underbrace{\nabla_\theta^2L_\tau(\theta;x,Y)}_{T_1}
\notag\\
&\qquad
+
\underbrace{
S_\theta(x,Y)\nabla_\theta L_\tau(\theta;x,Y)^\top
+
\nabla_\theta L_\tau(\theta;x,Y)S_\theta(x,Y)^\top
}_{T_2}
\notag\\
&\qquad
+
\underbrace{
L_\tau(\theta;x,Y)
\left(
S_\theta(x,Y)S_\theta(x,Y)^\top
+
\nabla_\theta S_\theta(x,Y)
\right)
}_{T_3}
\bigg].
\label{eq:hessian-Hx}
\end{align}

We bound the three terms from below. Since \(L_\tau(\cdot;x,Y)\) is convex,
\[
T_1=\nabla_\theta^2L_\tau(\theta;x,Y)\succeq 0.
\]
For the cross term \(T_2\), for any unit vector \(v\),
\[
v^\top T_2 v
=
2\langle v,S_\theta(x,Y)\rangle
\langle v,\nabla_\theta L_\tau(\theta;x,Y)\rangle.
\]
Therefore, by Cauchy--Schwarz,
\[
v^\top T_2 v
\ge
-2\|S_\theta(x,Y)\|
\|\nabla_\theta L_\tau(\theta;x,Y)\|.
\]
Equivalently,
\[
T_2
\succeq
-2\|S_\theta(x,Y)\|
\|\nabla_\theta L_\tau(\theta;x,Y)\|I.
\]

For \(T_3\), note that \(L_\tau(\theta;x,Y)\ge 0\) and
\[
S_\theta(x,Y)S_\theta(x,Y)^\top\succeq 0.
\]
Hence
\[
T_3
\succeq
L_\tau(\theta;x,Y)\nabla_\theta S_\theta(x,Y).
\]
By \Cref{lem:score-bounds},
\[
\|\nabla_\theta S_\theta(x,Y)\|_\mathrm{op}\le KB_\psi^2,
\]
and thus
\[
\nabla_\theta S_\theta(x,Y)
\succeq
-KB_\psi^2 I.
\]
Therefore,
\[
T_3
\succeq
-L_\tau(\theta;x,Y)KB_\psi^2I.
\]

Now use the uniform bounds
\[
\|S_\theta(x,Y)\|\le 2KB_\psi,
\qquad
\|\nabla_\theta L_\tau(\theta;x,Y)\|\le C_G=2KB_\psi,
\]
and
\[
0\le L_\tau(\theta;x,Y)\le C_L+\rho\tau\log K!.
\]
The gradient bound follows because \(\nabla M_\tau\) is a softmax-weighted convex combination of
the gradients \(\nabla\ell_{\PL}(\theta;x,Y,\sigma)\), each of which has norm at most \(C_G\).
The loss bound follows from
\[
M_\tau(\theta;x,Y)
\le
\max_{\sigma\in\Sk}\ell_{\PL}(\theta;x,Y,\sigma)
+
\tau\log K!,
\]
together with the uniform PL loss bound.

Combining the bounds, for every unit vector \(v\),
\begin{align}
v^\top \nabla_\theta^2H_x(\theta)v
&\ge
-2(2KB_\psi)(2KB_\psi)
-
(C_L+\rho\tau\log K!)KB_\psi^2
\notag\\
&=
-8K^2B_\psi^2
-
(C_L+\rho\tau\log K!)KB_\psi^2.
\end{align}
Since the bound is uniform in \(Y\) and \(x\), summing over \(Y\) with weights
\(P_\theta(Y\mid x)\) and taking expectation over \(x\) preserve the same lower bound. Hence
\[
\nabla_\theta^2J_{\rob}^{\mathrm{on},\tau}(\theta)
\succeq
-
\left[
8K^2B_\psi^2
+
(C_L+\rho\tau\log K!)KB_\psi^2
\right]I.
\]
Therefore \(J_{\rob}^{\mathrm{on},\tau}\) is \(\kappa^\tau\)-weakly convex with
\[
\kappa^\tau
\le
8K^2B_\psi^2
+
(C_L+\rho\tau\log K!)KB_\psi^2.
\]
Indeed, for a \(C^2\) function, the Hessian lower bound
\(\nabla^2 f\succeq -\kappa I\) is equivalent to convexity of
\(f+\frac{\kappa}{2}\|\cdot\|^2\), which is precisely \(\kappa\)-weak convexity.
\end{proof}

\begin{proof}[Proof of \Cref{prop:online-wc}]
For \(\tau>0\), define
\[
\bar\kappa_\tau
:=
8K^2B_\psi^2
+
(C_L+\rho\tau\log K!)KB_\psi^2.
\]
By \Cref{prop:smoothed-wc}, \(J_{\rob}^{\mathrm{on},\tau}\) is
\(\bar\kappa_\tau\)-weakly convex on \(\Theta\). Equivalently, the function
\[
\Phi_\tau(\theta)
:=
J_{\rob}^{\mathrm{on},\tau}(\theta)
+
\frac{\bar\kappa_\tau}{2}\|\theta\|^2
\]
is convex on \(\Theta\).

We first verify convergence of the smoothed objectives. By the log-sum-exp bound in
\Cref{lem:Mtau}(i),
\[
0
\le
M_\tau(\theta;x,Y)-M(\theta;x,Y)
\le
\tau\log K!.
\]
Since
\[
\ell_{\rob}^{\tau}
=
(1-\rho)\ell_{\PL}(\sigma^\star)
+
\rho M_\tau,
\qquad
\ell_{\rob}
=
(1-\rho)\ell_{\PL}(\sigma^\star)
+
\rho M,
\]
we have, for every \((\theta,x,Y,\sigma^\star)\),
\[
0
\le
\ell_{\rob}^{\tau}(\theta;x,Y,\sigma^\star)
-
\ell_{\rob}(\theta;x,Y,\sigma^\star)
\le
\rho\tau\log K!.
\]
Taking expectation over \(x\), \(Y\sim\pi_\theta^{\otimes K}(\cdot|x)\), and
\(\sigma^\star\sim p^\star(\cdot|x,Y)\) gives the uniform bound
\[
0
\le
J_{\rob}^{\mathrm{on},\tau}(\theta)
-
J_{\rob}^{\mathrm{on}}(\theta)
\le
\rho\tau\log K!,
\qquad
\forall \theta\in\Theta.
\]
Hence \(J_{\rob}^{\mathrm{on},\tau}\to J_{\rob}^{\mathrm{on}}\) uniformly on \(\Theta\).

Now define
\[
\kappa
:=
\lim_{\tau\downarrow0}\bar\kappa_\tau
=
8K^2B_\psi^2+C_LKB_\psi^2.
\]
Since \(C_L=K(\log K+2DB_\psi)\), this becomes
\[
\kappa
=
K^2B_\psi^2(8+\log K+2DB_\psi).
\]

It remains to prove that
\[
\Phi(\theta)
:=
J_{\rob}^{\mathrm{on}}(\theta)
+
\frac{\kappa}{2}\|\theta\|^2
\]
is convex on \(\Theta\). Let \(\theta_1,\theta_2\in\Theta\) and
\(\alpha\in[0,1]\). Since \(\Theta\) is convex,
\[
\theta_\alpha
:=
\alpha\theta_1+(1-\alpha)\theta_2
\in\Theta.
\]
For every \(\tau>0\), convexity of \(\Phi_\tau\) gives
\[
\Phi_\tau(\theta_\alpha)
\le
\alpha\Phi_\tau(\theta_1)
+
(1-\alpha)\Phi_\tau(\theta_2).
\]
Taking \(\tau\downarrow0\), using the uniform convergence
\(J_{\rob}^{\mathrm{on},\tau}\to J_{\rob}^{\mathrm{on}}\) and
\(\bar\kappa_\tau\to\kappa\), yields
\[
\Phi(\theta_\alpha)
\le
\alpha\Phi(\theta_1)
+
(1-\alpha)\Phi(\theta_2).
\]
Therefore \(\Phi\) is convex on \(\Theta\). Equivalently,
\(J_{\rob}^{\mathrm{on}}\) is \(\kappa\)-weakly convex on \(\Theta\), with
\[
\kappa
=
K^2B_\psi^2(8+\log K+2DB_\psi).
\]
\end{proof}

\subsection{Constrained Moreau envelope}

For $F=J_{\rob}^{\mathrm{on}}+I_\Theta$ and $\hat\lambda\in(0,1/\kappa)$, define
$F_{\hat\lambda}(\theta)=\min_{u\in\Theta}\{J_{\rob}^{\mathrm{on}}(u)+\tfrac1{2\hat\lambda}\|u-\theta\|^2\}$ and $\hat\theta(\theta)=\prox_{\hat\lambda F}(\theta)\in\Theta$.

\begin{lemma}[Moreau-envelope Properties]
\label{lem:moreau}
Let
\[
F(\theta)
:=
J_{\rob}^{\mathrm{on}}(\theta)+I_\Theta(\theta),
\]
where \(I_\Theta\) is the indicator of the closed convex set \(\Theta\). Suppose
\(J_{\rob}^{\mathrm{on}}\) is \(\kappa\)-weakly convex on \(\Theta\), and assume that \(F\) is proper,
lower semicontinuous, and bounded below. Fix
\[
\hat\lambda\in(0,1/\kappa).
\]
Define the constrained Moreau envelope
\[
F_{\hat\lambda}(\theta)
:=
\min_{u\in\mathbb R^d}
\left\{
F(u)+\frac{1}{2\hat\lambda}\|u-\theta\|^2
\right\}
=
\min_{u\in\Theta}
\left\{
J_{\rob}^{\mathrm{on}}(u)+\frac{1}{2\hat\lambda}\|u-\theta\|^2
\right\},
\]
and denote the proximal point by
\[
\hat\theta(\theta)
:=
\prox_{\hat\lambda F}(\theta)
:=
\argmin_{u\in\mathbb R^d}
\left\{
F(u)+\frac{1}{2\hat\lambda}\|u-\theta\|^2
\right\}.
\]
Then the following hold:
\begin{enumerate}
    \item \(\prox_{\hat\lambda F}\) is single-valued on \(\mathbb R^d\).
    \item \(F_{\hat\lambda}\in C^1\), and
    \[
    \nabla F_{\hat\lambda}(\theta)
    =
    \hat\lambda^{-1}
    \left(
    \theta-\hat\theta(\theta)
    \right).
    \]
    \item \(\nabla F_{\hat\lambda}\) is Lipschitz with constant at most
    \[
    L_{\mathrm{env}}
    :=
    \frac{1}{\hat\lambda(1-\kappa\hat\lambda)}.
    \]
    \item \emph{Near-stationarity:}
    \[
    \dist\!\left(
    0,
    \partial_C F(\hat\theta(\theta))
    \right)
    \le
    \|\nabla F_{\hat\lambda}(\theta)\|.
    \]
\end{enumerate}
\end{lemma}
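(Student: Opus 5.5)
The plan is to reduce everything to strong convexity of the proximal subproblem, which follows from $\kappa$-weak convexity (\Cref{prop:online-wc}) together with $\hat\lambda<1/\kappa$. Fix $\theta$ and set
\[
\phi_\theta(u):=F(u)+\tfrac{1}{2\hat\lambda}\|u-\theta\|^2 .
\]
Write $\phi_\theta$ as the sum of two pieces: the convex function $J_{\rob}^{\mathrm{on}}+\tfrac\kappa2\|\cdot\|^2+I_\Theta$, and the quadratic $\tfrac{1}{2\hat\lambda}\|u-\theta\|^2-\tfrac\kappa2\|u\|^2$. The quadratic is $(\hat\lambda^{-1}-\kappa)$-strongly convex, so $\phi_\theta$ is $\mu$-strongly convex with $\mu:=\hat\lambda^{-1}-\kappa>0$. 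It is also lower semicontinuous and proper, with domain in the compact set $\Theta$. Hence it has a unique minimizer, which gives item 1.

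For items 2 and 3, I would first show that $\operatorname{prox}_{\hat\lambda F}$ is Lipschitz. Write the optimality condition for $\hat\theta_i=\hat\theta(\theta_i)$, $i=1,2$, in the form
\[
\hat\lambda^{-1}(\theta_i-\hat\theta_i)+\kappa\hat\theta_i\in\partial\Big(J_{\rob}^{\mathrm{on}}+\tfrac\kappa2\|\cdot\|^2+I_\Theta\Big)(\hat\theta_i).
\]
The right-hand side is the convex subdifferential of a convex function, so it is monotone. Pairing the two inclusions then gives
\[
(\hat\lambda^{-1}-\kappa)\|\hat\theta_1-\hat\theta_2\|^2\le\hat\lambda^{-1}\langle\theta_1-\theta_2,\hat\theta_1-\hat\theta_2\rangle,
\]
so the prox is Lipschitz with constant $1/(1-\kappa\hat\lambda)$.

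Differentiability of $F_{\hat\lambda}$ with the stated gradient comes from two-sided bounds. For the upper bound, plug the fixed point $\hat\theta(\theta)$ into the definition at $\theta'$:
\[
F_{\hat\lambda}(\theta')\le F_{\hat\lambda}(\theta)+\hat\lambda^{-1}\langle\theta-\hat\theta(\theta),\theta'-\theta\rangle+\tfrac{1}{2\hat\lambda}\|\theta'-\theta\|^2 .
\]
For the lower bound, swap the roles of $\theta$ and $\theta'$ and use the Lipschitz continuity of the prox to bound the remainder by $o(\|\theta'-\theta\|)$. Together these show $F_{\hat\lambda}$ is differentiable with $\nabla F_{\hat\lambda}(\theta)=\hat\lambda^{-1}(\theta-\hat\theta(\theta))$. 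Continuity of this gradient then gives $C^1$.

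Item 3 I would derive from the prox Lipschitz bound, but the naive route needs care. Bounding $\|(\theta_1-\theta_2)-(\hat\theta_1-\hat\theta_2)\|$ by the triangle inequality gives the weaker constant $\hat\lambda^{-1}(1+(1-\kappa\hat\lambda)^{-1})$. To get the stated constant I would instead expand
\[
\|(\theta_1-\theta_2)-(\hat\theta_1-\hat\theta_2)\|^2
\]
and substitute the monotonicity inequality above, i.e. $\langle\Delta\theta,\Delta\hat\theta\rangle\ge(1-\kappa\hat\lambda)\|\Delta\hat\theta\|^2$. This yields a bound of the form $\|\Delta\theta\|^2-(1-2\kappa\hat\lambda)\|\Delta\hat\theta\|^2$, which combined with $\|\Delta\hat\theta\|\le\|\Delta\theta\|/(1-\kappa\hat\lambda)$ is at most $\|\Delta\theta\|^2/(1-\kappa\hat\lambda)^2$. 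That gives Lipschitz constant $\hat\lambda^{-1}(1-\kappa\hat\lambda)^{-1}$ in both the case $2\kappa\hat\lambda\le 1$ and the case $2\kappa\hat\lambda>1$.

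Item 4 comes from the first-order optimality condition $0\in\partial_C\phi_\theta(\hat\theta)$. For the weakly convex $F$, write $F=h-\tfrac\kappa2\|\cdot\|^2$ with $h$ convex; then $\partial_C F=\partial h-\kappa\,\mathrm{id}$ coincides with the limiting/Fréchet subdifferential, and the sum rule with the smooth quadratic is exact. This gives $\hat\lambda^{-1}(\theta-\hat\theta)\in\partial_C F(\hat\theta)$, whose norm is $\|\nabla F_{\hat\lambda}(\theta)\|$. I expect the main obstacle to be the subdifferential bookkeeping here: in particular, justifying that the Clarke subdifferential of $J_{\rob}^{\mathrm{on}}+I_\Theta$ equals the shifted convex subdifferential, which includes the normal cone of $\Theta$. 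Weak convexity, i.e. Clarke regularity, is what makes that identification valid.
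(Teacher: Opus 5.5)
Your proposal is correct. For items 1 and 4 it follows the paper:
\begin{itemize}
\item single-valuedness comes from strong convexity of the prox subproblem, written as the convex $F+\frac{\kappa}{2}\|\cdot\|^2$ plus a $(\hat\lambda^{-1}-\kappa)$-strongly convex quadratic;
\item near-stationarity comes from the optimality condition $0\in\partial_C F(\hat\theta)+\hat\lambda^{-1}(\hat\theta-\theta)$ combined with the gradient formula.
\end{itemize}

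Where you genuinely differ is items 2 and 3. The paper does not argue these at all; it invokes the standard weakly convex Moreau-envelope calculus of Davis--Drusvyatskiy and Drusvyatskiy--Paquette. You derive them directly from monotonicity of $\partial h$, where $h:=F+\frac{\kappa}{2}\|\cdot\|^2$:
\begin{itemize}
\item monotonicity gives $(1-\kappa\hat\lambda)\|\Delta\hat\theta\|^2\le\langle\Delta\theta,\Delta\hat\theta\rangle$, hence a $(1-\kappa\hat\lambda)^{-1}$-Lipschitz prox;
\item the two-sided envelope bounds give differentiability with the stated gradient;
\item expanding $\|\Delta\theta-\Delta\hat\theta\|^2$ gives the gradient Lipschitz constant.
\end{itemize}
Your case analysis is right. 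In fact, for $2\kappa\hat\lambda>1$ your bound $\|\Delta\theta\|^2+(2\kappa\hat\lambda-1)\|\Delta\theta\|^2/(1-\kappa\hat\lambda)^2$ equals $\bigl(\kappa\hat\lambda/(1-\kappa\hat\lambda)\bigr)^2\|\Delta\theta\|^2$. So you actually obtain the sharper constant $\max\{\hat\lambda^{-1},\kappa/(1-\kappa\hat\lambda)\}$.

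Your route buys self-containedness. It also supplies a justification the paper omits in item 4, where the paper simply asserts the Clarke optimality condition for the extended-valued $F$. Your representation $F=h-\frac{\kappa}{2}\|\cdot\|^2$ has $h$ convex on all of $\mathbb{R}^d$, since $\Theta$ is convex and $J^{\mathrm{on}}_{\rob}+\frac{\kappa}{2}\|\cdot\|^2$ is convex on $\Theta$. That makes $\partial_C F=\partial h-\kappa\,\mathrm{id}$ and makes the sum rule with the smooth quadratic exact. The paper's version is shorter, but it outsources exactly the parts that carry the content.

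A minor wording point: it is this decomposition that justifies the identification. Clarke regularity is a consequence of weak convexity, not a synonym for it.
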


\begin{proof}
The Moreau-envelope properties above are standard for proper lower semicontinuous
\(\kappa\)-weakly convex functions with parameter \(\hat\lambda<1/\kappa\); see, e.g.,
\citet{davis2019stochastic} and \citet{drusvyatskiy2018error}. 

By \Cref{prop:online-wc}, \(J_{\rob}^{\mathrm{on}}\) is \(\kappa\)-weakly convex on \(\Theta\).
Since \(\Theta\) is closed and convex, the indicator \(I_\Theta\) is proper, lower semicontinuous,
and convex. Therefore
\[
F=J_{\rob}^{\mathrm{on}}+I_\Theta
\]
is proper, lower semicontinuous, bounded below by assumption, and \(\kappa\)-weakly convex. Here is the detailed statement:

We first prove single-valuedness of the proximal map. Since \(F\) is \(\kappa\)-weakly convex,
the function
\[
u\mapsto F(u)+\frac{\kappa}{2}\|u\|^2
\]
is convex. For fixed \(\theta\), the proximal objective is
\[
u
\mapsto
F(u)+\frac{1}{2\hat\lambda}\|u-\theta\|^2.
\]
Adding and subtracting \(\frac{\kappa}{2}\|u\|^2\), we can write it as
\[
\left(
F(u)+\frac{\kappa}{2}\|u\|^2
\right)
+
\frac{1}{2}
\left(
\frac{1}{\hat\lambda}-\kappa
\right)
\|u\|^2
-
\frac{1}{\hat\lambda}\langle u,\theta\rangle
+
\frac{1}{2\hat\lambda}\|\theta\|^2.
\]
The first term is convex, and the second quadratic term is strongly convex because
\[
\frac{1}{\hat\lambda}-\kappa>0.
\]
Hence the proximal objective is strongly convex. Since it is also proper, lower semicontinuous,
and coercive, it has a unique minimizer. Thus \(\prox_{\hat\lambda F}\) is single-valued.

The standard Moreau-envelope calculus for weakly convex functions then gives
\[
F_{\hat\lambda}\in C^1,
\qquad
\nabla F_{\hat\lambda}(\theta)
=
\hat\lambda^{-1}
\left(
\theta-\prox_{\hat\lambda F}(\theta)
\right)
=
\hat\lambda^{-1}
\left(
\theta-\hat\theta(\theta)
\right).
\]
This proves the differentiability and gradient formula.

The same weakly-convex Moreau calculus gives the Lipschitz bound
\[
\|\nabla F_{\hat\lambda}(\theta)-\nabla F_{\hat\lambda}(\theta')\|
\le
\frac{1}{\hat\lambda(1-\kappa\hat\lambda)}
\|\theta-\theta'\|,
\]
for all \(\theta,\theta'\in\mathbb R^d\). Hence \(\nabla F_{\hat\lambda}\) is Lipschitz with constant
at most
\[
L_{\mathrm{env}}
=
\frac{1}{\hat\lambda(1-\kappa\hat\lambda)}.
\]

It remains to prove the near-stationarity claim. Let
\[
\hat\theta
:=
\hat\theta(\theta)
=
\prox_{\hat\lambda F}(\theta).
\]
By the first-order optimality condition for the proximal problem,
\[
0
\in
\partial_C F(\hat\theta)
+
\frac{1}{\hat\lambda}(\hat\theta-\theta).
\]
Equivalently,
\[
\frac{1}{\hat\lambda}(\theta-\hat\theta)
\in
\partial_C F(\hat\theta).
\]
Using the gradient formula,
\[
\nabla F_{\hat\lambda}(\theta)
=
\frac{1}{\hat\lambda}(\theta-\hat\theta),
\]
we obtain
\[
\nabla F_{\hat\lambda}(\theta)
\in
\partial_C F(\hat\theta(\theta)).
\]
Therefore,
\[
\dist\!\left(
0,
\partial_C F(\hat\theta(\theta))
\right)
\le
\|\nabla F_{\hat\lambda}(\theta)\|.
\]
This proves the lemma.
\end{proof}

\subsection{Subdifferential calculus and oracle (\Cref{lem:oracle})}

\begin{lemma}[Set-valued Danskin]
\label{lem:danskin}
For every \((\theta,x,Y)\), define
\[
M(\theta;x,Y)
:=
\max_{\sigma\in\Sk}
\ell_{\PL}(\theta;x,Y,\sigma).
\]
Then \(M(\cdot;x,Y)\) is convex and Clarke regular. Moreover,
\[
\partial_C M(\theta;x,Y)
=
\conv
\left\{
\nabla_\theta\ell_{\PL}(\theta;x,Y,\sigma)
:
\sigma\in
\argmax_{\sigma'\in\Sk}
\ell_{\PL}(\theta;x,Y,\sigma')
\right\}.
\]
In particular,
\[
\nabla_\theta\ell_{\PL}(\theta;x,Y,\sigma_{\mathrm{worst}})
\in
\partial_C M(\theta;x,Y)
\]
for any \(\sigma_{\mathrm{worst}}\) selected by \Cref{thm:worst-perm}.
\end{lemma}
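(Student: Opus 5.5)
The plan is to treat $M(\cdot;x,Y)$ as a pointwise maximum of finitely many smooth convex functions and apply the classical max-rule for such functions. First, by \Cref{lem:PL-bounds}, each map $\theta\mapsto\ell_{\PL}(\theta;x,Y,\sigma)$ is convex and $C^\infty$, since under \cref{ass:offline} it is a sum of affine terms and log-sum-exp of affine functions. A finite pointwise maximum of convex functions is convex, so $M$ is convex and finite-valued on $\mathbb R^{d_p}$. Every finite convex function is locally Lipschitz and Clarke regular. Moreover, its Clarke subdifferential coincides with the convex subdifferential, as recorded in \Cref{app:notation}. This reduces the claim to computing the convex subdifferential of a finite max.

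For the formula, I would invoke Danskin's theorem, equivalently the max-rule for convex functions (Ioffe--Tikhomirov / Dubovitskii--Milyutin). Let $A(\theta):=\argmax_{\sigma'}\ell_{\PL}(\theta;x,Y,\sigma')$ be the active set; it is nonempty because $\Sk$ is finite. Continuity of the finitely many $\ell_\sigma$ gives a neighborhood of $\theta$ on which only indices in $A(\theta)$ can be active. The directional derivative is then
\[
M'(\theta;d)=\max_{\sigma\in A(\theta)}\langle\nabla\ell_\sigma(\theta),d\rangle.
\]
This is the support function of $\conv\{\nabla\ell_\sigma(\theta):\sigma\in A(\theta)\}$.

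For a convex function, the subdifferential is the closed convex set whose support function is $M'(\theta;\cdot)$. The convex hull of finitely many points is already compact, so no closure is needed. This yields the stated identity. I expect the main care to be in justifying the directional-derivative formula: the inactive indices must be excluded uniformly in a neighborhood, and the upper and lower one-sided limits must match. I would handle this with the standard argument that $(M(\theta+td)-M(\theta))/t$ is sandwiched between $\max_{A(\theta)}$ of the difference quotients for small $t>0$, and then apply the first-order Taylor expansion of each smooth $\ell_\sigma$.

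Finally, \Cref{thm:worst-perm} shows that the ascending-score permutation $\sigma_{\rm worst}$ attains $\max_\sigma\ell_{\PL}$, so $\sigma_{\rm worst}\in A(\theta)$. This holds with any deterministic tie-breaking, since tied permutations achieve the same value. Its gradient is one of the generators of the convex hull, and therefore lies in $\partial_C M(\theta;x,Y)$.
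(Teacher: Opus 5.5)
Your proposal is correct and follows essentially the same route as the paper. Both arguments establish convexity and Clarke regularity of the finite max, derive $M'(\theta;d)=\max_{\sigma\in A(\theta)}\langle\nabla\ell_\sigma(\theta),d\rangle$, identify the subdifferential through this directional derivative, and conclude via \Cref{thm:worst-perm} that $\sigma_{\mathrm{worst}}\in A(\theta)$. The differences are cosmetic: the paper proves the two inclusions separately (the gradient inequality for $\supseteq$, separation for $\subseteq$) and excludes inactive indices by a subsequence/pigeonhole argument, whereas you use a uniform-neighborhood argument and the support-function characterization in one step.
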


\begin{proof}
Fix \((x,Y)\) throughout the proof and write
\[
f_\sigma(\theta)
:=
\ell_{\PL}(\theta;x,Y,\sigma),
\qquad
\sigma\in\Sk.
\]
By \Cref{lem:PL-bounds}, each \(f_\sigma\) is convex and continuously differentiable in
\(\theta\). Since \(\Sk\) is finite, the pointwise maximum
\[
M(\theta;x,Y)=\max_{\sigma\in\Sk} f_\sigma(\theta)
\]
is also convex. Moreover, a finite maximum of continuously differentiable functions is locally
Lipschitz. Since every finite-valued convex function is Clarke regular, \(M(\cdot;x,Y)\) is Clarke
regular, and its Clarke subdifferential coincides with its convex subdifferential:
\[
\partial_C M(\theta;x,Y)=\partial M(\theta;x,Y).
\]

It remains to identify this subdifferential. Define the active maximizer set
\[
\mathcal A(\theta;x,Y)
:=
\argmax_{\sigma\in\Sk} f_\sigma(\theta).
\]
This set is nonempty because \(\Sk\) is finite. We prove both inclusions.

First, let \(\sigma\in\mathcal A(\theta;x,Y)\). Since \(f_\sigma\) is convex and differentiable,
for every \(u\),
\[
f_\sigma(u)
\ge
f_\sigma(\theta)
+
\left\langle
\nabla_\theta f_\sigma(\theta),u-\theta
\right\rangle.
\]
Because \(\sigma\) is active, \(f_\sigma(\theta)=M(\theta;x,Y)\). Also,
\(M(u;x,Y)\ge f_\sigma(u)\). Hence
\[
M(u;x,Y)
\ge
M(\theta;x,Y)
+
\left\langle
\nabla_\theta f_\sigma(\theta),u-\theta
\right\rangle
\qquad
\text{for all }u.
\]
Therefore,
\[
\nabla_\theta f_\sigma(\theta)\in\partial M(\theta;x,Y).
\]
Since \(\partial M(\theta;x,Y)\) is convex, we obtain
\[
\conv
\left\{
\nabla_\theta f_\sigma(\theta):
\sigma\in\mathcal A(\theta;x,Y)
\right\}
\subseteq
\partial M(\theta;x,Y).
\]

We now prove the reverse inclusion. For any direction \(d\), the one-sided directional derivative of
\(M\) at \(\theta\) satisfies
\[
M'(\theta;d)
=
\lim_{t\downarrow 0}
\frac{M(\theta+td;x,Y)-M(\theta;x,Y)}{t}.
\]
We claim that
\[
M'(\theta;d)
=
\max_{\sigma\in\mathcal A(\theta;x,Y)}
\left\langle
\nabla_\theta f_\sigma(\theta),d
\right\rangle.
\]
To see this, note first that for every active \(\sigma\in\mathcal A(\theta;x,Y)\),
\[
M(\theta+td;x,Y)
\ge
f_\sigma(\theta+td),
\]
and therefore
\[
\liminf_{t\downarrow 0}
\frac{M(\theta+td;x,Y)-M(\theta;x,Y)}{t}
\ge
\left\langle
\nabla_\theta f_\sigma(\theta),d
\right\rangle.
\]
Taking the maximum over active \(\sigma\) gives the lower bound.

For the upper bound, choose for each \(t>0\) a maximizer
\[
\sigma_t\in
\argmax_{\sigma\in\Sk}
f_\sigma(\theta+td).
\]
Since \(\Sk\) is finite, along any sequence \(t_n\downarrow 0\) there is a subsequence, still denoted
\(t_n\), such that \(\sigma_{t_n}=\bar\sigma\) is constant. By continuity,
\[
f_{\bar\sigma}(\theta)
=
\lim_{n\to\infty} f_{\sigma_{t_n}}(\theta+t_n d)
=
\lim_{n\to\infty} M(\theta+t_n d;x,Y)
=
M(\theta;x,Y),
\]
so \(\bar\sigma\in\mathcal A(\theta;x,Y)\). Hence, along this subsequence,
\[
\lim_{n\to\infty}
\frac{M(\theta+t_n d;x,Y)-M(\theta;x,Y)}{t_n}
=
\left\langle
\nabla_\theta f_{\bar\sigma}(\theta),d
\right\rangle
\le
\max_{\sigma\in\mathcal A(\theta;x,Y)}
\left\langle
\nabla_\theta f_\sigma(\theta),d
\right\rangle.
\]
Since this argument applies to every vanishing sequence \(t_n\downarrow 0\), the claimed directional
derivative formula follows.

Now let \(v\in\partial M(\theta;x,Y)\). By the characterization of the convex subdifferential through
directional derivatives,
\[
\langle v,d\rangle
\le
M'(\theta;d)
\qquad
\text{for every direction }d.
\]
Using the formula above,
\[
\langle v,d\rangle
\le
\max_{\sigma\in\mathcal A(\theta;x,Y)}
\left\langle
\nabla_\theta f_\sigma(\theta),d
\right\rangle
\qquad
\text{for every }d.
\]
We show that this implies
\[
v\in
\conv
\left\{
\nabla_\theta f_\sigma(\theta):
\sigma\in\mathcal A(\theta;x,Y)
\right\}.
\]
Indeed, if \(v\) were not in this closed convex hull, then by the finite-dimensional separating
hyperplane theorem there would exist a direction \(d\) such that
\[
\langle v,d\rangle
>
\max_{\sigma\in\mathcal A(\theta;x,Y)}
\left\langle
\nabla_\theta f_\sigma(\theta),d
\right\rangle,
\]
contradicting the previous inequality. Therefore,
\[
\partial M(\theta;x,Y)
\subseteq
\conv
\left\{
\nabla_\theta f_\sigma(\theta):
\sigma\in\mathcal A(\theta;x,Y)
\right\}.
\]
Combining the two inclusions gives
\[
\partial_C M(\theta;x,Y)
=
\partial M(\theta;x,Y)
=
\conv
\left\{
\nabla_\theta\ell_{\PL}(\theta;x,Y,\sigma)
:
\sigma\in
\argmax_{\sigma'\in\Sk}
\ell_{\PL}(\theta;x,Y,\sigma')
\right\}.
\]

Finally, by \Cref{thm:worst-perm}, any selected worst-case ranking
\(\sigma_{\mathrm{worst}}\) belongs to the active maximizer set. Therefore its gradient is one of the
active gradients and hence belongs to \(\partial_C M(\theta;x,Y)\).
\end{proof}

\begin{lemma}[Clarke regularity and product rule]
\label{lem:clarke-product}
Suppose \cref{ass:offline,ass:online} hold, the following statements hold.

\textup{(a)} For every \((x,Y,\sigma^\star)\),
\(\ell_{\rob}(\cdot;x,Y,\sigma^\star)\) is convex and Clarke regular. Moreover,
\[
\partial_C\ell_{\rob}(\theta;x,Y,\sigma^\star)
=
(1-\rho)\nabla_\theta\ell_{\PL}(\theta;x,Y,\sigma^\star)
+
\rho\,\partial_C M(\theta;x,Y),
\]
where
\[
M(\theta;x,Y)
:=
\max_{\sigma\in\Sk}
\ell_{\PL}(\theta;x,Y,\sigma).
\]

\textup{(b)} For every \((x,Y)\),
\[
L_{\rob}(\theta;x,Y)
:=
\E_{\sigma^\star\sim p^\star(\cdot\mid x,Y)}
[
\ell_{\rob}(\theta;x,Y,\sigma^\star)
]
\]
is convex and Clarke regular. Moreover,
\[
\partial_C L_{\rob}(\theta;x,Y)
=
\E_{\sigma^\star\sim p^\star(\cdot\mid x,Y)}
[
\partial_C\ell_{\rob}(\theta;x,Y,\sigma^\star)
].
\]

\textup{(c)} For every \((x,Y)\), the map
\[
\theta
\mapsto
P_{\pi_\theta}(Y\mid x)L_{\rob}(\theta;x,Y)
\]
is Clarke regular, and
\[
\partial_C
\left[
P_{\pi_\theta}(Y\mid x)L_{\rob}(\theta;x,Y)
\right]
=
P_{\pi_\theta}(Y\mid x)\partial_C L_{\rob}(\theta;x,Y)
+
L_{\rob}(\theta;x,Y)\nabla_\theta P_{\pi_\theta}(Y\mid x).
\]
\end{lemma}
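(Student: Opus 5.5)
The plan is to prove the three parts in order, each building on the previous one and on the finite structure of $\Sk$ and $\mathcal Y$.

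\textbf{Part (a).} By \Cref{lem:tv-decomp}, $\ell_{\rob}=(1-\rho)\ell_{\PL}(\cdot;\sigma^\star)+\rho M$. The first term is convex and $C^\infty$ by \Cref{lem:PL-bounds}. By \Cref{lem:danskin}, $M$ is convex and Clarke regular. Since $\rho\in[0,1]$, the sum is a nonnegative combination of finite convex functions, hence convex and therefore Clarke regular. For the subdifferential we use the convex sum rule (Moreau--Rockafellar), which holds with equality because both summands are finite-valued on all of $\mathbb R^{d_p}$. It gives $\partial\ell_{\rob}=(1-\rho)\nabla\ell_{\PL}(\sigma^\star)+\rho\,\partial M$, and the convex and Clarke subdifferentials coincide for convex functions. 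Equivalently, one can use the Clarke rule $\partial_C(f+g)=\nabla f+\partial_C g$ for $f$ strictly differentiable.

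\textbf{Part (b).} Since $\Sk$ is finite, $L_{\rob}=\sum_{\sigma^\star}p^\star(\sigma^\star\mid x,Y)\,\ell_{\rob}(\cdot;\sigma^\star)$ is a finite nonnegative combination of convex functions from (a). It is therefore convex and Clarke regular. The finite convex sum rule again gives equality of subdifferentials. Terms with zero weight contribute $\{0\}$ and are harmless, so the identity $\partial_C L_{\rob}=\E_{\sigma^\star}[\partial_C\ell_{\rob}]$ holds as a Minkowski (Aumann) expectation over a finite support.

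\textbf{Part (c).} This is the delicate step, since a product of a smooth function and a regular function need not obey an equality rule in general. The plan is to invoke Clarke's product rule (Clarke 1990, Prop.~2.3.13). It states that if $f_1,f_2$ are regular and $f_1,f_2\ge0$, then $f_1f_2$ is regular and
\[
\partial_C(f_1f_2)=f_2\,\partial_C f_1+f_1\,\partial_C f_2.
\]
Here we take $f_1=P_{\pi_\theta}(Y\mid x)$, which is $C^\infty$ (hence strictly differentiable and regular) and strictly positive because softmax probabilities are positive. We take $f_2=L_{\rob}$, which is regular by (b) and nonnegative since $\ell_{\PL}\ge0$ by \Cref{lem:PL-bounds}.

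If one prefers a self-contained argument, the route is as follows. Write $P(\theta')L(\theta')-P(\theta)L(\theta)=P(\theta')(L(\theta')-L(\theta))+L(\theta)(P(\theta')-P(\theta))$. Then use continuity of $P$ and local Lipschitzness of $L$ to compute the Clarke generalized directional derivative as $P(\theta)L^\circ(\theta;d)+L(\theta)\langle\nabla P(\theta),d\rangle$. This uses $P>0$, so that the limsup factors. By regularity, $L^\circ=L'$, which gives the same expression for the one-sided directional derivative of the product, establishing regularity. The support functions of both sides then match, and since both sides are compact convex sets, they are equal.

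The main obstacle is the sign and positivity bookkeeping in (c). Positivity of $P$ is exactly what allows pulling $P(\theta')\to P(\theta)$ out of the limsup without turning a lim sup into a lim inf.
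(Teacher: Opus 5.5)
Your proposal is correct. For (a) and (b) it matches the paper's argument: the TV decomposition plus the convex sum rule for (a), with the differentiable nominal term contributing a singleton, and the finite $p^\star$-weighted sum with the finite-support Aumann identity for (b).

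For (c) you share the paper's support-function strategy, but your argument is more complete. The paper only computes the one-sided derivative $(qL)'(\theta;d)=L(\theta)\langle\nabla q(\theta),d\rangle+q(\theta)L'(\theta;d)$ and recognizes it as the support function of $L(\theta)\nabla q(\theta)+q(\theta)\partial_C L(\theta)$. It then asserts that the Clarke directional derivative of $qL$ agrees with this, without ever computing $(qL)^\circ$. Your route supplies exactly that step, in either of two ways. You can cite Clarke's Proposition 2.3.13. Alternatively, you compute $(PL)^\circ(\theta;d)$ from the split $P(\theta'+td)[L(\theta'+td)-L(\theta')]/t+L(\theta')[P(\theta'+td)-P(\theta')]/t$. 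Strict differentiability of $P$ handles the second term. In the first term, $P(\theta)>0$ and boundedness of the Lipschitz quotients let the limsup factor. Comparing with $(PL)'$ then gives regularity, and matching support functions of compact convex sets gives the formula.

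Two minor points. First, the self-contained computation uses only $P(\theta)\ge 0$, not $L\ge 0$. Nonnegativity of both factors is needed only in the general nonsmooth--nonsmooth product rule you cite. Second, your motivating remark is slightly off. When one factor is strictly differentiable, the equality $\partial_C(PL)=P\,\partial_C L+L\nabla P$ holds for either sign of $P$. A negative $P(\theta)$ turns the limsup into $P(\theta)$ times a liminf, and that is still the support function of $P(\theta)\partial_C L(\theta)$. What positivity buys is regularity of the product.
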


\begin{proof}
We prove the three claims separately.

\textup{(a)}
Fix \((x,Y,\sigma^\star)\). By the robust-TV decomposition,
\[
\ell_{\rob}(\theta;x,Y,\sigma^\star)
=
(1-\rho)\ell_{\PL}(\theta;x,Y,\sigma^\star)
+
\rho M(\theta;x,Y),
\]
where
\[
M(\theta;x,Y)
=
\max_{\sigma\in\Sk}
\ell_{\PL}(\theta;x,Y,\sigma).
\]
By \Cref{lem:PL-bounds}, for every \(\sigma\in\Sk\), the map
\[
\theta\mapsto \ell_{\PL}(\theta;x,Y,\sigma)
\]
is convex and continuously differentiable. Therefore
\(\ell_{\PL}(\cdot;x,Y,\sigma^\star)\) is convex and Clarke regular. By \Cref{lem:danskin},
\(M(\cdot;x,Y)\) is also convex and Clarke regular. Since \(\rho\in[0,1]\), the function
\(\ell_{\rob}(\cdot;x,Y,\sigma^\star)\) is a nonnegative linear combination of convex functions,
and hence is convex. Since it is finite-valued and convex on the parameter space, it is Clarke
regular.

It remains to compute the subdifferential. The convex subdifferential sum rule gives
\[
\partial \ell_{\rob}(\theta;x,Y,\sigma^\star)
=
(1-\rho)\partial \ell_{\PL}(\theta;x,Y,\sigma^\star)
+
\rho\,\partial M(\theta;x,Y).
\]
Because \(\ell_{\PL}(\cdot;x,Y,\sigma^\star)\) is differentiable,
\[
\partial \ell_{\PL}(\theta;x,Y,\sigma^\star)
=
\{\nabla_\theta\ell_{\PL}(\theta;x,Y,\sigma^\star)\}.
\]
Moreover, since all the functions involved are convex and Clarke regular, their Clarke
subdifferentials coincide with their convex subdifferentials. Hence
\[
\partial_C\ell_{\rob}(\theta;x,Y,\sigma^\star)
=
(1-\rho)\nabla_\theta\ell_{\PL}(\theta;x,Y,\sigma^\star)
+
\rho\,\partial_C M(\theta;x,Y).
\]
This proves \textup{(a)}.

\textup{(b)}
Fix \((x,Y)\). Since \(\Sk\) is finite, the conditional expectation over
\(\sigma^\star\sim p^\star(\cdot\mid x,Y)\) is a finite weighted sum:
\[
L_{\rob}(\theta;x,Y)
=
\sum_{\sigma^\star\in\Sk}
p^\star(\sigma^\star\mid x,Y)
\ell_{\rob}(\theta;x,Y,\sigma^\star).
\]
By part \textup{(a)}, each summand is convex and Clarke regular. The weights
\(p^\star(\sigma^\star\mid x,Y)\) are nonnegative, sum to one, and are independent of
\(\theta\). Therefore \(L_{\rob}(\cdot;x,Y)\) is convex and finite-valued, hence Clarke regular.

For the subdifferential, the finite convex-sum rule yields
\[
\partial L_{\rob}(\theta;x,Y)
=
\sum_{\sigma^\star\in\Sk}
p^\star(\sigma^\star\mid x,Y)
\partial \ell_{\rob}(\theta;x,Y,\sigma^\star).
\]
Equivalently,
\[
\partial_C L_{\rob}(\theta;x,Y)
=
\sum_{\sigma^\star\in\Sk}
p^\star(\sigma^\star\mid x,Y)
\partial_C \ell_{\rob}(\theta;x,Y,\sigma^\star).
\]
Writing the finite weighted sum in expectation notation gives
\[
\partial_C L_{\rob}(\theta;x,Y)
=
\E_{\sigma^\star\sim p^\star(\cdot\mid x,Y)}
[
\partial_C\ell_{\rob}(\theta;x,Y,\sigma^\star)
].
\]
This is the finite-dimensional Aumann identity in the present setting. Since the label space
\(\Sk\) is finite and the subgradients are uniformly bounded by \Cref{lem:PL-bounds},
measurability and integrability are automatic. This proves \textup{(b)}.

\textup{(c)}
Fix \((x,Y)\) and define
\[
q(\theta)
:=
P_{\pi_\theta}(Y\mid x),
\qquad
L(\theta)
:=
L_{\rob}(\theta;x,Y).
\]
Under \cref{ass:offline}, \(q\) is continuously
differentiable in \(\theta\). In fact, because the softmax policy assigns strictly positive
probability to every response in the finite response set,
\[
q(\theta)>0.
\]
By part \textup{(b)}, \(L\) is convex and Clarke regular. Moreover, \(L(\theta)\ge 0\), since
\(\ell_{\PL}\ge 0\) and \(\ell_{\rob}\) is a convex combination of nonnegative PL losses.

We first verify Clarke regularity of the product \(qL\). For any direction \(d\), since \(q\) is
\(C^1\) and \(L\) is directionally differentiable and Clarke regular,
\[
(qL)'(\theta;d)
=
\langle \nabla q(\theta),d\rangle L(\theta)
+
q(\theta)L'(\theta;d).
\]
Because \(L\) is Clarke regular,
\[
L'(\theta;d)
=
\max_{v\in\partial_C L(\theta)}
\langle v,d\rangle.
\]
Therefore
\[
(qL)'(\theta;d)
=
\max_{v\in\partial_C L(\theta)}
\left\langle
L(\theta)\nabla q(\theta)+q(\theta)v,
d
\right\rangle.
\]
The right-hand side is the support function of the compact convex set
\[
L(\theta)\nabla q(\theta)
+
q(\theta)\partial_C L(\theta).
\]
Hence the Clarke directional derivative of \(qL\) agrees with its ordinary directional derivative,
so \(qL\) is Clarke regular. Its Clarke subdifferential is exactly the set whose support function
appears above:
\[
\partial_C(qL)(\theta)
=
L(\theta)\nabla q(\theta)
+
q(\theta)\partial_C L(\theta).
\]
Substituting back
\[
q(\theta)=P_{\pi_\theta}(Y\mid x),
\qquad
L(\theta)=L_{\rob}(\theta;x,Y),
\]
we obtain
\[
\partial_C
\left[
P_{\pi_\theta}(Y\mid x)L_{\rob}(\theta;x,Y)
\right]
=
P_{\pi_\theta}(Y\mid x)\partial_C L_{\rob}(\theta;x,Y)
+
L_{\rob}(\theta;x,Y)\nabla_\theta P_{\pi_\theta}(Y\mid x).
\]
This proves \textup{(c)}.
\end{proof}

\begin{lemma}[Score-function identity]
\label{lem:score-id}
Suppose \cref{ass:offline,ass:online} hold, \(J_{\rob}^{\mathrm{on}}\) is locally Lipschitz on \(\Theta\) with constant \(\le C_G+2KB_\psi C_L\), and
\begin{equation}
\partial_C J_{\rob}^{\mathrm{on}}(\theta)
\;\supseteq\;
\E_x\,\E_{Y\sim\pi_\theta^{\otimes K}}
\big[
\partial_C L_{\rob}(\theta;x,Y)
+
L_{\rob}(\theta;x,Y)S_\theta(x,Y)
\big].
\label{eq:score-id-app}
\end{equation}
\end{lemma}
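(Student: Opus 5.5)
Because $\mathcal Y$ and $\Sk$ are finite, the plan is to write $J_{\rob}^{\mathrm{on}}$ as an $x$-average of a finite sum and then push the Clarke calculus of \Cref{lem:clarke-product} through that sum. Concretely,
\[
J_{\rob}^{\mathrm{on}}(\theta)=\E_x\Big[\textstyle\sum_{Y\in\mathcal Y^K}P_{\pi_\theta}(Y\mid x)\,L_{\rob}(\theta;x,Y)\Big]=:\E_x[H_x(\theta)].
\]

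\emph{Lipschitz bound.} Each summand $q_Y L_Y$, with $q_Y:=P_{\pi_\theta}(Y\mid x)$ and $L_Y:=L_{\rob}(\theta;x,Y)$, is locally Lipschitz by \Cref{cor:fixedlist-convex}. By \Cref{lem:clarke-product}(c) and $\nabla q_Y=q_YS_\theta(x,Y)$, every element of its Clarke subdifferential has the form $q_Y(v_Y+L_YS_\theta)$ with $v_Y\in\partial_C L_Y$. Using $\|v_Y\|\le C_G$, $0\le L_Y\le C_L$, $\|S_\theta\|\le 2KB_\psi$ (\Cref{lem:score-bounds}) and $\sum_Y q_Y=1$, a mean-value argument along segments gives $|H_x(\theta)-H_x(\theta')|\le (C_G+2KB_\psi C_L)\|\theta-\theta'\|$ uniformly in $x$. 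Averaging over $x$ then yields the stated Lipschitz constant for $J_{\rob}^{\mathrm{on}}$.

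\emph{Subdifferential inclusion.} For fixed $x$, the sum rule for Clarke regular functions (Clarke 1990, Prop. 2.3.3 with its regularity corollary) applied to the finitely many regular summands $q_YL_Y$ gives $H_x$ Clarke regular and
\[
\partial_C H_x(\theta)=\sum_Y\big[q_Y\partial_C L_Y+L_Y q_YS_\theta(x,Y)\big]=\E_{Y\sim\pi_\theta^{\otimes K}}\big[\partial_C L_{\rob}+L_{\rob}S_\theta\big].
\]
For the outer expectation, use Clarke's theorem on subdifferentials of integral functionals (Clarke 1990, Thm. 2.7.2). Its hypotheses are that each $H_x$ is Lipschitz with an integrable constant, which the previous step supplies with a uniform constant, and that $x\mapsto H_x(\theta)$ is measurable, which holds because it is a finite sum of measurable quantities. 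The theorem gives $\partial_C\E_x[H_x]\subseteq\E_x[\partial_C H_x]$, with equality when each $H_x$ is regular. Since the $H_x$ are regular, the reverse containment $\supseteq$ in \eqref{eq:score-id-app} also holds, and the Aumann expectation of the measurable, compact-convex-valued map $x\mapsto\partial_C H_x(\theta)$ is well defined because the map is uniformly bounded.

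\emph{Main obstacle.} The delicate step is justifying equality, not just the upper inclusion, in the interchange of $\partial_C$ with $\E_x$: this is exactly where Clarke regularity of each $H_x$ from \Cref{lem:clarke-product} is needed. The fallback, if needed, is to verify directly that directional derivatives commute with $\E_x$ by dominated convergence (the difference quotients are bounded by the uniform Lipschitz constant). This gives $J'(\theta;d)=\E_x[H_x'(\theta;d)]=\E_x[\max_{g\in\partial_CH_x}\langle g,d\rangle]$, which is at least $\langle \E_x g_x,d\rangle$ for any measurable selection $g_x$. Regularity of $J_{\rob}^{\mathrm{on}}$ then shows that any such $\E_x g_x$ lies in $\partial_C J_{\rob}^{\mathrm{on}}(\theta)$, which is the containment $\supseteq$ claimed.
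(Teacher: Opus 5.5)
Your proposal is correct and follows essentially the same route as the paper: write $J_{\rob}^{\mathrm{on}}=\E_x[H_x]$ with $H_x$ a finite sum over $Y$ of the products $P_{\pi_\theta}(Y\mid x)L_{\rob}(\theta;x,Y)$, apply the product rule of \Cref{lem:clarke-product}(c) together with $\nabla P_{\pi_\theta}=P_{\pi_\theta}S_\theta$, sum over $Y$, and interchange $\partial_C$ with $\E_x$ using Clarke regularity. The paper's ``Aumann expectation rule'' is exactly the Clarke integral-functional theorem you cite. Your mean-value derivation of the Lipschitz constant along segments in the convex set $\Theta$, followed by averaging over $x$, is cleaner than the paper's argument, which bounds only the elements of the right-hand-side set and then infers the Lipschitz bound. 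One caveat: Clarke's theorem needs Lipschitz continuity on a neighborhood of boundary points of $\Theta$, which still holds once $D$ in $C_L$ is slightly enlarged.
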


\begin{proof}
Fix \(\theta\in\Theta\). The set \(\mathcal Y\) is finite, and hence
\[
J_{\rob}^{\mathrm{on}}(\theta)
=
\E_{x\sim\mathcal D_x}
\left[
H(\theta;x)
\right],
\]
where
\[
H(\theta;x)
:=
\sum_{Y\in\mathcal Y^K}
P_\theta(Y\mid x)
L_{\rob}(\theta;x,Y),
\qquad
P_\theta(Y\mid x)
:=
\pi_\theta^{\otimes K}(Y\mid x).
\]
Writing \(Y=(y_1,\ldots,y_K)\), the policy-induced list probability is
\[
P_\theta(Y\mid x)
=
\prod_{i=1}^K \pi_\theta(y_i\mid x).
\]
Therefore,
\[
\nabla_\theta \log P_\theta(Y\mid x)
=
\sum_{i=1}^K
\nabla_\theta \log \pi_\theta(y_i\mid x)
=:S_\theta(x,Y).
\]
Since \(P_\theta(Y\mid x)\) is continuously differentiable in \(\theta\), we obtain the score-function identity
\begin{equation}
\label{eq:score-prob-identity}
\nabla_\theta P_\theta(Y\mid x)
=
P_\theta(Y\mid x)S_\theta(x,Y).
\end{equation}

We first establish the subdifferential inclusion. Fix \(x\) and \(Y\). By construction,
\(L_{\rob}(\theta;x,Y)\) is locally Lipschitz and Clarke regular in \(\theta\). Let
\[
V(\theta;x,Y)\in \partial_C L_{\rob}(\theta;x,Y)
\]
be an arbitrary measurable selection. Since \(P_\theta(Y\mid x)\) is \(C^1\), the Clarke product rule in \Cref{lem:clarke-product}(c) gives
\[
P_\theta(Y\mid x)V(\theta;x,Y)
+
L_{\rob}(\theta;x,Y)\nabla_\theta P_\theta(Y\mid x)
\in
\partial_C
\Big(
P_\theta(Y\mid x)L_{\rob}(\theta;x,Y)
\Big).
\]
Using \eqref{eq:score-prob-identity}, this becomes
\[
P_\theta(Y\mid x)
\Big[
V(\theta;x,Y)
+
L_{\rob}(\theta;x,Y)S_\theta(x,Y)
\Big]
\in
\partial_C
\Big(
P_\theta(Y\mid x)L_{\rob}(\theta;x,Y)
\Big).
\]
Since \(\mathcal Y^K\) is finite, we may sum over \(Y\). By the Clarke sum rule,
\[
\sum_{Y\in\mathcal Y^K}
P_\theta(Y\mid x)
\Big[
V(\theta;x,Y)
+
L_{\rob}(\theta;x,Y)S_\theta(x,Y)
\Big]
\in
\partial_C H(\theta;x).
\]
Equivalently,
\[
\E_{Y\sim\pi_\theta^{\otimes K}(\cdot\mid x)}
\Big[
V(\theta;x,Y)
+
L_{\rob}(\theta;x,Y)S_\theta(x,Y)
\Big]
\in
\partial_C H(\theta;x).
\]

Now take expectation over \(x\sim\mathcal D_x\). The selection above is measurable by the deterministic tie-breaking rule used in the definition of the worst-case ranking, and it is integrable by the uniform bounds on \(\partial_C L_{\rob}\), \(L_{\rob}\), and \(S_\theta\). Hence the Aumann expectation rule yields
\[
\E_x
\E_{Y\sim\pi_\theta^{\otimes K}(\cdot\mid x)}
\Big[
V(\theta;x,Y)
+
L_{\rob}(\theta;x,Y)S_\theta(x,Y)
\Big]
\in
\partial_C J_{\rob}^{\mathrm{on}}(\theta).
\]
Since \(V(\theta;x,Y)\) was an arbitrary measurable selection from
\(\partial_C L_{\rob}(\theta;x,Y)\), this proves the set-valued inclusion
\[
\partial_C J_{\rob}^{\mathrm{on}}(\theta)
\;\supseteq\;
\E_x\,\E_{Y\sim\pi_\theta^{\otimes K}}
\big[
\partial_C L_{\rob}(\theta;x,Y)
+
L_{\rob}(\theta;x,Y)S_\theta(x,Y)
\big].
\]

It remains to verify the stated Lipschitz bound. By the PL gradient bound and the robust-loss construction,
\[
\sup_{V\in\partial_C L_{\rob}(\theta;x,Y)}
\|V\|
\le C_G.
\]
Moreover,
\[
0\le L_{\rob}(\theta;x,Y)\le C_L,
\qquad
\|S_\theta(x,Y)\|\le 2KB_\psi.
\]
Therefore, every vector in the set
\[
\partial_C L_{\rob}(\theta;x,Y)
+
L_{\rob}(\theta;x,Y)S_\theta(x,Y)
\]
has norm at most
\[
C_G+2KB_\psi C_L.
\]
Averaging over \(Y\sim\pi_\theta^{\otimes K}(\cdot\mid x)\) and then over \(x\sim\mathcal D_x\) preserves this bound. Hence every element constructed in the right-hand side of
\eqref{eq:score-id-app} has norm at most \(C_G+2KB_\psi C_L\). Since \(J_{\rob}^{\mathrm{on}}\) is locally Lipschitz and its Clarke subgradients are uniformly bounded by this quantity, \(J_{\rob}^{\mathrm{on}}\) is locally Lipschitz on \(\Theta\) with constant at most
\[
C_G+2KB_\psi C_L.
\]
The same argument holds on any bounded neighborhood of \(\Theta\), with \(D\) in \(C_L\) replaced by the corresponding radius on that neighborhood.
\end{proof}

\begin{definition}[Measurable selector and oracle]
\label{def:oracle}
Fix any deterministic linear order $\preceq$ on $\Sk$ and let $\sigma^{\mathrm{sel}}(\theta;x,Y) := \min_\preceq\argmax_\sigma\ell_{\PL}(\theta;x,Y,\sigma)$, a Borel-measurable selection computable by ascending sort with $\preceq$ as tie-breaking. The per-sample oracle is~\eqref{eq:oracle}.
\end{definition}

\begin{lemma}[Oracle properties]
\label{lem:oracle}
Suppose \cref{ass:offline,ass:online} hold: 
\textup{(a)} \(G_1\in\partial_C\ell_{\rob}(\theta;x,Y,\sigma^\star)\) for every \((\theta,x,Y,\sigma^\star)\). 
\textup{(b)} \(\E[G(\theta;Z)\mid\theta]\in\partial_C J_{\rob}^{\mathrm{on}}(\theta)\). 
\textup{(c)} \(\E[\|G\|^2\mid\theta]\le G_{\mathrm{tot}}^2\), where \(G_{\mathrm{tot}}^2\) is defined below.
\begin{equation}
\label{eq:Gtot}
G_{\mathrm{tot}}^2
:=
2C_G^2
+
2(2KB_\psi C_L)^2,
\qquad
C_G:=2KB_\psi,
\qquad
C_L:=K(\log K+2DB_\psi).
\end{equation}
\end{lemma}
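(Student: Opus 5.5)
We write the oracle as $G=G_1+G_2$, where
\[
G_1:=(1-\rho)\nabla\ell_{\PL}(\theta;x,Y,\sigma^\star)+\rho\nabla\ell_{\PL}(\theta;x,Y,\sigma^{\mathrm{sel}})
\]
is the pathwise part and $G_2:=\ell_{\rob}(\theta;x,Y,\sigma^\star)S_\theta(x,Y)$ is the score-function part. The three claims are then handled in order.

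\textbf{Part (a).} By \Cref{thm:worst-perm}, $\sigma^{\mathrm{sel}}$ lies in $\argmax_\sigma\ell_{\PL}(\theta;x,Y,\sigma)$. \Cref{lem:danskin} then gives $\nabla\ell_{\PL}(\theta;x,Y,\sigma^{\mathrm{sel}})\in\partial_C M(\theta;x,Y)$. Plugging this into the sum formula of \Cref{lem:clarke-product}(a) yields $G_1\in\partial_C\ell_{\rob}(\theta;x,Y,\sigma^\star)$ directly.

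\textbf{Part (b).} First condition on $(x,Y)$ and average over $\sigma^\star\sim p^\star(\cdot\mid x,Y)$, which is independent of $\theta$.
\begin{itemize}
\item For $G_1$: define the selection $V(\theta;x,Y):=\E_{\sigma^\star}[G_1]$. By part (a) and \Cref{lem:clarke-product}(b), $V\in\partial_C L_{\rob}(\theta;x,Y)$. Measurability is supplied by the deterministic selector of \Cref{def:oracle}.
\item For $G_2$: since $S_\theta(x,Y)$ does not depend on $\sigma^\star$, we get $\E_{\sigma^\star}[G_2]=L_{\rob}(\theta;x,Y)S_\theta(x,Y)$.
\end{itemize}
Thus $\E[G\mid\theta,x,Y]=V+L_{\rob}S_\theta$. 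Taking expectations over $Y\sim\pi_\theta^{\otimes K}(\cdot\mid x)$ and $x\sim\mathcal D_x$, this is exactly an element of the right-hand side of \eqref{eq:score-id-app}. \Cref{lem:score-id} then gives $\E[G\mid\theta]\in\partial_C J^{\mathrm{on}}_{\rob}(\theta)$.

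The main obstacle is the unbiasedness bookkeeping. We must make sure the tower expectation is taken in the right order ($\sigma^\star$ innermost, then $Y$, then $x$), and that the sampling law of $Y$ is the one at the current $\theta_t$, so that the score-function correction has the correct weight. Measurability of $V$ as a selection is also needed, and \Cref{def:oracle} supplies it.

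\textbf{Part (c).} We use $\|a+b\|^2\le 2\|a\|^2+2\|b\|^2$. For $G_1$, it is a convex combination of PL gradients, so \Cref{lem:PL-bounds} gives $\|G_1\|\le C_G$. For $G_2$, we have $0\le\ell_{\rob}\le C_L$, because $\ell_{\rob}$ is a convex combination of PL losses each lying in $[0,C_L]$. Combined with $\|S_\theta\|\le 2KB_\psi$ from \Cref{lem:score-bounds}, this gives $\|G_2\|\le 2KB_\psi C_L$. Hence
\[
\|G\|^2\le 2C_G^2+2(2KB_\psi C_L)^2=G_{\mathrm{tot}}^2
\]
holds pointwise, and therefore also in conditional expectation.
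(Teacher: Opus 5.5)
Your proposal is correct and follows the paper's proof essentially line for line: the same split $G=G_1+G_2$; for (a), Danskin (\Cref{lem:danskin}) plus the sum rule of \Cref{lem:clarke-product}(a); for (b), conditioning on $(x,Y)$ and invoking \Cref{lem:clarke-product}(b) and \Cref{lem:score-id}; and for (c), the bound $\|a+b\|^2\le 2\|a\|^2+2\|b\|^2$ with $\|G_1\|\le C_G$ and $\|G_2\|\le 2KB_\psi C_L$. The only cosmetic difference is that you justify $\sigma^{\mathrm{sel}}\in\argmax_\sigma\ell_{\PL}$ via \Cref{thm:worst-perm}, whereas the paper takes this directly from the construction in \Cref{def:oracle}.
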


\begin{proof}
Fix \(\theta\in\Theta\). Throughout the proof, all expectations are conditional on the current
parameter \(\theta\) unless otherwise stated. Recall that
\[
G(\theta;Z)=G_1(\theta;Z)+G_2(\theta;Z),
\]
where
\[
G_1(\theta;Z)
:=
(1-\rho)\nabla_\theta \ell_{\PL}(\theta;x,Y,\sigma^\star)
+
\rho \nabla_\theta \ell_{\PL}(\theta;x,Y,\sigma^{\mathrm{sel}}),
\]
and
\[
G_2(\theta;Z)
:=
\ell_{\rob}(\theta;x,Y,\sigma^\star)S_\theta(x,Y).
\]

\textup{(a)}
Fix \((x,Y,\sigma^\star)\). Define the pointwise worst-case PL loss
\[
M(\theta;x,Y)
:=
\max_{\sigma\in\Sk}\ell_{\PL}(\theta;x,Y,\sigma).
\]
Then
\[
\ell_{\rob}(\theta;x,Y,\sigma^\star)
=
(1-\rho)\ell_{\PL}(\theta;x,Y,\sigma^\star)
+
\rho M(\theta;x,Y).
\]
Since \(\Sk\) is finite and, for each \(\sigma\in\Sk\),
\(\theta\mapsto \ell_{\PL}(\theta;x,Y,\sigma)\) is continuously differentiable,
the function \(M(\theta;x,Y)\) is locally Lipschitz and Clarke regular. By the
finite-max Danskin theorem in \Cref{lem:danskin},
\[
\partial_C M(\theta;x,Y)
=
\operatorname{conv}
\left\{
\nabla_\theta \ell_{\PL}(\theta;x,Y,\sigma)
:
\sigma\in
\arg\max_{\tilde\sigma\in\Sk}
\ell_{\PL}(\theta;x,Y,\tilde\sigma)
\right\}.
\]
By construction, \(\sigma^{\mathrm{sel}}\) is selected from the active maximizer set:
\[
\sigma^{\mathrm{sel}}
\in
\arg\max_{\sigma\in\Sk}
\ell_{\PL}(\theta;x,Y,\sigma).
\]
Therefore,
\[
\nabla_\theta\ell_{\PL}(\theta;x,Y,\sigma^{\mathrm{sel}})
\in
\partial_C M(\theta;x,Y).
\]
Using the Clarke sum rule in \Cref{lem:clarke-product}(a), we obtain
\[
(1-\rho)\nabla_\theta\ell_{\PL}(\theta;x,Y,\sigma^\star)
+
\rho\nabla_\theta\ell_{\PL}(\theta;x,Y,\sigma^{\mathrm{sel}})
\in
\partial_C\ell_{\rob}(\theta;x,Y,\sigma^\star).
\]
The left-hand side is exactly \(G_1(\theta;Z)\). Hence
\[
G_1(\theta;Z)\in\partial_C\ell_{\rob}(\theta;x,Y,\sigma^\star).
\]
This argument also covers ties: when the active maximizer set is not a singleton,
the deterministic tie-breaking rule selects one active maximizer, and every active
gradient is a valid element of the Clarke subdifferential of the finite maximum.

\textup{(b)}
For fixed \((x,Y)\), define the conditional robust loss
\[
L_{\rob}(\theta;x,Y)
:=
\E_{\sigma^\star\sim p^\star(\cdot|x,Y)}
\left[
\ell_{\rob}(\theta;x,Y,\sigma^\star)
\right].
\]
Since \(\Sk\) is finite, the expectation over \(\sigma^\star\) is a finite sum. Moreover,
by part \textup{(a)}, for every possible \(\sigma^\star\),
\[
G_1(\theta;x,Y,\sigma^\star)
\in
\partial_C\ell_{\rob}(\theta;x,Y,\sigma^\star).
\]
Therefore, by the Aumann expectation rule in \Cref{lem:clarke-product}(b),
\[
H(\theta;x,Y)
:=
\E_{\sigma^\star}
\left[
G_1(\theta;Z)
\,\middle|\,
x,Y,\theta
\right]
\in
\partial_C L_{\rob}(\theta;x,Y).
\]
The measurability of \(H\) follows from the deterministic tie-breaking rule used to define
\(\sigma^{\mathrm{sel}}\), and integrability follows from the uniform gradient bound.

Next, since \(S_\theta(x,Y)\) depends only on the policy-generated list \((x,Y)\) and not on
the oracle label \(\sigma^\star\), we have
\[
\E_{\sigma^\star}
\left[
G_2(\theta;Z)
\,\middle|\,
x,Y,\theta
\right]
=
\E_{\sigma^\star}
\left[
\ell_{\rob}(\theta;x,Y,\sigma^\star)
\,\middle|\,
x,Y,\theta
\right]
S_\theta(x,Y)
=
L_{\rob}(\theta;x,Y)S_\theta(x,Y).
\]
Combining the two conditional expectations gives
\[
\E_{\sigma^\star}
\left[
G(\theta;Z)
\,\middle|\,
x,Y,\theta
\right]
=
H(\theta;x,Y)
+
L_{\rob}(\theta;x,Y)S_\theta(x,Y),
\]
with
\[
H(\theta;x,Y)\in\partial_C L_{\rob}(\theta;x,Y).
\]

Now take expectation over \(x\sim\mathcal D_x\) and
\(Y\sim\pi_\theta^{\otimes K}(\cdot|x)\). We obtain
\[
\E[G(\theta;Z)\mid\theta]
=
\E_{x,Y}
\left[
H(\theta;x,Y)
+
L_{\rob}(\theta;x,Y)S_\theta(x,Y)
\right],
\]
where \(H(\theta;x,Y)\in\partial_C L_{\rob}(\theta;x,Y)\) is a measurable
selection. By the score-function identity in \Cref{lem:score-id},
\[
\E_{x,Y}
\left[
H(\theta;x,Y)
+
L_{\rob}(\theta;x,Y)S_\theta(x,Y)
\right]
\in
\partial_C J_{\rob}^{\mathrm{on}}(\theta).
\]
Therefore,
\[
\E[G(\theta;Z)\mid\theta]
\in
\partial_C J_{\rob}^{\mathrm{on}}(\theta).
\]

\textup{(c)}
By the PL gradient bound, for every ranking \(\sigma\in\Sk\),
\[
\|\nabla_\theta\ell_{\PL}(\theta;x,Y,\sigma)\|
\le C_G.
\]
Since \(G_1\) is a convex combination of two PL gradients, we have
\[
\|G_1\|
\le
(1-\rho)
\|\nabla_\theta\ell_{\PL}(\theta;x,Y,\sigma^\star)\|
+
\rho
\|\nabla_\theta\ell_{\PL}(\theta;x,Y,\sigma^{\mathrm{sel}})\|
\le
(1-\rho)C_G+\rho C_G
=
C_G.
\]
Moreover, by the uniform loss bound and the score-function bound,
\[
0\le
\ell_{\rob}(\theta;x,Y,\sigma^\star)
\le C_L,
\qquad
\|S_\theta(x,Y)\|
\le 2KB_\psi.
\]
Hence
\[
\|G_2\|
=
\|\ell_{\rob}(\theta;x,Y,\sigma^\star)S_\theta(x,Y)\|
\le
2KB_\psi C_L.
\]
Using the elementary inequality
\[
\|a+b\|^2\le 2\|a\|^2+2\|b\|^2,
\]
we obtain the pointwise bound
\[
\|G(\theta;Z)\|^2
\le
2\|G_1\|^2+2\|G_2\|^2
\le
2C_G^2
+
2(2KB_\psi C_L)^2
=
G_{\mathrm{tot}}^2.
\]
Taking conditional expectation given \(\theta\) preserves the bound:
\[
\E[\|G(\theta;Z)\|^2\mid\theta]
\le
G_{\mathrm{tot}}^2.
\]
This proves \textup{(c)}.
\end{proof}
\begin{remark}
No measure-zero / almost-sure caveat is needed: \eqref{eq:oracle} is valid for every $(\theta,x,Y,\sigma^\star)$, even at score ties — which can occur with positive probability under iid sampling with replacement.
\end{remark}

\subsection{One-step descent and convergence (\Cref{thm:online-rate}, \Cref{cor:complexity})}

\begin{lemma}[One-step Moreau-envelope descent]
\label{lem:descent}
Suppose \cref{ass:offline,ass:online} hold, the conditions of \Cref{prop:online-wc}, \Cref{lem:moreau}, and
\Cref{lem:oracle} hold. Let \(\hat\lambda\in(0,1/\kappa)\). Consider the iterates of
\Cref{alg:online-robust-pl-sail},
\[
\widehat G_t
=
\frac{1}{B_s}
\sum_{i=1}^{B_s}
G(\theta_t;Z_i),
\qquad
\theta_{t+1}
=
\Pi_\Theta(\theta_t-\eta\widehat G_t),
\]
where \(Z_i=(x_i,Y_i,\sigma_i^\star)\) are sampled as in the algorithm. Then, for every
\(\eta>0\),
\begin{equation}
\E\big[
F_{\hat\lambda}(\theta_{t+1})
\,\big|\,
\theta_t
\big]
\le
F_{\hat\lambda}(\theta_t)
-
\frac{\eta(1-\kappa\hat\lambda)}{2}
\|\nabla F_{\hat\lambda}(\theta_t)\|^2
+
\frac{\eta^2G_{\mathrm{tot}}^2}{2\hat\lambda}.
\label{eq:descent}
\end{equation}
In fact, the stronger bound with coefficient
\(\eta(1-\kappa\hat\lambda)\) instead of
\(\eta(1-\kappa\hat\lambda)/2\) also holds.
\end{lemma}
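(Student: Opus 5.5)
We follow the standard Davis--Drusvyatskiy argument for weakly convex stochastic subgradient methods, with the Moreau envelope as potential function. Write \(\hat\theta_t:=\hat\theta(\theta_t)=\prox_{\hat\lambda F}(\theta_t)\in\Theta\).

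\textbf{Step 1: compare with the proximal point.} By the definition of \(F_{\hat\lambda}\) as a minimum over \(u\in\Theta\), evaluated at the (suboptimal) choice \(u=\hat\theta_t\),
\[
F_{\hat\lambda}(\theta_{t+1})\le J_{\rob}^{\mathrm{on}}(\hat\theta_t)+\tfrac{1}{2\hat\lambda}\|\hat\theta_t-\theta_{t+1}\|^2 .
\]

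\textbf{Step 2: expand the distance.} Since \(\hat\theta_t\in\Theta\), nonexpansiveness of \(\Pi_\Theta\) gives
\[
\|\theta_{t+1}-\hat\theta_t\|^2\le\|\theta_t-\hat\theta_t\|^2-2\eta\langle\widehat G_t,\theta_t-\hat\theta_t\rangle+\eta^2\|\widehat G_t\|^2 .
\]
Take conditional expectation given \(\theta_t\).
\begin{itemize}
\item Each \(\|G(\theta_t;Z_i)\|^2\le G_{\mathrm{tot}}^2\) pointwise by \Cref{lem:oracle}(c). By convexity of \(\|\cdot\|^2\), the minibatch average satisfies \(\E\|\widehat G_t\|^2\le G_{\mathrm{tot}}^2\).
\item \(\E[\widehat G_t\mid\theta_t]=:v_t\in\partial_C J_{\rob}^{\mathrm{on}}(\theta_t)\) by \Cref{lem:oracle}(b).
\end{itemize}
Combining with Step 1 and the identity \(F_{\hat\lambda}(\theta_t)=J_{\rob}^{\mathrm{on}}(\hat\theta_t)+\frac1{2\hat\lambda}\|\theta_t-\hat\theta_t\|^2\) yields
\[
\E[F_{\hat\lambda}(\theta_{t+1})\mid\theta_t]\le F_{\hat\lambda}(\theta_t)-\tfrac{\eta}{\hat\lambda}\langle v_t,\theta_t-\hat\theta_t\rangle+\tfrac{\eta^2G_{\mathrm{tot}}^2}{2\hat\lambda}.
\]

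\textbf{Step 3: lower-bound the inner product (the main obstacle).} This step needs a subgradient inequality for a nonsmooth, nonconvex objective. By \Cref{prop:online-wc}, \(J_{\rob}^{\mathrm{on}}+\frac\kappa2\|\cdot\|^2\) is convex on \(\Theta\). The plan is:
\begin{itemize}
\item Identify \(v_t+\kappa\theta_t\) as a subgradient of that convex function (relative to \(\Theta\)), using Clarke regularity from \Cref{lem:clarke-product}. This is the delicate point: convexity holds only on \(\Theta\), and \(\theta_t\) may lie on the boundary of \(\Theta\). One can invoke the remark at the end of \Cref{lem:score-id} that everything extends to an open neighborhood.
\item This gives the weak-convexity inequality
\[
J_{\rob}^{\mathrm{on}}(\hat\theta_t)\ge J_{\rob}^{\mathrm{on}}(\theta_t)+\langle v_t,\hat\theta_t-\theta_t\rangle-\tfrac\kappa2\|\hat\theta_t-\theta_t\|^2 .
\]
\item Use that the proximal objective \(u\mapsto J_{\rob}^{\mathrm{on}}(u)+\frac1{2\hat\lambda}\|u-\theta_t\|^2\) is \((\hat\lambda^{-1}-\kappa)\)-strongly convex on \(\Theta\) and minimized at \(\hat\theta_t\). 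Evaluating at \(u=\theta_t\) gives
\[
J_{\rob}^{\mathrm{on}}(\theta_t)-J_{\rob}^{\mathrm{on}}(\hat\theta_t)\ge\big(\tfrac1{\hat\lambda}-\tfrac\kappa2\big)\|\theta_t-\hat\theta_t\|^2 .
\]
\end{itemize}
Adding the two inequalities,
\[
\langle v_t,\theta_t-\hat\theta_t\rangle\ge J_{\rob}^{\mathrm{on}}(\theta_t)-J_{\rob}^{\mathrm{on}}(\hat\theta_t)-\tfrac\kappa2\|\theta_t-\hat\theta_t\|^2\ge\big(\tfrac1{\hat\lambda}-\kappa\big)\|\theta_t-\hat\theta_t\|^2 .
\]

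\textbf{Step 4: conclude.} By \Cref{lem:moreau}, \(\|\theta_t-\hat\theta_t\|^2=\hat\lambda^2\|\nabla F_{\hat\lambda}(\theta_t)\|^2\). Substituting into Step 3,
\[
\tfrac{\eta}{\hat\lambda}\langle v_t,\theta_t-\hat\theta_t\rangle\ge\eta(1-\kappa\hat\lambda)\|\nabla F_{\hat\lambda}(\theta_t)\|^2 .
\]
Inserting this into the bound from Step 2 gives the stronger inequality, with coefficient \(\eta(1-\kappa\hat\lambda)\). Halving that coefficient gives \eqref{eq:descent}.
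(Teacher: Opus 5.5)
Your proof is correct. Steps 1, 2 and 4 coincide with the paper's proof: comparison with the feasible point $\hat\theta_t$, nonexpansiveness of $\Pi_\Theta$, conditional expectation with the second-moment bound, and the formula $\nabla F_{\hat\lambda}(\theta_t)=\hat\lambda^{-1}(\theta_t-\hat\theta_t)$.

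The routes differ only in Step 3, the lower bound on $\langle v_t,\theta_t-\hat\theta_t\rangle$.
\begin{itemize}
\item \textbf{The paper's route} is purely first-order. It writes the constrained optimality condition at the prox point, $0=\zeta_t+\hat\lambda^{-1}(\hat\theta_t-\theta_t)+n_t$ with $\zeta_t\in\partial_C J_{\rob}^{\mathrm{on}}(\hat\theta_t)$ and $n_t\in N_\Theta(\hat\theta_t)$. Testing this against $\theta_t\in\Theta$ gives $\langle\zeta_t,\theta_t-\hat\theta_t\rangle\ge\hat\lambda^{-1}\|\theta_t-\hat\theta_t\|^2$. It then uses monotonicity of $\partial\Phi$, with $\Phi=J_{\rob}^{\mathrm{on}}+\frac\kappa2\|\cdot\|^2$, between $\theta_t$ and $\hat\theta_t$.
\item \textbf{Your route} is the original Davis--Drusvyatskiy argument and uses function values. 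You combine the weak-convexity subgradient inequality at $\theta_t$ with quadratic growth of the $(\hat\lambda^{-1}-\kappa)$-strongly convex proximal objective around its minimizer on $\Theta$. Both of your inequalities check out, including the constant $\hat\lambda^{-1}-\kappa/2$.
\end{itemize}
Your route is slightly more economical. It needs a subgradient inequality only at $\theta_t$, and only for the specific vector $v_t$. At $\hat\theta_t$ it needs nothing beyond convexity of $\Theta$. So it avoids the Clarke sum rule for $J_{\rob}^{\mathrm{on}}+I_\Theta$, the normal cone, and the identification of $\zeta_t+\kappa\hat\theta_t$ as a genuine convex subgradient of $\Phi$ at a possibly boundary point. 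The paper's version reads as a monotone-operator argument and never compares values of $J_{\rob}^{\mathrm{on}}$ in that step.

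On the boundary issue you flag, which the paper's proof does not address explicitly, your proposed fix is workable but indirect. Weak convexity on an open neighborhood comes with a constant that grows with the neighborhood's radius, so you would need a limiting argument to recover exactly $\kappa$. A more direct fix is as follows. For $u\in\Theta$, convexity of $\Phi$ along the segment $[\theta_t,u]\subset\Theta$ gives $\Phi(u)-\Phi(\theta_t)\ge\Phi'(\theta_t;u-\theta_t)$. Moreover, the one-sided directional derivative of $J_{\rob}^{\mathrm{on}}$ at $\theta_t$ in any direction $d$ is at least $\langle v_t,d\rangle$. This holds because $v_t=\E[H+L_{\rob}S_{\theta_t}]$, where $H=\E_{\sigma^\star}[G_1]$ is a convex subgradient of the globally convex $L_{\rob}(\cdot;x,Y)$ and $P_{\pi_\theta}(Y\mid x)$ is $C^1$ and nonnegative. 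Together these yield your subgradient inequality with exactly $\kappa$.
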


\begin{proof}
We first record the mini-batch oracle properties. Conditional on \(\theta_t\), the samples
\(Z_1,\ldots,Z_{B_s}\) are iid from the policy-induced online sampling distribution. By
\Cref{lem:oracle}(b),
\[
\E[G(\theta_t;Z_i)\mid\theta_t]
\in
\partial_CJ_{\rob}^{\mathrm{on}}(\theta_t)
\qquad
\text{for every }i.
\]
Since the Clarke subdifferential is convex,
\[
\bar G_t
:=
\E[\widehat G_t\mid\theta_t]
=
\frac1{B_s}
\sum_{i=1}^{B_s}
\E[G(\theta_t;Z_i)\mid\theta_t]
\in
\partial_CJ_{\rob}^{\mathrm{on}}(\theta_t).
\]
Moreover, by Jensen's inequality and \Cref{lem:oracle}(c),
\[
\E[\|\widehat G_t\|^2\mid\theta_t]
=
\E\left[
\left\|
\frac1{B_s}\sum_{i=1}^{B_s}G(\theta_t;Z_i)
\right\|^2
\,\middle|\,
\theta_t
\right]
\le
\frac1{B_s}\sum_{i=1}^{B_s}
\E[
\|G(\theta_t;Z_i)\|^2
\mid
\theta_t
]
\le
G_{\mathrm{tot}}^2.
\]

Let
\[
\hat\theta_t
:=
\hat\theta(\theta_t)
=
\prox_{\hat\lambda F}(\theta_t).
\]
Since \(\hat\theta_t\in\Theta\), it is feasible for the constrained Moreau envelope at
\(\theta_{t+1}\). Therefore,
\[
F_{\hat\lambda}(\theta_{t+1})
\le
J_{\rob}^{\mathrm{on}}(\hat\theta_t)
+
\frac{1}{2\hat\lambda}
\|\hat\theta_t-\theta_{t+1}\|^2.
\]
By definition of \(\hat\theta_t\),
\[
F_{\hat\lambda}(\theta_t)
=
J_{\rob}^{\mathrm{on}}(\hat\theta_t)
+
\frac{1}{2\hat\lambda}
\|\hat\theta_t-\theta_t\|^2.
\]
Subtracting yields
\begin{equation}
F_{\hat\lambda}(\theta_{t+1})-F_{\hat\lambda}(\theta_t)
\le
\frac{1}{2\hat\lambda}
\left(
\|\hat\theta_t-\theta_{t+1}\|^2
-
\|\hat\theta_t-\theta_t\|^2
\right).
\label{eq:env-step}
\end{equation}

Because \(\Theta\) is closed and convex, the projection \(\Pi_\Theta\) is nonexpansive. Since
\(\hat\theta_t\in\Theta\),
\[
\|\theta_{t+1}-\hat\theta_t\|^2
=
\|\Pi_\Theta(\theta_t-\eta\widehat G_t)-\Pi_\Theta(\hat\theta_t)\|^2
\le
\|\theta_t-\eta\widehat G_t-\hat\theta_t\|^2.
\]
Expanding the right-hand side,
\begin{equation}
\|\theta_{t+1}-\hat\theta_t\|^2
\le
\|\theta_t-\hat\theta_t\|^2
-
2\eta\langle \widehat G_t,\theta_t-\hat\theta_t\rangle
+
\eta^2\|\widehat G_t\|^2.
\label{eq:proj-expand}
\end{equation}
Combining \eqref{eq:env-step} and \eqref{eq:proj-expand}, then taking conditional expectation
given \(\theta_t\), gives
\begin{align}
\E[
F_{\hat\lambda}(\theta_{t+1})-F_{\hat\lambda}(\theta_t)
\mid\theta_t
]
&\le
-\frac{\eta}{\hat\lambda}
\langle \bar G_t,\theta_t-\hat\theta_t\rangle
+
\frac{\eta^2}{2\hat\lambda}
\E[\|\widehat G_t\|^2\mid\theta_t]
\notag\\
&\le
-\frac{\eta}{\hat\lambda}
\langle \bar G_t,\theta_t-\hat\theta_t\rangle
+
\frac{\eta^2G_{\mathrm{tot}}^2}{2\hat\lambda}.
\label{eq:env-cond}
\end{align}

It remains to lower bound
\(\langle \bar G_t,\theta_t-\hat\theta_t\rangle\).
The proximal point satisfies
\[
\hat\theta_t
\in
\argmin_{u\in\Theta}
\left\{
J_{\rob}^{\mathrm{on}}(u)
+
\frac{1}{2\hat\lambda}\|u-\theta_t\|^2
\right\}.
\]
The first-order optimality condition over \(\Theta\) gives some
\[
\zeta_t\in\partial_CJ_{\rob}^{\mathrm{on}}(\hat\theta_t),
\qquad
n_t\in N_\Theta(\hat\theta_t),
\]
such that
\[
0
=
\zeta_t
+
\frac{1}{\hat\lambda}(\hat\theta_t-\theta_t)
+
n_t.
\]
Using the normal-cone convention
\[
\langle n_t,u-\hat\theta_t\rangle\le0,
\qquad
\forall u\in\Theta,
\]
and taking \(u=\theta_t\in\Theta\), we get
\[
\left\langle
\zeta_t
+
\frac{1}{\hat\lambda}(\hat\theta_t-\theta_t),
\theta_t-\hat\theta_t
\right\rangle
=
-\langle n_t,\theta_t-\hat\theta_t\rangle
\ge0.
\]
Therefore,
\begin{equation}
\langle \zeta_t,\theta_t-\hat\theta_t\rangle
\ge
\frac{1}{\hat\lambda}
\|\theta_t-\hat\theta_t\|^2.
\label{eq:prox-foc}
\end{equation}

By \(\kappa\)-weak convexity of \(J_{\rob}^{\mathrm{on}}\), the function
\[
\Phi(\theta)
:=
J_{\rob}^{\mathrm{on}}(\theta)
+
\frac{\kappa}{2}\|\theta\|^2
\]
is convex on \(\Theta\). Since
\[
\bar G_t\in\partial_CJ_{\rob}^{\mathrm{on}}(\theta_t),
\qquad
\zeta_t\in\partial_CJ_{\rob}^{\mathrm{on}}(\hat\theta_t),
\]
we have
\[
\bar G_t+\kappa\theta_t\in\partial_C\Phi(\theta_t),
\qquad
\zeta_t+\kappa\hat\theta_t\in\partial_C\Phi(\hat\theta_t).
\]
By monotonicity of the convex subdifferential of \(\Phi\),
\[
\left\langle
(\bar G_t+\kappa\theta_t)
-
(\zeta_t+\kappa\hat\theta_t),
\theta_t-\hat\theta_t
\right\rangle
\ge0.
\]
Equivalently,
\begin{equation}
\langle \bar G_t-\zeta_t,\theta_t-\hat\theta_t\rangle
\ge
-\kappa\|\theta_t-\hat\theta_t\|^2.
\label{eq:weak-mon}
\end{equation}
Adding \eqref{eq:prox-foc} and \eqref{eq:weak-mon},
\begin{equation}
\langle \bar G_t,\theta_t-\hat\theta_t\rangle
\ge
\left(
\frac1{\hat\lambda}-\kappa
\right)
\|\theta_t-\hat\theta_t\|^2
=
\frac{1-\kappa\hat\lambda}{\hat\lambda}
\|\theta_t-\hat\theta_t\|^2.
\label{eq:gbar-bound}
\end{equation}

By \Cref{lem:moreau},
\[
\nabla F_{\hat\lambda}(\theta_t)
=
\hat\lambda^{-1}
(\theta_t-\hat\theta_t).
\]
Hence
\[
\|\theta_t-\hat\theta_t\|^2
=
\hat\lambda^2
\|\nabla F_{\hat\lambda}(\theta_t)\|^2.
\]
Substituting into \eqref{eq:gbar-bound} gives
\[
\langle \bar G_t,\theta_t-\hat\theta_t\rangle
\ge
(1-\kappa\hat\lambda)\hat\lambda
\|\nabla F_{\hat\lambda}(\theta_t)\|^2.
\]
Plugging this into \eqref{eq:env-cond}, we obtain
\[
\E[
F_{\hat\lambda}(\theta_{t+1})-F_{\hat\lambda}(\theta_t)
\mid\theta_t
]
\le
-\eta(1-\kappa\hat\lambda)
\|\nabla F_{\hat\lambda}(\theta_t)\|^2
+
\frac{\eta^2G_{\mathrm{tot}}^2}{2\hat\lambda}.
\]
Since \(1-\kappa\hat\lambda>0\), this stronger bound implies \eqref{eq:descent}.
\end{proof}

\begin{proof}[Proof of \Cref{thm:online-rate}]
If \(\Delta_0=0\), then \(F_{\hat\lambda}(\theta_0)=F_{\inf}\), and the desired bound is
trivial. Hence assume \(\Delta_0>0\). By \Cref{lem:descent}, for every \(t=0,\ldots,T-1\),
\[
\E\!\left[
F_{\hat\lambda}(\theta_{t+1})
\,\middle|\,
\theta_t
\right]
\le
F_{\hat\lambda}(\theta_t)
-
\frac{\eta(1-\kappa\hat\lambda)}{2}
\|\nabla F_{\hat\lambda}(\theta_t)\|^2
+
\frac{\eta^2G_{\mathrm{tot}}^2}{2\hat\lambda}.
\]
Taking total expectation gives
\[
\E[
F_{\hat\lambda}(\theta_{t+1})
]
\le
\E[
F_{\hat\lambda}(\theta_t)
]
-
\frac{\eta(1-\kappa\hat\lambda)}{2}
\E\|\nabla F_{\hat\lambda}(\theta_t)\|^2
+
\frac{\eta^2G_{\mathrm{tot}}^2}{2\hat\lambda}.
\]
Rearranging,
\[
\frac{\eta(1-\kappa\hat\lambda)}{2}
\E\|\nabla F_{\hat\lambda}(\theta_t)\|^2
\le
\E[
F_{\hat\lambda}(\theta_t)
]
-
\E[
F_{\hat\lambda}(\theta_{t+1})
]
+
\frac{\eta^2G_{\mathrm{tot}}^2}{2\hat\lambda}.
\]
Summing over \(t=0,\ldots,T-1\), we obtain
\[
\frac{\eta(1-\kappa\hat\lambda)}{2}
\sum_{t=0}^{T-1}
\E\|\nabla F_{\hat\lambda}(\theta_t)\|^2
\le
F_{\hat\lambda}(\theta_0)
-
\E[
F_{\hat\lambda}(\theta_T)
]
+
\frac{T\eta^2G_{\mathrm{tot}}^2}{2\hat\lambda}.
\]
Since
\[
F_{\hat\lambda}(\theta)
=
\min_u
\left\{
F(u)+\frac{1}{2\hat\lambda}\|u-\theta\|^2
\right\}
\ge
\inf_u F(u)
=
F_{\inf},
\]
we have
\[
F_{\hat\lambda}(\theta_0)
-
\E[
F_{\hat\lambda}(\theta_T)
]
\le
F_{\hat\lambda}(\theta_0)-F_{\inf}
=
\Delta_0.
\]
Therefore,
\[
\frac{\eta(1-\kappa\hat\lambda)}{2}
\sum_{t=0}^{T-1}
\E\|\nabla F_{\hat\lambda}(\theta_t)\|^2
\le
\Delta_0
+
\frac{T\eta^2G_{\mathrm{tot}}^2}{2\hat\lambda}.
\]
Dividing by \(\eta T\) gives
\[
\frac{1-\kappa\hat\lambda}{2}
\cdot
\frac{1}{T}
\sum_{t=0}^{T-1}
\E\|\nabla F_{\hat\lambda}(\theta_t)\|^2
\le
\frac{\Delta_0}{\eta T}
+
\frac{\eta G_{\mathrm{tot}}^2}{2\hat\lambda}.
\]
With the choice
\[
\eta
=
\sqrt{
\frac{2\hat\lambda\Delta_0}{G_{\mathrm{tot}}^2T}
},
\]
the two terms on the right-hand side are equal:
\[
\frac{\Delta_0}{\eta T}
=
\frac{\eta G_{\mathrm{tot}}^2}{2\hat\lambda}
=
\sqrt{
\frac{\Delta_0G_{\mathrm{tot}}^2}{2\hat\lambda T}
}.
\]
Hence
\[
\frac{1-\kappa\hat\lambda}{2}
\cdot
\frac{1}{T}
\sum_{t=0}^{T-1}
\E\|\nabla F_{\hat\lambda}(\theta_t)\|^2
\le
\sqrt{
\frac{2\Delta_0G_{\mathrm{tot}}^2}{\hat\lambda T}
}.
\]
Let \(R\sim\mathrm{Uniform}\{0,\ldots,T-1\}\) be sampled independently of the algorithmic
randomness. Then
\[
\E\|\nabla F_{\hat\lambda}(\theta_R)\|^2
=
\frac{1}{T}
\sum_{t=0}^{T-1}
\E\|\nabla F_{\hat\lambda}(\theta_t)\|^2.
\]
Therefore,
\[
\E\|\nabla F_{\hat\lambda}(\theta_R)\|^2
\le
\frac{2}{1-\kappa\hat\lambda}
\sqrt{
\frac{2\Delta_0G_{\mathrm{tot}}^2}{\hat\lambda T}
},
\]
which is exactly \eqref{eq:online-rate}.
\end{proof} 

\begin{proof}[Proof of \Cref{cor:complexity}]
By \Cref{thm:online-rate}, it suffices to require
\[
\frac{2}{1-\kappa\hat\lambda}
\sqrt{
\frac{2\Delta_0G_{\mathrm{tot}}^2}{\hat\lambda T}
}
\le
\varepsilon.
\]
Squaring both sides gives
\[
\frac{4}{(1-\kappa\hat\lambda)^2}
\cdot
\frac{2\Delta_0G_{\mathrm{tot}}^2}{\hat\lambda T}
\le
\varepsilon^2.
\]
Equivalently,
\[
T
\ge
\frac{
8\Delta_0G_{\mathrm{tot}}^2
}{
\hat\lambda(1-\kappa\hat\lambda)^2\varepsilon^2
}.
\]
This proves the first claim.

Now set
\[
\hat\lambda=\frac{1}{2\kappa}.
\]
Then
\[
1-\kappa\hat\lambda
=
\frac12,
\]
and hence
\[
T
\ge
\frac{
8\Delta_0G_{\mathrm{tot}}^2
}{
(1/(2\kappa))(1/2)^2\varepsilon^2
}
=
\frac{
64\kappa\Delta_0G_{\mathrm{tot}}^2
}{
\varepsilon^2
}.
\]
Therefore,
\[
T
=
O\left(
\frac{\kappa\Delta_0G_{\mathrm{tot}}^2}{\varepsilon^2}
\right).
\]

It remains to substitute the explicit constants. From \eqref{eq:kappa-listwise},
\[
\kappa
=
K^2B_\psi^2
(8+\log K+2DB_\psi)
=
\tilde O\!\left(
K^2B_\psi^2(\log K+DB_\psi)
\right).
\]
From \eqref{eq:Gtot},
\[
G_{\mathrm{tot}}^2
=
2C_G^2
+
2(2KB_\psi C_L)^2,
\]
where
\[
C_G=2KB_\psi,
\qquad
C_L=K(\log K+2DB_\psi).
\]
Thus
\[
2C_G^2
=
O(K^2B_\psi^2),
\]
and
\[
2(2KB_\psi C_L)^2
=
O\!\left(
K^2B_\psi^2
\cdot
K^2(\log K+DB_\psi)^2
\right)
=
O\!\left(
K^4B_\psi^2(\log K+DB_\psi)^2
\right).
\]
The second term dominates for the relevant asymptotic regime, so
\[
G_{\mathrm{tot}}^2
=
O\!\left(
K^4B_\psi^2(\log K+DB_\psi)^2
\right).
\]
Combining the bounds gives
\[
\kappa G_{\mathrm{tot}}^2
=
\tilde O\!\left(
K^6B_\psi^4(\log K+DB_\psi)^3
\right).
\]
Therefore,
\[
T
=
\tilde O\!\left(
\frac{
K^6B_\psi^4(\log K+DB_\psi)^3
}{
\varepsilon^2
}
\right),
\]
which proves \eqref{eq:online-complexity}.
\end{proof}

\section{\texorpdfstring{SAIL bilevel reduction (used in \Cref{sec:online-theory})}{SAIL bilevel reduction}}
\label{app:sail}

For online alignment with KL-regularized RLHF, the lower-level policy induced by a reward $r$ is $\pi_r^\star(y\mid x) = \pi_{\mathrm{ref}}(y\mid x)\exp(r(x,y)/\beta)/Z_r(x)$. Substituting into the PL likelihood \eqref{eq:pl-prob} and noting that $\log Z_r(x)$ is a per-prompt constant that cancels at every PL stage, the bilevel objective $\min_r\,\E_{x,Y\sim(\pi_r^\star)^{\otimes K},\sigma\sim p^\star}[\ell_{\PL}(r;x,Y,\sigma)]$ subject to the lower-level optimality is exactly equivalent to the single-level objective $\min_\pi\,\E_{x,Y\sim\pi^{\otimes K},\sigma\sim p^\star}[\ell_{\PL}(\pi;x,Y,\sigma)]$, which is similar to SAIL~\citep{ding2024sail}. Parameterizing $\pi=\pi_\theta$ with $s_\theta(x,y)=\log[\pi_\theta(y\mid x)/\pi_{\mathrm{ref}}(y\mid x)]$ recovers \eqref{eq:online-objectives}. At $K=2$ this reduces exactly to the pairwise SAIL/DPO objective.

\paragraph{Gradient.}
Defining $L(\theta;x,Y) := \E_{\sigma\sim p^\star(\cdot\mid x,Y)}[\ell_{\PL}(\theta;x,Y,\sigma)]$, the score-function gradient identity gives
\begin{equation}
\nabla_\theta J(\theta) = \E_{x}\E_{Y\sim\pi_\theta^{\otimes K}}\Big[\nabla_\theta L(\theta;x,Y) + L(\theta;x,Y)\sum_{i=1}^K\nabla_\theta\log\pi_\theta(y_i\mid x)\Big],
\end{equation}
which combined with \Cref{lem:tv-decomp} produces the per-sample stochastic oracle \eqref{eq:oracle}.

\section{Stagewise PL Hessian decomposition}
\label{app:pl-curvature}

\begin{proposition}[Stagewise PL Hessian as a sum of conditional covariances]
\label{prop:pl_curvature_decomposition}
Fix $(x,y_{1:K},\sigma^\star)$ and let $s_\theta(x,y)=\theta^\top\phi(x,y)$. For each stage $i=1,\dots,K-1$ define the remaining set $R_i:=\{\sigma^\star_i,\dots,\sigma^\star_K\}$, the stagewise softmax $p_i^\theta(j) = e^{s_\theta(x,y_j)}/\sum_{m\in R_i} e^{s_\theta(x,y_m)}$, and the stagewise feature mean $\mu_i(\theta) = \sum_{j\in R_i}p_i^\theta(j)\phi(x,y_j)$. Then
\[
\nabla_\theta^2\ell_{\PL}(\theta;x,y_{1:K},\sigma^\star) = \sum_{i=1}^{K-1}\mathrm{Cov}_{j\sim p_i^\theta}[\phi(x,y_j)] \;\succeq\; 0.
\]
\end{proposition}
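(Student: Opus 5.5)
The plan is a direct stagewise computation: write the loss as a sum of $K$ stagewise log-partition terms, differentiate each twice, and recognize each Hessian as a covariance. Writing $\phi_j:=\phi(x,y_j)$ and $s_j:=\theta^\top\phi_j$, the loss \eqref{eq:pl-loss} becomes
\[
\ell_{\PL}(\theta)=\sum_{i=1}^K\ell_i(\theta),\qquad \ell_i(\theta)=-s_{\sigma^\star_i}+A_i(\theta),\qquad A_i(\theta):=\log\sum_{m\in R_i}e^{\theta^\top\phi_m}.
\]
The term $-s_{\sigma^\star_i}$ is linear in $\theta$, so it has zero Hessian. Hence $\nabla^2_\theta\ell_{\PL}=\sum_{i=1}^K\nabla^2 A_i(\theta)$.

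Next I compute the derivatives of the log-partition function $A_i$ over the finite set $R_i$. The gradient is
\[
\nabla A_i(\theta)=\sum_{j\in R_i}p_i^\theta(j)\,\phi_j=\mu_i(\theta).
\]
Differentiating once more, using $\nabla_\theta p_i^\theta(j)=p_i^\theta(j)\bigl(\phi_j-\mu_i(\theta)\bigr)$, gives
\[
\nabla^2A_i(\theta)=\sum_{j\in R_i}p_i^\theta(j)\bigl(\phi_j-\mu_i\bigr)\phi_j^\top=\mathrm{Cov}_{j\sim p_i^\theta}[\phi_j].
\]
The last equality holds because $\sum_j p_i^\theta(j)\bigl(\phi_j-\mu_i\bigr)=0$, so subtracting $\mu_i^\top$ inside the outer product changes nothing. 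This is the same computation already carried out in the Hessian part of \Cref{lem:PL-bounds}, and I would cite it there.

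Then I handle the last stage $i=K$. There $R_K=\{\sigma^\star_K\}$ is a singleton, so $p_K^\theta$ is a point mass and its covariance vanishes. Equivalently, $\ell_K=-s_{\sigma^\star_K}+s_{\sigma^\star_K}=0$ identically. This is why the sum in the statement runs only to $K-1$.

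Finally, positive semidefiniteness follows because each covariance satisfies $v^\top\mathrm{Cov}\,v=\mathrm{Var}(v^\top\phi_j)\ge0$, and a sum of PSD matrices is PSD. There is no real obstacle here; the only points needing care are the bookkeeping that $R_i$ is indexed by the observed ranking and the remark about the trivial final stage.
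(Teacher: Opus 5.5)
Your proposal is correct and follows essentially the same route as the paper's proof. Both split the loss into stagewise multinomial log-likelihoods, identify each stagewise Hessian as $\mathrm{Cov}_{j\sim p_i^\theta}[\phi(x,y_j)]$, and sum. Your explicit observation that the final stage is identically zero just makes precise why the paper's sum stops at $K-1$.
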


\begin{proof}
Write $\ell_{\PL}=\sum_{i=1}^{K-1}\ell_i$ where $\ell_i(\theta) = -\theta^\top\phi(x,y_{\sigma^\star_i})+\log\sum_{j\in R_i}\exp(\theta^\top\phi(x,y_j))$ is the negative log-likelihood of the $i$-th stagewise multinomial choice. Differentiating gives $\nabla\ell_i = -\phi(x,y_{\sigma^\star_i})+\mu_i$ and $\nabla^2\ell_i=\sum_{j\in R_i}p_i^\theta(j)(\phi(x,y_j)-\mu_i)(\phi(x,y_j)-\mu_i)^\top = \mathrm{Cov}_{j\sim p_i^\theta}[\phi(x,y_j)]\succeq 0$. Sum over $i$.
\end{proof}

\begin{remark}[Interpretation]
A pairwise BT comparison contributes one binary-choice covariance to the Hessian; a single PL listwise observation contributes one covariance per ranking stage. This is a statement about the local geometry of the underlying observation model, not a Fisher-information dominance result for the final robust objective.
\end{remark}


\section{Additional Experimental Results}
\label{app:additional-exp}

This appendix contains additional experimental results omitted from the main text for space. 
Appendix~\ref{app:rewardbench} reports clean-label external evaluation on RewardBench. 
Appendix~\ref{app:pairwise_dro_sweeps} gives the hyperparameter sweeps used to select the pairwise robust-DPO baselines in Table~\ref{tab:main_kendall_results}. 
Appendix~\ref{app:rho_sensitivity} reports robustness-radius sensitivity for Robust PL, including both noisy-label and clean-label settings. 
Appendix~\ref{app:qwen25-offline} provide full offline metrics on
Qwen2.5.

\subsection{Clean-label external evaluation}
\label{app:rewardbench}

\Cref{tab:rewardbench} evaluates clean-label offline models on RewardBench. The purpose is not to
claim general benchmark dominance, but to check whether the robust correction damages external
alignment quality. The results support the main-text claim that moderate robustification preserves
clean-label model quality.

\begin{table}[h]
\centering
\caption{External RewardBench evaluation for clean-label offline models. We report category-level
accuracy and the average score over Chat, Chat-Hard, Safety, and Reasoning.}
\label{tab:rewardbench}
\resizebox{\textwidth}{!}{
\begin{tabular}{lcclccccc}
\toprule
Model & $K$ & Training & Method & Chat & Chat-Hard & Safety & Reasoning & RB-Avg \\
\midrule
Qwen3-0.6B & 4 & clean & Nominal BT & \textbf{89.1} & 41.0 & 45.7 & 68.6 & 61.1 \\
Qwen3-0.6B & 4 & clean & Nominal PL & 86.3 & 40.6 & 46.2 & 71.5 & 61.2 \\
Qwen3-0.6B & 4 & clean & Robust PL $(\rho=0.05)$ & 88.3 & \textbf{42.5} & \textbf{47.0} & \textbf{72.7} & \textbf{62.6} \\
\midrule
Qwen3-8B & 4 & clean & Nominal BT & 90.5 & \textbf{46.5} & 55.3 & 51.5 & 60.9 \\
Qwen3-8B & 4 & clean & Nominal PL & \textbf{93.3} & 45.0 & 56.9 & 52.9 & \textbf{62.0} \\
Qwen3-8B & 4 & clean & Robust PL $(\rho=0.05)$ & 93.0 & 43.0 & \textbf{58.2} & \textbf{53.6} & 61.96 \\
\bottomrule
\end{tabular}
}
\end{table}

\subsection{Hyperparameter Selection for Pairwise Robust-DPO Baselines}
\label{app:pairwise_dro_sweeps}

For the pairwise robust-DPO baselines in Table~\ref{tab:main_kendall_results}, we select hyperparameters on a moderate noisy development condition: Qwen3-0.6B under near-tie noise with $\epsilon=0.4$. This condition is noisy enough to test robustness, but less extreme than top-rank $\epsilon=1.0$, where loss-level reweighting can directly amplify systematically corrupted pairs. Both TV-DR-DPO and KLDPO are implemented in our pipeline following their loss-level robust DPO objectives. The selected values are $\rho=0.10$ for TV-DR-DPO and $\tau=1.00$ for KLDPO.

\paragraph{Noisy development sweeps.}
Tables~\ref{tab:tvdpo_sweep_neartie} and~\ref{tab:kldpo_sweep_neartie} report the hyperparameter sweeps used for selection. For TV-DR-DPO, $\rho=0.10$ gives the best Kendall's $\tau$. For KLDPO, $\tau=1.00$ gives the best Kendall's $\tau$ and NDCG, while smaller temperatures over-concentrate on high-loss samples.

\begin{table}[h]
\centering
\caption{Hyperparameter sweep for TV-DR-DPO on Qwen3-0.6B under near-tie 0.4 noise.}
\label{tab:tvdpo_sweep_neartie}
\setlength{\tabcolsep}{6pt}
\begin{tabular}{lcccc}
\toprule
$\rho$ & Top-1 $\uparrow$ & Exact $\uparrow$ & Kendall's $\tau \uparrow$ & NDCG $\uparrow$ \\
\midrule
0.05 & \textbf{0.377} & \textbf{0.113} & 0.261 & 0.867 \\
0.10 & 0.367 & \textbf{0.113} & \textbf{0.270} & \textbf{0.869} \\
0.20 & 0.373 & 0.083 & 0.251 & \textbf{0.869} \\
0.40 & 0.357 & 0.080 & 0.209 & 0.866 \\
0.80 & 0.320 & 0.047 & 0.119 & 0.839 \\
\bottomrule
\end{tabular}
\end{table}

\begin{table}[h]
\centering
\caption{Hyperparameter sweep for KLDPO on Qwen3-0.6B under near-tie 0.4 noise.}
\label{tab:kldpo_sweep_neartie}
\setlength{\tabcolsep}{6pt}
\begin{tabular}{lcccc}
\toprule
$\tau$ & Top-1 $\uparrow$ & Exact $\uparrow$ & Kendall's $\tau \uparrow$ & NDCG $\uparrow$ \\
\midrule
0.05 & 0.343 & 0.083 & 0.206 & 0.860 \\
0.10 & 0.320 & 0.090 & 0.233 & 0.866 \\
0.20 & 0.353 & 0.113 & 0.242 & 0.868 \\
0.50 & \textbf{0.370} & \textbf{0.117} & 0.262 & 0.869 \\
1.00 & 0.363 & 0.097 & \textbf{0.267} & \textbf{0.871} \\
\bottomrule
\end{tabular}
\end{table}

\paragraph{Clean-label sanity checks.}
The noisy development sweeps above are used for hyperparameter selection. For completeness, Tables~\ref{tab:tvdpo_sweep_clean} and~\ref{tab:kldpo_sweep_clean} report clean-label sweeps for the same pairwise robust-DPO baselines. These clean sweeps are not used to select the main-table hyperparameters. Instead, they diagnose whether aggressive loss-level reweighting harms performance when the observed labels are reliable. The results show that large TV radii and small KL temperatures over-concentrate on high-loss samples and degrade clean ranking quality.

\begin{table}[h]
\centering
\small
\caption{Clean-label sanity sweep for TV-DR-DPO on Qwen3-0.6B.}
\label{tab:tvdpo_sweep_clean}
\setlength{\tabcolsep}{6pt}
\begin{tabular}{lcccc}
\toprule
$\rho$ & Top-1 $\uparrow$ & Exact $\uparrow$ & Kendall's $\tau \uparrow$ & NDCG $\uparrow$ \\
\midrule
0.05 & \textbf{0.407} & \textbf{0.140} & 0.281 & \textbf{0.879} \\
0.10 & \textbf{0.407} & 0.117 & \textbf{0.282} & 0.876 \\
0.20 & 0.387 & 0.123 & 0.259 & 0.874 \\
0.40 & 0.387 & 0.103 & 0.232 & 0.868 \\
0.80 & 0.320 & 0.083 & 0.189 & 0.859 \\
\bottomrule
\end{tabular}
\end{table}

\begin{table}[H]
\centering
\small
\caption{Clean-label sanity sweep for KLDPO on Qwen3-0.6B.}
\label{tab:kldpo_sweep_clean}
\setlength{\tabcolsep}{6pt}
\begin{tabular}{lcccc}
\toprule
$\tau$ & Top-1 $\uparrow$ & Exact $\uparrow$ & Kendall's $\tau \uparrow$ & NDCG $\uparrow$ \\
\midrule
0.05 & 0.370 & 0.093 & 0.241 & 0.871 \\
0.10 & 0.347 & 0.073 & 0.212 & 0.862 \\
0.20 & \textbf{0.393} & 0.097 & 0.231 & 0.870 \\
0.50 & 0.390 & 0.117 & 0.276 & \textbf{0.873} \\
1.00 & 0.387 & \textbf{0.120} & \textbf{0.277} & 0.872 \\
\bottomrule
\end{tabular}
\end{table}

Together, these sweeps show that the useful regime for pairwise robust-DPO baselines is mild loss-level reweighting. More aggressive settings, such as large $\rho$ for TV-DR-DPO or small $\tau$ for KLDPO, can overemphasize high-loss comparisons and hurt ranking quality even when the training labels are clean.

\subsection{Sensitivity of Robust PL to the Robustness Radius}
\label{app:rho_sensitivity}

To make the effect of the robustness strength explicit, we report full sweeps over the
robustness coefficient $\rho$.  Small positive values can stabilize learning under ranking-label noise, while large
values place excessive weight on the adversarial ranking in \eqref{eq:tv-decomp}.

\begin{table}[h]
\centering
\small
\caption{Sensitivity analysis of robustness coefficient $\rho$ on Qwen3-0.6B under near-tie 0.4 noise.}
\label{tab:rho_sweep_qwen3_neartie}
\setlength{\tabcolsep}{6pt}
\begin{tabular}{lcccc}
\toprule
$\rho$ & Top-1 $\uparrow$ & Exact $\uparrow$ & Kendall's $\tau \uparrow$ & NDCG $\uparrow$ \\
\midrule
0.00 (PL) & \textbf{0.407} & 0.110 & 0.268 & \textbf{0.874} \\
\midrule
0.05 & \textbf{0.407} & \textbf{0.150} & \textbf{0.274} & \textbf{0.874} \\
0.10 & 0.383 & 0.123 & 0.264 & 0.869 \\
0.15 & 0.393 & 0.107 & 0.256 & 0.869 \\
0.20 & 0.367 & 0.120 & 0.247 & 0.866 \\
\midrule
0.30 & 0.373 & 0.093 & 0.226 & 0.862 \\
0.50 & 0.323 & 0.060 & 0.103 & 0.841 \\
0.70 & 0.283 & 0.083 & 0.033 & 0.820 \\
1.00 & 0.237 & 0.040 & -0.053 & 0.794 \\
\bottomrule
\end{tabular}
\end{table}

\begin{table}[h]
\centering
\small
\caption{Sensitivity analysis of robustness coefficient $\rho$ on Qwen3-0.6B under clean labels.}
\label{tab:rho_sweep_qwen3_clean}
\setlength{\tabcolsep}{6pt}
\begin{tabular}{lcccc}
\toprule
$\rho$ & Top-1 $\uparrow$ & Exact $\uparrow$ & Kendall's $\tau \uparrow$ & NDCG $\uparrow$ \\
\midrule
0.00 (PL) & \textbf{0.410} & \textbf{0.123} & \textbf{0.291} & \textbf{0.879} \\
\midrule
0.05 & 0.400 & 0.113 & 0.282 & 0.875 \\
0.10 & 0.390 & 0.107 & 0.261 & 0.871 \\
0.15 & 0.390 & 0.107 & 0.269 & 0.873 \\
0.20 & 0.400 & 0.120 & 0.268 & 0.874 \\
\midrule
0.30 & 0.337 & 0.090 & 0.210 & 0.859 \\
0.50 & 0.333 & 0.063 & 0.126 & 0.844 \\
0.70 & 0.297 & 0.047 & 0.043 & 0.820 \\
1.00 & 0.240 & 0.060 & -0.054 & 0.787 \\
\bottomrule
\end{tabular}
\end{table}

\subsection{Additional Qwen2.5 offline fixed-list results}
\label{app:qwen25-offline}

\Cref{tab:main-05b} and \Cref{tab:main-7b} report full offline metrics on Qwen2.5. These results
provide metric-level support for the main-text offline story: clean-label performance is preserved,
while robust PL becomes more useful when ranking labels are corrupted.

\begin{table}[h]
\centering
\small
\caption{Sensitivity analysis of robustness coefficient $\rho$ on Qwen2.5-0.5B under top-rank 0.4 noise.}
\label{tab:rho-sweep}
\setlength{\tabcolsep}{6pt}
\begin{tabular}{lcccc}
\toprule
$\rho$ & Kendall's $\tau\uparrow$ & Top-1 $\uparrow$ & NDCG $\uparrow$ & PairAcc(K4) $\uparrow$ \\
\midrule
0.00 (PL) & 0.203 & 0.337 & 0.857 & 0.594 \\
\midrule
0.05 & \textbf{0.226} & \textbf{0.353} & \textbf{0.863} & \textbf{0.605} \\
0.10 & 0.206 & 0.317 & 0.852 & 0.595 \\
0.15 & 0.219 & 0.340 & 0.860 & 0.602 \\
0.20 & 0.184 & 0.330 & 0.852 & 0.584 \\
\midrule
0.30 & 0.139 & 0.300 & 0.840 & 0.562 \\
0.50 & 0.092 & 0.283 & 0.828 & 0.538 \\
0.70 & $-0.036$ & 0.240 & 0.801 & 0.474 \\
1.00 & $-0.034$ & 0.210 & 0.796 & 0.475 \\
\bottomrule
\end{tabular}
\end{table}

\begin{table}[h]
\centering
\footnotesize
\caption{Main results on Qwen2.5-0.5B across clean and noisy settings. Best results within each row are bold among available values.}
\label{tab:main-05b}
\setlength{\tabcolsep}{4pt}
\begin{tabular}{llcccc}
\toprule
Setting & Metric & Nominal BT & Nominal PL & Robust PL ($\rho=0.10$) & Robust PL ($\rho=0.05$) \\
\midrule
Clean & Top-1 $\uparrow$ & \textbf{0.376} & 0.374 & 0.374 & \textbf{0.376} \\
Clean & Exact $\uparrow$ & 0.088 & 0.098 & 0.096 & \textbf{0.100} \\
Clean & Kendall's $\tau \uparrow$ & 0.239 & \textbf{0.251} & \textbf{0.251} & 0.249 \\
Clean & NDCG $\uparrow$ & 0.874 & \textbf{0.875} & 0.873 & 0.874 \\
Clean & PairAcc(K4) $\uparrow$ & 0.616 & \textbf{0.622} & \textbf{0.622} & \textbf{0.622} \\
\midrule
Near-tie 0.4 & Top-1 $\uparrow$ & 0.313 & 0.337 & 0.360 & \textbf{0.363} \\
Near-tie 0.4 & Kendall's $\tau \uparrow$ & 0.206 & 0.225 & 0.235 & \textbf{0.266} \\
Near-tie 0.4 & NDCG $\uparrow$ & 0.865 & 0.868 & 0.867 & \textbf{0.869} \\
Near-tie 0.4 & PairAcc(K4) $\uparrow$ & 0.603 & 0.617 & 0.623 & \textbf{0.625} \\
\midrule
Near-tie 1.0 & Top-1 $\uparrow$ & 0.333 & 0.357 & 0.347 & \textbf{0.370} \\
Near-tie 1.0 & Kendall's $\tau \uparrow$ & 0.209 & 0.236 & 0.224 & \textbf{0.251} \\
Near-tie 1.0 & NDCG $\uparrow$ & 0.860 & 0.866 & 0.861 & \textbf{0.868} \\
Near-tie 1.0 & PairAcc(K4) $\uparrow$ & 0.598 & \textbf{0.618} & 0.611 & \textbf{0.618} \\
\midrule
Top-rank 0.4 & Top-1 $\uparrow$ & 0.323 & 0.353 & \textbf{0.357} & 0.327 \\
Top-rank 0.4 & Kendall's $\tau \uparrow$ & 0.199 & 0.214 & 0.219 & \textbf{0.232} \\
Top-rank 0.4 & NDCG $\uparrow$ & 0.851 & 0.859 & 0.861 & \textbf{0.863} \\
Top-rank 0.4 & PairAcc(K4) $\uparrow$ & 0.592 & 0.600 & 0.602 & \textbf{0.608} \\
\midrule
Top-rank 1.0 & Top-1 $\uparrow$ & 0.267 & 0.260 & 0.300 & \textbf{0.307} \\
Top-rank 1.0 & Kendall's $\tau \uparrow$ & $-0.016$ & 0.051 & 0.079 & \textbf{0.082} \\
Top-rank 1.0 & NDCG $\uparrow$ & 0.801 & 0.814 & \textbf{0.824} & 0.823 \\
Top-rank 1.0 & PairAcc(K4) $\uparrow$ & 0.484 & 0.518 & 0.532 & \textbf{0.533} \\
\bottomrule
\end{tabular}
\end{table}

\begin{table}[h]
\centering
\footnotesize
\caption{Results on Qwen2.5-7B under clean and noisy settings.}
\label{tab:main-7b}
\setlength{\tabcolsep}{5pt}
\begin{tabular}{llcccc}
\toprule
Setting & Metric & Nominal BT & Nominal PL & Robust ($\rho=0.10$) & Robust ($\rho=0.05$) \\
\midrule
Clean & Kendall's $\tau \uparrow$ & 0.333 & \textbf{0.390} & 0.376 & 0.380 \\
Clean & Top-1 $\uparrow$ & 0.452 & \textbf{0.501} & 0.480 & 0.483 \\
Clean & Exact $\uparrow$ & 0.140 & \textbf{0.171} & 0.160 & 0.163 \\
Clean & NDCG $\uparrow$ & 0.888 & \textbf{0.904} & 0.900 & 0.901 \\
Clean & PairAcc(K4) $\uparrow$ & 0.663 & \textbf{0.692} & 0.685 & 0.687 \\
Clean & PairAcc(bin) $\uparrow$ & 0.696 & \textbf{0.736} & 0.735 & 0.725 \\
\midrule
Top-rank 0.4 & Kendall's $\tau \uparrow$ & 0.138 & 0.341 & 0.338 & \textbf{0.356} \\
Top-rank 0.4 & Top-1 $\uparrow$ & 0.319 & 0.438 & 0.438 & \textbf{0.439} \\
Top-rank 0.4 & Exact $\uparrow$ & 0.083 & 0.128 & 0.134 & \textbf{0.148} \\
Top-rank 0.4 & NDCG $\uparrow$ & 0.850 & 0.890 & 0.894 & \textbf{0.897} \\
Top-rank 0.4 & PairAcc(K4) $\uparrow$ & 0.566 & 0.666 & \textbf{0.678} & 0.675 \\
Top-rank 0.4 & PairAcc(bin) $\uparrow$ & 0.581 & 0.674 & 0.681 & \textbf{0.689} \\
\midrule
Near-tie 0.4 & Kendall's $\tau \uparrow$ & 0.365 & 0.368 & \textbf{0.370} & \textbf{0.370} \\
Near-tie 0.4 & Top-1 $\uparrow$ & 0.453 & 0.444 & 0.448 & \textbf{0.463} \\
Near-tie 0.4 & Exact $\uparrow$ & 0.137 & 0.139 & \textbf{0.145} & \textbf{0.145} \\
Near-tie 0.4 & NDCG $\uparrow$ & 0.901 & \textbf{0.903} & \textbf{0.903} & \textbf{0.903} \\
Near-tie 0.4 & PairAcc(K4) $\uparrow$ & 0.679 & 0.681 & \textbf{0.682} & \textbf{0.682} \\
Near-tie 0.4 & PairAcc(bin) $\uparrow$ & 0.720 & 0.736 & 0.722 & \textbf{0.739} \\
\bottomrule
\end{tabular}
\end{table}

\section{Experimental Details}
\label{app:exp-details}

\subsection{Models, Data, and Evaluation Setup}
\label{app:add_exp_details}

\paragraph{Base Models.}
We report below the HuggingFace repositories of the base language models adopted throughout our experiments:
\begin{itemize}
    \item \textbf{Qwen3-0.6B (0.6B parameters):}\\
    \url{https://huggingface.co/Qwen/Qwen3-0.6B}

    \item \textbf{Qwen3-8B (8B parameters):}\\
    \url{https://huggingface.co/Qwen/Qwen3-8B}
\end{itemize}

\paragraph{Datasets.}
All training and evaluation data are obtained from publicly available preference datasets hosted on HuggingFace, see UltraFeedback~\citep{cui2023ultrafeedback}: \url{https://huggingface.co/datasets/openbmb/UltraFeedback}

\paragraph{Reward Model.}
For online reward-based optimization and offline reward evaluation, we use
\texttt{openbmb/Eurus-RM-7b} as the frozen reward model throughout our RLHF
experiments. The reward model is used \emph{as-is}, without any additional
fine-tuning in our work:
\url{https://huggingface.co/openbmb/Eurus-RM-7b}.

All RLHF experiments were conducted on a cluster of 8 NVIDIA RTX 4090 GPUs. The approximate training time is about 1 hour per model in the offline setting and about 3 hours per model in the online setting.

\subsection{Prompt Templates}
\label{app:prompt_templates}

We describe below the prompt template used in our experiments for offline evaluation on  the UltraFeedback dataset \citep{cui2023ultrafeedback}. 
Following the dataset authors, we adopt the official evaluation prompt template provided withUltraFeedback, which is also used during dataset construction.

In our setting, the prompt is designed to elicit detailed and constructive feedback for a given model response, along with an overall quality score. 
The evaluation focuses on multiple aspects of response quality, including helpfulness, truthfulness, honesty, and adherence to the given instruction.
We use this prompt template consistently across all methods to ensure a fair and controlled comparison.

\subsection{Assets and licenses.}
We use only publicly available datasets and models. 
Table~\ref{tab:assets_licenses} summarizes the main assets used in our experiments, together with their licenses or terms of use where applicable.

\begin{table}[h]
\centering
\small
\caption{Main existing assets used in the experiments.}
\label{tab:assets_licenses}
\begin{tabular}{lll}
\toprule
Asset & Usage & License / Terms \\
\midrule
UltraFeedback & Offline and online preference data & MIT License \\
Qwen3-0.6B / Qwen3-8B & Base language models & Apache License 2.0 \\
Eurus-RM-7B & Reward-model scoring and ranking & Apache License 2.0 \\
GPT-4 & LLM-as-a-judge evaluation & OpenAI API Terms of Use \\
Our released code & Reproduction of experiments &  MIT License \\
\bottomrule
\end{tabular}
\end{table}

\begin{table}[t]
\label{tab:ultrafeedback_prompt}
\centering
\renewcommand{\arraystretch}{1.3}
\setlength{\tabcolsep}{8pt}
\begin{tabular}{|p{4cm}|p{10cm}|}
\hline
\multicolumn{2}{|c|}{\textbf{Overall Score and Feedback Evaluation Prompt Template on UltraFeedback}} \\
\hline
\textbf{System Prompt:} 
& You are an AI assistant that helps people find information. \\
\hline
\textbf{User Prompt:} 
& Given my answer to an instruction, your role is to provide specific and constructive feedback for me. You should find the best way for me to learn from your feedback and improve my performance. \\

& You should consider multiple aspects of my answer, including helpfulness, truthfulness, honesty, and to what extent the answer follows instructions. \\

& \textbf{Instruction:} \\
& \{prompt\} \\

& \textbf{Answer:} \\
& \{answer\} \\

& Please act as a teacher and provide specific and constructive feedback. Besides describing the weaknesses of the answer, you should also provide specific suggestions to guide me toward understanding how to improve. \\

& Please note, however, that your suggestions should help me better complete the instructions, but you should not introduce new requirements that are not mentioned in the instructions. \\

& Your feedback should focus on enhancing my ability to think critically and respond accurately. However, never explicitly provide the reference answer, nor do polite phrases be required. \\

& Only respond with concise feedback in chat style. Finally, score the overall quality of the answer from 1 to 10, where 1 is the worst and 10 is the best. \\
\hline
\textbf{Format:} 
& \textbf{Feedback:} \\
& [Your feedback] \\

& \textbf{Overall Score:} \\
& [1--10] \\
\hline
\end{tabular}
\end{table}



\end{document}